\documentclass{article}

\addtocontents{toc}{\protect\setcounter{tocdepth}{-2}}

\usepackage[accepted]{icml2026}

\usepackage{tikz}
\usetikzlibrary{shapes.geometric, arrows, positioning, calc, fit, backgrounds}
\definecolor{expertgreen}{RGB}{220, 255, 220}
\definecolor{gateblue}{RGB}{220, 220, 255}
\definecolor{studentred}{RGB}{255, 220, 220}
\definecolor{routerorange}{RGB}{255, 240, 200}
\usepackage{pgfplots}
\usepgfplotslibrary{fillbetween}

\usepackage{amsfonts}
\usepackage{nicefrac}
\usepackage{microtype}
\usepackage{graphicx}
\usepackage{subcaption}
\usepackage{booktabs} 
\usepackage{hyperref}
\usepackage{url}

\renewcommand{\algorithmicrequire}{\textbf{Input:}}
\renewcommand{\algorithmicensure}{\textbf{Output:}}

\usepackage{amsmath}
\usepackage{amssymb}
\usepackage{mathtools}
\usepackage{amsthm}

\usepackage[capitalize,noabbrev]{cleveref}

\usepackage{dsfont}
\usepackage[mathscr]{euscript}
\usepackage{expl3}
\usepackage{xcolor}
\usepackage{colortbl}
\usepackage{thmtools}
\usepackage{thm-restate}
\usepackage{mathtools}
\usepackage{multirow}
\usepackage{wrapfig}
\usepackage{accents}

\usepackage{pifont}

\usepackage{MnSymbol}
\DeclareMathAlphabet\mathbb{U}{msb}{m}{n}
\usepackage{xpatch}

\def\Rset{\mathbb{R}}

\DeclareMathOperator*{\E}{\mathbb E}
\DeclareMathOperator*{\argmax}{argmax}
\DeclareMathOperator*{\argmin}{argmin}

\DeclareMathOperator{\supp}{supp}

\DeclareMathOperator{\proj}{\sfP} 
\DeclareMathOperator{\clip}{clip} 

\DeclarePairedDelimiter{\abs}{\lvert}{\rvert} 
\DeclarePairedDelimiter{\bracket}{[}{]}
\DeclarePairedDelimiter{\curl}{\{}{\}}
\DeclarePairedDelimiter{\paren}{(}{)}
\DeclarePairedDelimiter{\norm}{\|}{\|}

\ExplSyntaxOn
\tl_const:Nn \c_my_uc_alphabet_tl { ABCDEFGHIJKLMNOPQRSTUVWXYZ }
\tl_const:Nn \c_my_full_alphabet_tl { ABCDEFGHIJKLMNOPQRSTUVWXYZ abcdefghijklmnopqrstuvwxyz }

\tl_map_inline:Nn \c_my_uc_alphabet_tl
 { \cs_gset:cpn { c#1 } { \mathcal{#1} } }

\tl_map_inline:Nn \c_my_uc_alphabet_tl
 { \cs_gset:cpn { s#1 } { \mathscr{#1} } }

\tl_map_inline:Nn \c_my_full_alphabet_tl
 {
  \str_if_eq:nnTF { b#1 } { bf } 
    { } 
    {
      \str_if_eq:nnTF { sf#1 } { sf }
        { } 
        {
          \cs_gset:cpn { b#1 } { \ensuremath{\mathbf{#1}} } 
          \cs_gset:cpn { sf#1 } { \ensuremath{\mathsf{#1}} }
        }
    }
 }
\ExplSyntaxOff

\newcommand{\Rad}{\mathfrak R}
\newcommand{\bsigma}{{\boldsymbol \sigma}}

\newcommand{\bpi}{{\boldsymbol \pi}}

\newcommand{\h}{\widehat}
\newcommand{\ov}{\overline}
\newcommand{\wt}{\widetilde}
\newcommand{\e}{\epsilon}
\newcommand{\ignore}[1]{}

\newlength{\dhatheight}

\newcommand{\KL}{\sfD_{\mathrm{KL}}}

\newcommand{\JSD}{\sfD_{\mathrm{JSD}}}

\newcommand{\sGone}{\cG_{1}}

\hypersetup{
  breaklinks   = true, 
  colorlinks   = true, 
  urlcolor     = blue, 
  linkcolor    = blue, 
  citecolor   = blue 
}

\usepackage[toc, page, header]{appendix}
\theoremstyle{plain}
\newtheorem{theorem}{Theorem}[section]
\newtheorem{proposition}[theorem]{Proposition}
\newtheorem{lemma}[theorem]{Lemma}
\newtheorem{corollary}[theorem]{Corollary}
\theoremstyle{definition}

\newtheorem{assumption}[theorem]{Assumption}
\theoremstyle{remark}

\usepackage[disable,textsize=tiny]{todonotes}

\icmltitlerunning{A Theoretical Framework for Modular Learning of Robust Generative Models}

\begin{document}

\twocolumn[
  \icmltitle{A Theoretical Framework for Modular Learning of Robust Generative Models}

\begin{icmlauthorlist}
\icmlauthor{Corinna Cortes}{google}
\icmlauthor{Mehryar Mohri}{google,courant}
\icmlauthor{Yutao Zhong}{google}
\end{icmlauthorlist}

\icmlaffiliation{google}{Google Research, New York, NY;}
\icmlaffiliation{courant}{Courant Institute of Mathematical Sciences, New York, NY}

\icmlcorrespondingauthor{Corinna Cortes}{corinna@google.com}
\icmlcorrespondingauthor{Mehryar Mohri}{mohri@google.com}
\icmlcorrespondingauthor{Yutao Zhong}{yutaozhong@google.com}

\icmlkeywords{Generative Models, Optimization, Robustness, Modularity, Large Language Models, Game Theory, Generalization Bounds}

\vskip 0.3in
]

\printAffiliationsAndNotice{}

\setlength{\abovedisplayskip}{5pt plus 1pt minus 1pt}

\setlength{\textfloatsep}{10pt plus 1.0pt minus 2.0pt} 
\setlength{\floatsep}{10pt plus 1.0pt minus 2.0pt}     
\setlength{\intextsep}{10pt plus 1.0pt minus 2.0pt}    



\begin{abstract}
  Training large-scale generative models is resource-intensive and
  relies heavily on heuristic dataset weighting. We address two
  fundamental questions: Can we train Large Language Models (LLMs)
  modularly, combining small, domain-specific experts to match
  monolithic performance, and can we do so robustly for \emph{any} data
  mixture, eliminating heuristic tuning? We present a theoretical
  framework for \emph{modular} generative modeling where a set of
  pre-trained experts are combined via a gating mechanism. We define
  the space of normalized gating functions $\sGone$ and formulate the
  problem as a minimax game to find a single robust gate that
  minimizes divergence to the worst-case data mixture. We prove the
  existence of such a robust gate using Kakutani's fixed-point theorem
  and show that modularity acts as a strong regularizer, with
  generalization bounds scaling with the lightweight gate's
  complexity. Furthermore, we prove that this modular approach can
  theoretically outperform models retrained on aggregate data, with
  the gap characterized by the Jensen-Shannon Divergence. Finally, we
  introduce a scalable Stochastic Primal-Dual algorithm and a
  \emph{Structural Distillation} method for efficient
  inference. Empirical results on synthetic and real-world datasets
  confirm that our  modular architecture effectively mitigates gradient
  conflict and can robustly outperform monolithic baselines.
\end{abstract}

\section{Introduction}

Training large-scale generative models, such as Large Language Models
(LLMs), is notoriously expensive and often impractical to repeat for
every new dataset \citep{Brown2020, Hoffmann2022}. The computational
cost and environmental footprint of these dense models have raised
significant sustainability concerns \citep{Strubell2019,
  Schwartz2020}.  This monolithic paradigm faces two critical
challenges. First, \emph{sustainability and adaptability:} can we
train LLMs modularly, learning small, accurate models on individual
domains (e.g., math, coding) and combining them to match a giant
model? If so, training becomes dramatically cheaper and greener;
updates require training only a new module and the lightweight
combiner, avoiding catastrophic forgetting \citep{Kirkpatrick2017,
  Parisi2019Continual} and enabling the efficient reuse of pretrained
experts \citep{Pfeiffer2023}. In the future, privacy regulations and publisher paywalls could also restrict access to data domains, smaller models trained by the data owners could constitute the only viable path to data access. Second, \emph{robustness:} standard
training relies on heuristic importance weights across datasets
\citep{Gao2020, Touvron2023}, or static optimization targets
\citep{Xie2023DoReMi}, often failing when test distributions differ
from training assumptions \citep{Koh2021}. Can we build a modular LLM
that is accurate for \emph{any} mixture of datasets, eliminating
heuristic weighting entirely?

We provide an affirmative answer to both questions, offering the first
rigorous game-theoretic framework for robust modularity. Unlike
heuristic approaches like simple parameter averaging (Model Soups)
\citep{Wortsman2022}, task arithmetic \citep{Ilharco2023}, or standard
Mixture of Experts which rely on auxiliary load-balancing losses
\citep{Shazeer2017, Fedus2021Switch}, we seek a single system that is
robust to \emph{any} arbitrary mixture of the underlying source
distributions. We propose a \emph{gated solution},
$\pi_g(x) = \sum_k g(x, k) \h \pi_k(x)$, where an adaptive gate
dynamically reweights frozen experts. Our goal is to find a robust
gate $g^*$ that minimizes the divergence to the worst-case data
mixture.

\textbf{Contributions.} Our main contributions are: (1)
\emph{Theoretical Framework:} We define the normalized gate space
$\sGone$ and formulate robustness as a minimax game. We prove the
existence of a robust gate using Kakutani's fixed-point theorem,
establishing a stable upper bound on the worst-case risk
(\cref{th:robust-existence}).
(2) \emph{Generalization Analysis:} We derive bounds showing that
sample complexity scales with the lightweight gate complexity and
the \emph{expert coincidence norm} $C_\Pi$, rather than the massive
expert parameters.
(3) \emph{Comparison with Retraining:} We prove an
information-theoretic bound showing our modular approach can
outperform monolithic retraining, with the performance gap
characterized by the Jensen-Shannon Divergence
(\cref{thm:jsd_gap}).
(4) \emph{Scalable Algorithm \& Inference:} We introduce a
Stochastic Primal-Dual algorithm for the constrained game and a
\emph{Structural Distillation} method \citep{Hinton2015} to map the
non-causal gate to a causal router for efficient autoregressive
inference.
(5) \emph{Empirical Validation:} We demonstrate on synthetic
benchmarks and real-world datasets (Wikipedia, Code, FineWeb) that
our approach mitigates gradient conflict \citep{Yu2020},
outperforming baselines in high-interference regimes.

\textbf{Organization.} \cref{sec:setup} formalizes the
problem. \cref{sec:theoretical-analysis} presents existence proofs and
comparisons. \cref{sec:optimization-robust} details the optimization
algorithm. \cref{sec:sampling,sec:distillation} addresses inference and
distillation. \cref{sec:experiments} presents empirical results.

\textbf{Related Work.} Our framework bridges theoretical
Multiple-Source Domain Adaptation (MSA)
\citep{MansourMohriRostamizadeh2008} with modular architectures. While
rooted in MSA, we diverge from recent value-based routing
\citep{DannMansourMarinovMohri2025} by addressing the
\emph{generative} setting, which introduces the unique mathematical
challenge of enforcing global normalization ($Z_g=1$). Structurally,
our approach differs from Mixture of Experts \citep{Fedus2021Switch}
by operating on \emph{frozen} experts; from static merging methods
like Model Soups \citep{Wortsman2022} by using dynamic,
input-dependent gating; and from learning to defer
\citep{CortesDeSalvoMohri2016} by optimizing for soft probabilistic
mixing rather than hard selection. An extended discussion is provided
in Appendix~\ref{app:related-work}.

\section{Setup \& Problem Formulation}
\label{sec:setup}

Let $D_k$, $k \in [1, p]$, denote $p$ datasets with empirical
distributions $\h \sfp_k$. We assume access to pre-trained models
$\h \pi_k$ approximating each distribution with guarantees
$\KL(\h \sfp_k \parallel \h \pi_k) \leq \e_k$.
We consider \emph{gated
  solutions} $\pi_g(x) = \sum_{k = 1}^p g(x, k) \h \pi_k(x)$, where
$g(x, \cdot) \in \Delta$ is a gating function (see
Figure~\ref{fig:modular_architecture}). Our goal is to approximate any
mixture $\h \sfp_\lambda = \sum_{k = 1}^p \lambda_k \h \sfp_k$ for
$\lambda \in \Delta$.
\ignore{
\setlength{\intextsep}{0pt}
\setlength{\columnsep}{5pt}
\begin{wrapfigure}{r}{0.5\columnwidth}
  \centering
  \resizebox{0.48\columnwidth}{!}{
    \begin{tikzpicture}[
    node distance=1.5cm,
    block/.style={rectangle, draw, fill=blue!10, text width=2.5cm, text centered, rounded corners, minimum height=1.2cm},
    expert/.style={rectangle, draw, fill=green!10, text width=2.5cm, text centered, rounded corners, minimum height=1cm},
    sum/.style={circle, draw, fill=blue!20, node distance=1cm, minimum size=0.8cm},
    input/.style={coordinate},
    output/.style={coordinate},
    label/.style={text width=2cm, align=center, font=\small}
]

\node [input] (input) {};
\node [left=0.0cm of input, align=center] {\textbf{Input} \\ $x$};

\node [block, right=1.5cm of input, yshift=1.5cm] (gate) {\textbf{Adaptive Gate} \\ $g(x, \cdot)$};

\node [expert, right=1.5cm of input, yshift=-0.5cm] (expert1) {Expert $\h \pi_1$};
\node [below=0.3cm of expert1] (dots) {$\vdots$};
\node [expert, below=0.3cm of dots] (expertp) {Expert $\h \pi_p$};

\draw[dashed, thick, color=gray] ($(expert1.north west)+(-0.2,0.2)$) rectangle ($(expertp.south east)+(0.2,-0.2)$);
\node [anchor=south east, color=gray, font=\small\itshape] at ($(expert1.north east)+(0,0.2)$) {Frozen / Pre-trained};

\node [sum, right=2cm of dots] (sigma) {$\sum$};

\node [output, right=1.5cm of sigma] (output) {};
\node [above=0.0cm of output, align=center] {\textbf{Robust Mixture} \\ $\pi_g(x)$};

\draw [->, thick] (input) -- (gate.west);
\draw [->, thick] (input) -- (expert1.west);
\draw [->, thick] (input) -- (expertp.west);

\draw [->, thick] (gate.east) -| node[right=.0cm] {Weights $g(x,k)$} (sigma.north);
\draw [->, thick] (expert1.east) -- (sigma.west);
\draw [->, thick] (expertp.east) -- (sigma.west);

\draw [->, thick] (sigma.east) -- (output);

\end{tikzpicture}
  }
  \vskip -5pt
  \caption{Conceptual Architecture of the Modular Gated Solution.}
  \label{fig:modular_architecture}
  \vskip -10pt
\end{wrapfigure}
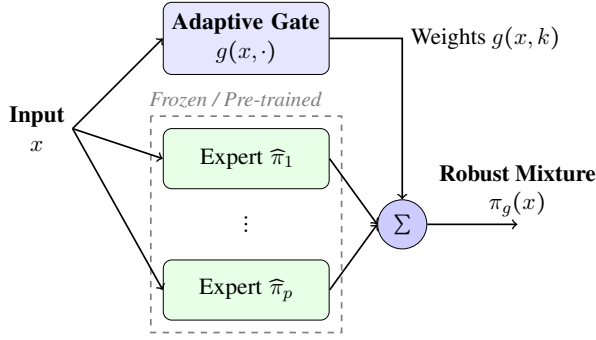
}

\begin{figure}[t]
\centering
\scalebox{.6}{
\begin{tikzpicture}[
    node distance=1.5cm,
    block/.style={rectangle, draw, fill=blue!10, text width=2.5cm, text centered, rounded corners, minimum height=1.2cm},
    expert/.style={rectangle, draw, fill=green!10, text width=2.5cm, text centered, rounded corners, minimum height=1cm},
    sum/.style={circle, draw, fill=blue!20, node distance=1cm, minimum size=0.8cm},
    input/.style={coordinate},
    output/.style={coordinate},
    label/.style={text width=2cm, align=center, font=\small}
]

\node [input] (input) {};
\node [left=0.0cm of input, align=center] {\textbf{Input} \\ $x$};

\node [block, right=1.5cm of input, yshift=1.5cm] (gate) {\textbf{Adaptive Gate} \\ $g(x, \cdot)$};

\node [expert, right=1.5cm of input, yshift=-0.5cm] (expert1) {Expert $\h \pi_1$};
\node [below=0.3cm of expert1] (dots) {$\vdots$};
\node [expert, below=0.3cm of dots] (expertp) {Expert $\h \pi_p$};

\draw[dashed, thick, color=gray] ($(expert1.north west)+(-0.2,0.2)$) rectangle ($(expertp.south east)+(0.2,-0.2)$);
\node [anchor=south east, color=gray, font=\small\itshape] at ($(expert1.north east)+(0,0.2)$) {Frozen / Pre-trained};

\node [sum, right=2cm of dots] (sigma) {$\sum$};

\node [output, right=1.5cm of sigma] (output) {};
\node [above=0.0cm of output, align=center] {\textbf{Robust Mixture} \\ $\pi_g(x)$};

\draw [->, thick] (input) -- (gate.west);
\draw [->, thick] (input) -- (expert1.west);
\draw [->, thick] (input) -- (expertp.west);

\draw [->, thick] (gate.east) -| node[right=.0cm] {Weights $g(x,k)$} (sigma.north);
\draw [->, thick] (expert1.east) -- (sigma.west);
\draw [->, thick] (expertp.east) -- (sigma.west);

\draw [->, thick] (sigma.east) -- (output);

\end{tikzpicture}
}
\caption{Conceptual Architecture of the Modular Gated Solution.}
\label{fig:modular_architecture}
\end{figure}

We define the space of \emph{normalized} gating functions $\sGone$ as
the subset of gates $g \in \prod_{x \in \sX_0} \Delta([1, p])$, that
is $g(x, k) \ge 0$ and $\sum_k g(x, k) = 1$ for all $x \in \sX_0$,
satisfying the global normalization constraint:
$\sGone = \curl*{g \colon Z_g = \sum_{x \in \sX_0} \sum_{k = 1}^p g(x,
  k) \h \pi_k(x) = 1}$, where $\sX_0 = \bigcup_k \supp(\h
\sfp_k)$ and is hence finite. For any $g \in \sGone$, the resulting model $\pi_g$ is a
valid probability distribution.
\begin{restatable}{lemma}{GoneProperties}
\label{lemma:Gone-properties}
The family $\sGone$ is non-empty, compact, and convex.
\end{restatable}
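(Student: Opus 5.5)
The plan is to identify $\sGone$ with a subset of a finite-dimensional Euclidean space. Since $\sX_0 = \bigcup_k \supp(\h \sfp_k)$ is finite, a gate $g \in \prod_{x \in \sX_0} \Delta([1,p])$ is a vector in $\Rset^{\sX_0 \times [1,p]}$. Under this identification, the set $\cD = \prod_{x \in \sX_0} \Delta([1,p])$ is a finite product of simplices. The map $g \mapsto Z_g = \sum_{x \in \sX_0} \sum_{k=1}^p g(x,k)\, \h \pi_k(x)$ is linear in $g$, with fixed coefficients $\h \pi_k(x)$. Hence $\sGone = \cD \cap \curl{g \colon Z_g = 1}$ is the intersection of a polytope with an affine hyperplane. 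All three properties will be read off from this description.

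\emph{Convexity.} Take $g, g' \in \sGone$ and $t \in [0,1]$. Each $\Delta([1,p])$ is convex, so $t g(x,\cdot) + (1-t) g'(x,\cdot) \in \Delta([1,p])$ for every $x$. By linearity, $Z_{tg + (1-t)g'} = t Z_g + (1-t) Z_{g'} = 1$.

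\emph{Compactness.} $\cD$ is a finite product of closed, bounded simplices, so it is compact. The level set $\curl{g \colon Z_g = 1}$ is the preimage of the closed set $\curl{1}$ under a continuous linear map, so it is closed. A closed subset of a compact set is compact, and this is Heine--Borel in finite dimension.

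\emph{Non-emptiness.} This is the only step that needs care. It requires $\h \pi_k(\sX_0) = \sum_{x \in \sX_0} \h \pi_k(x) = 1$, that is, each expert must be a distribution supported on $\sX_0$. I would read this as implicit in the setup, since the gated model $\pi_g$ is meant to be a distribution over $\sX_0$.

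Under that reading, the constant gate $g(x,k) = 1/p$ works: it lies in $\cD$ and gives $Z_g = \frac1p \sum_k \h \pi_k(\sX_0) = 1$. Equally, any gate with $g(x,\cdot) = \lambda$ for a fixed $\lambda \in \Delta$ gives $Z_g = \sum_k \lambda_k \h \pi_k(\sX_0) = 1$.

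If the experts are allowed mass outside $\sX_0$, I would fall back on an intermediate-value argument. $Z_g$ ranges over $\bigl[\sum_x \min_k \h \pi_k(x),\ \sum_x \max_k \h \pi_k(x)\bigr]$ as $g$ varies over the connected set $\cD$. Non-emptiness then holds exactly when $1$ lies in this interval. So I would state the support assumption explicitly as the hypothesis under which the lemma holds.
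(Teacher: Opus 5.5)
Your proposal is correct and follows the paper's proof essentially step for step. The paper also exhibits a constant gate $g_\lambda(x,k)=\lambda_k$ for non-emptiness, gets convexity from the product of simplices plus linearity of $g \mapsto Z_g$, and gets compactness as a closed level set inside a compact product (invoking Tychonoff where you use Heine--Borel in finite dimension). Your explicit statement of the hypothesis $\sum_{x \in \sX_0} \h\pi_k(x) = 1$ is a real refinement: the paper needs it for the constant gate to satisfy $Z_g = 1$ but uses it only tacitly, and your interval criterion correctly characterizes non-emptiness when that hypothesis fails.
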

See Appendix~\ref{app:Gone-properties} for the proof.  We seek a
single gate $g^* \in \sGone$ robust to the \emph{worst-case} mixture
$\lambda \in \Delta$. We formulate this as a minimax game:
\[
\min_{g \in \sGone} \max_{\lambda \in \Delta([1, p])} \KL(\h \sfp_\lambda \parallel \pi_g).
\]
We use the relative entropy rather than cross-entropy because the
entropy term $H(\h \sfp_\lambda)$ varies with the adversarial choice
of $\lambda$. Minimizing the worst-case cross-entropy
$\max_\lambda \E_{\h \sfp_\lambda}[-\log \pi]$ is thus not equivalent
to minimizing the divergence
$\max_\lambda \KL(\h \sfp_\lambda \| \pi)$, which ensures the model
approximates the distribution $\h \sfp_\lambda$ itself.

\section{Theoretical Analysis}
\label{sec:theoretical-analysis}

We now establish the existence of a robust gate and quantify its
advantages over monolithic retraining.

\subsection{Fixed Mixture \& Robust Existence}
\label{sec:geometric_interpretation}

First, we consider a simple non-adaptive baseline. If we fix the
mixture weights $\lambda$, a constant gate
$g_\lambda(x, k) = \lambda_k$ (which belongs to $\sGone$) achieves an
average error bound.

\begin{restatable}[Fixed Mixture Guarantee]{proposition}{FixedMixture}
\label{prop:fixed-mixture}
For any fixed $\lambda \in \Delta$, the constant gate
$\pi_\lambda = \sum_k \lambda_k \h \pi_k$ satisfies
$\KL(\h \sfp_\lambda \parallel \pi_\lambda) \leq \sum_{k = 1}^p
\lambda_k \e_k \le \max_k \e_k$.
\end{restatable}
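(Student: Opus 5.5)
The plan is to use the joint convexity of the relative entropy. The pair $(\h \sfp_\lambda, \pi_\lambda)$ is the $\lambda$-weighted mixture of the pairs $(\h \sfp_k, \h \pi_k)$, since
\[
\h \sfp_\lambda = \sum_k \lambda_k \h \sfp_k, \qquad \pi_\lambda = \sum_k \lambda_k \h \pi_k .
\]

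First, I will note that the constant gate $g_\lambda(x,k) = \lambda_k$ lies in $\sGone$. Its normalization is
\[
Z_{g_\lambda} = \sum_k \lambda_k \sum_{x \in \sX_0} \h \pi_k(x) .
\]
If each $\h \pi_k$ is a distribution supported on $\sX_0$, then $Z_{g_\lambda} = 1$ and $\pi_\lambda$ is a valid distribution. If the experts put mass outside $\sX_0$, this does not affect the divergence bound, because the KL sum runs only over $\supp(\h \sfp_\lambda) \subseteq \sX_0$. In that case I would interpret $\pi_\lambda$ as the full mixture $\sum_k \lambda_k \h \pi_k$, which is the object the statement names.

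Second, the key step is the log-sum inequality, applied pointwise. For each $x$,
\[
\Big(\sum_k \lambda_k \h \sfp_k(x)\Big) \log \frac{\sum_k \lambda_k \h \sfp_k(x)}{\sum_k \lambda_k \h \pi_k(x)} \le \sum_k \lambda_k \h \sfp_k(x) \log \frac{\h \sfp_k(x)}{\h \pi_k(x)} .
\]
This uses the convention $0 \log 0 = 0$. If some $\h \pi_k(x) = 0$ while $\h \sfp_k(x) > 0$, then $\e_k = \infty$ and the bound is trivial.

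Third, I will sum this inequality over $x$. The result is
\[
\KL(\h \sfp_\lambda \parallel \pi_\lambda) \le \sum_k \lambda_k \KL(\h \sfp_k \parallel \h \pi_k) \le \sum_k \lambda_k \e_k .
\]
Finally, $\sum_k \lambda_k \e_k \le \max_k \e_k$ because $\lambda \in \Delta$ makes the left side a convex combination of the $\e_k$.

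The only real obstacle is the normalization and support bookkeeping in the first step. The inequality itself is a standard use of the convexity of $(a,b) \mapsto a \log(a/b)$.
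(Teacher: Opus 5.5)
Your proof is correct and follows the paper's argument. The paper invokes joint convexity of $\KL$ on the pairs $(\h \sfp_k, \h \pi_k)$ directly, and your pointwise log-sum inequality is just the standard proof of that joint convexity; your remarks on the constant gate lying in $\sGone$ and on the $\e_k = \infty$ edge case are harmless additions.
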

See Appendix~\ref{app:fixed-mixture} for the proof. This proposition
guarantees average performance for a known mixture $\lambda$ using a
simple non-adaptive gate. (We derive the exact, though complex,
optimal gate for a fixed $\lambda$ in
Appendix~\ref{app:optimal-fixed-mixture}). However, any static weighting scheme is fundamentally limited by a capacity lower bound of
$\log(\sum_k e^{\e_k})$ for disjoint domains, as formalized below.

\begin{restatable}[Fundamental Capacity Lower Bound for Static Gating]{theorem}{LowerBound}
\label{thm:lower_bound}
Assume the datasets $D_k$ have mutually disjoint supports.  Consider
the class of static gating functions
$\sG_{\text{const}} \subset \sGone$, defined as gates where
$g(x, k) = w_k$ is independent of $x$ for all $k$.  For any static
gate $\bw \in \sG_{\text{const}}$, the worst-case Kullback-Leibler
divergence is lower-bounded by:
\begin{equation*}
  \max_{k \in \{1,\dots,p\}} \KL(\h\sfp_k \| \pi_\bw)
  \ge \log \left( \sum_{j=1}^p e^{\e_j} \right).
\end{equation*}
\end{restatable}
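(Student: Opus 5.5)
The plan is to reduce the worst-case divergence over disjoint domains to a one-dimensional optimization over the simplex. Fix a static gate $\bw \in \sG_{\text{const}}$, so $\pi_\bw(x) = \sum_j w_j \h\pi_j(x)$ with $\bw \in \Delta$. Let $S_k = \supp(\h\sfp_k)$; by assumption these sets are pairwise disjoint. The key modeling step is to read the bound $\e_k$ as tight, $\e_k = \KL(\h\sfp_k\parallel\h\pi_k)$, and to use the disjoint-domain regime in which expert $\h\pi_j$ places no mass on the foreign domains $S_k$, $k\neq j$ (each expert is trained on its own domain). Under this reading, for every $x \in S_k$ we have $\pi_\bw(x) = w_k \h\pi_k(x)$.

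First I decompose the divergence on each domain:
\[
\KL(\h\sfp_k\parallel \pi_\bw) = \sum_{x\in S_k}\h\sfp_k(x)\log\frac{\h\sfp_k(x)}{w_k\h\pi_k(x)} = \e_k - \log w_k .
\]
This shows that a static gate pays an additive "dilution" cost $-\log w_k$ on domain $k$. The worst-case risk is therefore $\max_k(\e_k - \log w_k)$, and it remains to minimize this quantity over $\bw\in\Delta$.

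Next comes a short contradiction argument. Set $L = \log\sum_j e^{\e_j}$ and suppose $\e_k - \log w_k < L$ for every $k$. Then $w_k > e^{\e_k - L}$ for all $k$. Summing over $k$ gives $1 = \sum_k w_k > e^{-L}\sum_k e^{\e_k} = 1$, which is a contradiction. Hence $\max_k \KL(\h\sfp_k\parallel\pi_\bw)\ge L$. I would also note that the bound is tight: the choice $w_k = e^{\e_k}/\sum_j e^{\e_j}$ equalizes all the terms at $L$. Normalization is automatic here, since $Z_\bw = \sum_k w_k \h\pi_k(S_k) = 1$, so this gate lies in $\sGone$.

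The main obstacle is the identity $\pi_\bw(x) = w_k\h\pi_k(x)$ on $S_k$. If the experts leak mass onto one another's domains, then $\pi_\bw(x)\ge w_k\h\pi_k(x)$, and the domain-wise expression only becomes an upper bound. In that case the lower bound can fail unless $\e_k$ is redefined. I would handle this by stating explicitly that, in the disjoint setting, each $\h\pi_k$ is supported on $S_k$ (equivalently, that the restriction of $\h\pi_j$ to $S_k$ is negligible for $j\ne k$) and that $\e_k$ denotes the exact divergence. An alternative is to define $\e_k$ as $\KL(\h\sfp_k\parallel \pi_\bw/w_k)$ restricted to $S_k$, which makes the same argument go through verbatim.
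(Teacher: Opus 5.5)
Your proposal is correct and follows the paper's proof essentially step for step: the paper likewise writes $\pi_\bw(x) = w_k \h\pi_k(x)$ on $\supp(\h\sfp_k)$, obtains $\KL(\h\sfp_k \parallel \pi_\bw) = \e_k - \log w_k$, and sums $w_k \ge e^{\e_k - \delta}$ over $k$ to get $\delta \ge \log \sum_j e^{\e_j}$, which is your contradiction argument in direct form. The two hypotheses you make explicit are that each expert puts no mass on the other domains and that $\e_k$ is the exact divergence rather than an upper bound. The paper uses both silently, asserting the first from disjointness of the datasets and writing $\e_k$ for the exact KL, so your caveats are a fair sharpening rather than a change of route.
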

\begin{proof}[Proof Sketch]
  By the global normalization constraint, any static weight vector $\bw$ must lie on the simplex. Because the supports are mutually disjoint, the mixture density simplifies exactly to $\pi_\bw(x) = w_k \h \pi_k(x)$ on the support of task $k$. Evaluating the KL divergence yields $\KL(\h\sfp_k \| \pi_\bw) = \e_k - \log w_k$. Bounding this by the worst-case risk $\delta$ implies $w_k \ge e^{\e_k - \delta}$. Summing over all $p$ experts gives $1 \ge e^{-\delta} \sum_k e^{\e_k}$, which yields the stated bound. See Appendix~\ref{app:capacity_lower_bound} for full details.
\end{proof}

To overcome this lower bound barrier and achieve robustness
to \emph{unknown} $\lambda$, we require an input-dependent gate. 

Next, we show the existence of a gate model with a favorable guarantee
for any target mixture $\lambda$. We will use the \emph{linearized
  game}, which is defined by the payoff
$\wt L(\lambda, g) = \sum_{k=1}^p \lambda_k \KL(\h \sfp_k \parallel
\pi_g)$. Our analysis is presented in terms of the
\emph{Jensen-Shannon Divergence} (JSD), which is a measure of
diversity. For a mixture
$\h \sfp_\lambda = \sum_k \lambda_k \h \sfp_k$, JSD is defined as the
average KL divergence from each source to the mixture:
\[
  \JSD^\lambda(\curl*{\h \sfp_k})
  = \sum_{k = 1}^p \lambda_k \KL(\h \sfp_k \parallel \h \sfp_\lambda).
\]
JSD is non-negative and upper bounded by the Shannon entropy
$H(\lambda) = -\sum_{k = 1}^p \lambda_k \log \lambda_k$.

\begin{restatable}[Robust Existence]{theorem}{RobustExistence}
\label{th:robust-existence}
The linearized modular game admits a saddle point
$(\lambda^*, g^*) \in \Delta([1, p]) \times \sGone$. For any mixture
$\lambda \in \Delta$, the robust gate $g^*$ satisfies for \emph{any}
$\lambda \in \Delta([1, p])$:
\begin{equation*}
  \KL(\h \sfp_{\lambda} \parallel \pi_{g^*})
  \le \log \bracket*{\sum_{k=1}^p e^{\e_k}}
  - H^{\lambda^*}_{\sigma}(K|X) - \JSD^{\lambda}(\curl*{\h\sfp_k}),
\end{equation*}
where
$H^{\lambda^*}_{\sigma}(K|X) = \sum_{k = 1}^p \lambda_k^* \E_{x \sim
  \h\sfp_k}\bracket*{-\log \frac{\sigma_k \h
    \pi_k(x)}{\pi_\sigma(x)}}$ is the target-weighted conditional
entropy of the expert assignment under the robust constant gate
$\pi_{\sigma} = \sum_{k=1}^p \sigma_k \h \pi_k$ defined by the
softmax weights $\sigma_k = e^{\e_k}/\sum_{j=1}^p e^{\e_j}$.
\end{restatable}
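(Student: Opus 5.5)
The plan is to obtain the saddle point from a minimax theorem, and then get the bound from three ingredients: the saddle-point inequalities, an exact decomposition of the mixture KL into the linearized payoff minus the JSD, and evaluation of the linearized payoff at the constant softmax gate $\sigma$.

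\textbf{Step 1 (existence of the saddle point).} The payoff $\wt L(\lambda, g) = \sum_k \lambda_k \KL(\h \sfp_k \parallel \pi_g)$ has the required structure.
\begin{itemize}
\item It is linear, hence concave, in $\lambda$ on the compact convex simplex.
\item It is convex in $g$. The map $g \mapsto \pi_g$ is linear, and $\KL(\h\sfp_k\parallel\cdot)$ is convex in its second argument.
\item By \cref{lemma:Gone-properties}, $\sGone$ is nonempty, compact and convex.
\end{itemize}
I would apply Kakutani's fixed-point theorem to the best-response correspondence $(\lambda, g) \mapsto \argmax_{\lambda'} \wt L(\lambda', g) \times \argmin_{g'} \wt L(\lambda, g')$; Sion's theorem plus compactness would serve equally well. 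This correspondence has nonempty convex values by linearity and convexity.

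The main obstacle is continuity. $\wt L$ can be $+\infty$ wherever $\pi_g(x)=0$ for some $x \in \supp(\h\sfp_k)$, so upper hemicontinuity is not automatic. I would handle this as follows.
\begin{itemize}
\item Each $\h \pi_k$ must be positive on $\supp(\h\sfp_k)$, since otherwise $\e_k$ is infinite.
\item Restrict $g$ to the compact convex subset of $\sGone$ on which $\max_\lambda \wt L(\lambda,g) \le M$, for some $M$ exceeding the value attained by the constant gate $\sigma$. This subset is nonempty by Step 3.
\item On this sublevel set, $\pi_g(x)$ is bounded below on $\sX_0$ wherever it is needed, so $\wt L$ is continuous there.
\end{itemize}
The alternative is to perturb $g$ toward $\sigma$ by a factor $\eta$ and let $\eta\to 0$, using lower semicontinuity of KL. Either route yields $(\lambda^*, g^*)$ with
\[
\wt L(\lambda, g^*) \le \wt L(\lambda^*, g^*) \le \wt L(\lambda^*, g) \quad \text{for all } \lambda \in \Delta,\ g \in \sGone.
\]

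\textbf{Step 2 (JSD decomposition).} For any distribution $\pi$, expanding $\log(\h\sfp_k/\pi) = \log(\h\sfp_k/\h\sfp_\lambda) + \log(\h\sfp_\lambda/\pi)$ and averaging with weights $\lambda_k$ gives the exact identity
\[
\sum_k \lambda_k \KL(\h\sfp_k\parallel \pi) = \JSD^\lambda(\curl*{\h\sfp_k}) + \KL(\h\sfp_\lambda \parallel \pi).
\]
Applying it with $\pi = \pi_{g^*}$ and then the left saddle inequality gives
\[
\KL(\h\sfp_\lambda\parallel\pi_{g^*}) = \wt L(\lambda,g^*) - \JSD^\lambda \le \wt L(\lambda^*,g^*) - \JSD^\lambda .
\]

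\textbf{Step 3 (bounding the game value with the gate $\sigma$).} The constant gate $g(x,k)=\sigma_k$ lies in $\sGone$, so the right saddle inequality gives $\wt L(\lambda^*,g^*) \le \wt L(\lambda^*,\sigma)$. For each $k$ I would use the decomposition
\[
\log\frac{\h\sfp_k}{\pi_\sigma} = \log\frac{\h\sfp_k}{\h\pi_k} - \log\sigma_k + \log\frac{\sigma_k\h\pi_k}{\pi_\sigma}.
\]
Taking expectations under $\h\sfp_k$ gives
\[
\KL(\h\sfp_k\parallel\pi_\sigma) \le \e_k - \log\sigma_k - \E_{\h\sfp_k}\bracket*{-\log\frac{\sigma_k\h\pi_k}{\pi_\sigma}}.
\]
Since $-\log\sigma_k = -\e_k + \log\sum_j e^{\e_j}$, the $\e_k$ terms cancel. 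Averaging with weights $\lambda^*_k$ then yields
\[
\wt L(\lambda^*,\sigma) \le \log\sum_j e^{\e_j} - H^{\lambda^*}_\sigma(K|X).
\]
Combining this with Step 2 gives the claimed bound.
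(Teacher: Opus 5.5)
Your proposal is correct and follows the paper's proof step for step: Kakutani on the best-response correspondence of $\wt L$, the exact identity $\KL(\h\sfp_\lambda\parallel\pi_{g^*}) = \wt L(\lambda,g^*) - \JSD^\lambda$ combined with the saddle inequality, and the softmax constant gate $\sigma$ as a witness so that the $\e_k$ terms cancel. Your use of $\KL(\h\sfp_k\parallel\h\pi_k)\le\e_k$ is in fact more accurate than the equality the paper writes.

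Your sublevel-set truncation also repairs a point the paper skips: the paper simply asserts that $\wt L$ is continuous, although $\KL(\h\sfp_k\parallel\pi_g)$ can be $+\infty$ on $\sGone$. The one step you leave implicit is lifting the saddle point from $\Delta$ times your sublevel set to $\Delta\times\sGone$. It follows from $\max_\lambda \wt L(\lambda,g^*)\le\wt L(\lambda^*,\sigma)<M$ and $\pi_{(1-t)g^*+tg}\ge(1-t)\pi_{g^*}$. The bound itself does not need this lift, since it only uses that $\sigma$ lies in the sublevel set.
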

\begin{proof}[Proof Sketch]
  The proof relies on constructing a linearized auxiliary game to
  satisfy Kakutani's fixed-point theorem conditions; See
  Appendix~\ref{app:robust-existence} for full details.
\end{proof}
Here, $H^{\lambda^*}_{\sigma}(K|X)$ can be viewed as the \emph{overlap
  gain} and $\JSD^{\lambda}$ as the \emph{diversity gain}.
The upper bound
($V^* \le \text{LSE} - \text{Overlap} - \text{Diversity}$) reveals how
the robust gate leverages task geometry in three limiting regimes (Appendix~\ref{app:geometric_interpretation}):

\textbf{Case 1: The Specialization Limit (Disjoint Experts).}
When supports are mutually disjoint, overlap vanishes
($H^{\lambda^*}_\sigma = 0$) and diversity is maximal
($\JSD^{\lambda} = H(\lambda)$). For balanced tasks, this
diversity gain cancels the capacity cost ($\log p$), guaranteeing
minimax robustness: $V^* \le \max_k \e_k$.

\textbf{Case 2: The Redundancy Limit (Identical Experts).}
When experts are identical with error $\e$, diversity vanishes
($\JSD=0$) but overlap is maximal ($H(\sigma) = \log p$). This overlap
gain exactly refunds the capacity cost ($\log(p e^\e) = \e + \log p$),
recovering the single-expert performance: $V^* \le \e$.

\textbf{Case 3: The Ensemble Mechanism (High Overlap).}
For overlapping experts with distinct errors, the overlap gain becomes
the entropy of the robust weights $H(\sigma)$. Substituting
$H(\sigma) = \log Z - \sum_k \sigma_k \e_k$ into the bound proves an
exact cancellation of the capacity cost (LogSumExp). The bound
collapses to the \emph{weighted average error}
$V^* \le \sum \sigma_k \e_k$, effectively acting as a static ensemble
to minimize risk.
  
\textbf{Prior Knowledge on Mixture Weights.}
In certain applications, we may have prior knowledge suggesting that the
mixture weights encountered at test time will be restricted to a
convex subset $\Lambda \subset \Delta([1, p])$. This knowledge can be
leveraged to derive a more specialized solution with significantly
stronger performance guarantees. Appendix~\ref{sec:prior-knowledge}
gives a detailed analysis of this setting and its benefits.

\textbf{The Least-Favorable Mixture.}
The optimization also yields the adversary's optimal strategy
$\lambda^*$ for the linearized game $\wt L$. This vector identifies
the specific mixture weights that maximize the weighted expert
loss. In practical scenarios where a modular architecture is
infeasible (e.g., extreme latency constraints), $\lambda^*$ provides a
statistically principled target distribution for training a single
static model, eliminating the need for heuristic weighting (see
Appendix~\ref{app:least-favorable}).

\subsection{Comparison with Monolithic Baselines: Interference
  vs. Decoupling}
\label{subsec:comparison_monolithic}

We rigorously contrast the proposed Modular Robustness with standard
Monolithic training by analyzing how each architecture interacts with
the geometry of the task distributions.  See
Appendix~\ref{app:theoretical_comparison} for a more detailed
comparison.

\textbf{The Monolithic Barrier: Diversity as Interference.}
Consider a monolithic model $\pi_{\text{mono}}$ trained to minimize
the loss on the mixture
$\h\sfp_\lambda = \sum_{k=1}^p \lambda_k \h\sfp_k$. The performance on
individual tasks is governed by the \emph{Jensen-Shannon Decomposition
  Identity}. For any model $\pi$, the average task risk decomposes
exactly into two terms:
\begin{equation*}
  \underbrace{\sum_{k=1}^p \lambda_k \KL(\h\sfp_k \parallel \pi)}_{\text{Average Task Risk}}
  = \underbrace{\KL(\h\sfp_\lambda \parallel \pi)}_{\text{Mixture Fit}}
  + \underbrace{\JSD^\lambda(\h\sfp_1, \dots, \h\sfp_p)}_{\text{Interference}}.
\end{equation*}
This equality reveals a fundamental limitation. Even in the limit of
infinite capacity where the model fits the global mixture perfectly
($\KL(\h\sfp_\lambda \parallel \pi) \to 0$), the average risk is
strictly determined by the task diversity, measured by the
Jensen-Shannon Divergence ($\JSD$).  We formalize this in Theorem~\ref{thm:jsd_gap} (Appendix~\ref{app:optimization_gap}). Since the average task risk is
lower-bounded by $\JSD^\lambda$, the worst-case domain loss
$\max_k \KL(\h\sfp_k \| \pi)$ is also necessarily lower-bounded by
this quantity. For a monolithic architecture, diversity manifests as
\emph{Geometric Interference}: the model is forced to collapse
distinct distributions into a single centroid. Consequently,
performance is dominated by the geometry of the problem rather than
the difficulty of the tasks; even if tasks are trivial to solve
individually ($\e_k \approx 0$), the model fails if they are distinct
($\JSD \gg 0$).

\textbf{The Modular Advantage: Diversity as Separability.}
In contrast, the modular gating network effectively inverts this
relationship. The worst-case risk of the robust gate is bounded by
(Theorem~\ref{th:robust-existence}):
\[
  \text{Risk}_{\text{mod}}
  \le \underbrace{\log \paren*{\sum e^{\e_k}}}_{\text{Capacity Cost}}
  - \underbrace{\JSD^{\lambda}(\h\sfp_1, \dots, \h\sfp_p)
  }_{\text{Diversity Gain}}
  - \underbrace{H^{\lambda^*}_{\sigma}(K|X)}_{\text{Overlap}}.
\]
Here, the divergence term appears with a \emph{negative} sign. For the
modular system, task diversity acts as a \emph{Geometric Gain}.  In
the high-diversity regime (disjoint supports), the overlap vanishes
($H \approx 0$). Crucially, if the test mixture is diverse, the
separability gain becomes maximal ($\JSD^{\lambda} \to H(\lambda)$).
As shown in our geometric analysis
(Section~\ref{sec:geometric_interpretation}), this gain effectively
cancels the entropy term in the capacity cost
($\log(\sum e^{\e_k}) \approx \e_{\max} + H(\lambda)$).  Consequently,
for diverse mixtures, the bound simplifies to the intrinsic error of
the worst-case expert:
$\KL(\h \sfp_\lambda \parallel \pi_{g^*}) \le \max_k \e_k$.

\textbf{The Decoupling Hypothesis.}  This analysis identifies a
structural phase transition governed by data geometry. We observe a
\emph{Symmetric Divergence Effect}: the same quantity $\JSD^\lambda$
that acts as an interference penalty for the monolithic model acts as
a separability bonus for the modular gate.  In the \emph{High
  Diversity Regime} (large $\JSD^\lambda$), the Monolithic model hits
an interference floor ($\text{Risk} \ge \JSD^\lambda$), forced to
increase entropy to cover disjoint supports (broad red curve in
Figure~\ref{fig:jsd_gap}).  In contrast, the Modular model exploits
this separation to cancel the capacity cost, maintaining the
low-entropy precision of the original experts (sharp blue peaks).  By
effectively \emph{decoupling} risk from geometry, the modular system
is guaranteed to outperform the monolithic baseline on \emph{any}
mixture $\lambda$ where the intrinsic task difficulty is lower than
the geometric cost of mixing:
$\max_{k} (\e_k) < \JSD^\lambda(\h\sfp)$.

\begin{figure}[t]
\centering
\scalebox{0.6}{
\includegraphics[scale = .5]{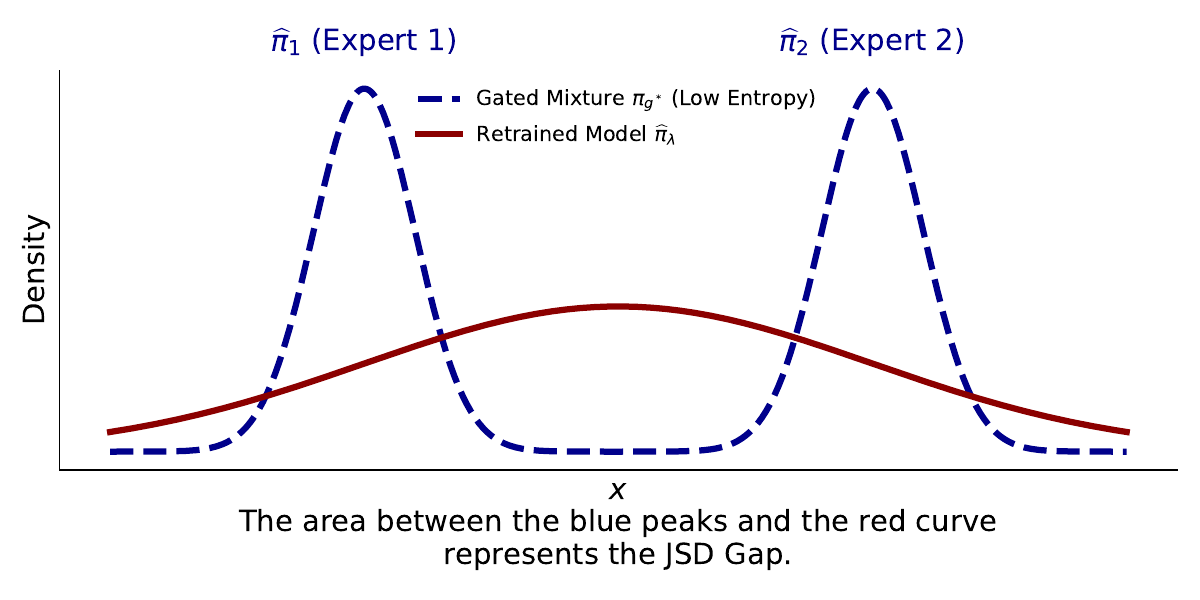}
}
\vspace{-10pt}
\caption{Visualizing the JSD Gap. A gated model (blue) fits distinct
  modes perfectly by routing inputs. A retrained model (red) suffers
  from capacity interference, forcing an entropy increase proportional
  to the JSD.}
\label{fig:jsd_gap}
\vspace{-3pt}
\end{figure}

\ignore{
\textbf{Safety in Convex Settings.}  Finally, one might ask if
modularity sacrifices performance in simpler settings. We prove in the following theorem
that for linear model families (e.g., exponential families), the
modular solution coincides exactly with the optimal retrained model:
$\pi_{g^*} = \h \pi_\lambda$. Thus, modularity acts as a ``safe''
architectural prior: it loses nothing in convex regimes while
providing strictly superior robustness guarantees in the presence of
conflicting, non-convex settings.
}

\textbf{Safety in Convex Settings.}  Finally, one might ask if
modularity sacrifices performance in simpler settings. We prove that in convex settings, the answer is no. We assume $\Pi$ is a linear model family (e.g., exponential families) in the sense of \citet{Csiszar1975}.

\begin{restatable}[Gated Model Coincides with Retraining]{theorem}{CommutingPythagorean}
\label{thm:linear_safety}
Let $\Pi$ be a linear model. For a mixture $\h \sfp_\lambda$, let $\pi_k = \pi^*(\h \sfp_k)$ denote the projection of each component. Then, the best model trained on the mixture $\pi^*(\h \sfp_\lambda)$ coincides exactly with the gated mixture of the best component models:
\[
 \pi^*(\h \sfp_\lambda) = \sum_{k = 1}^p \lambda_k \pi_k.
\]
\end{restatable}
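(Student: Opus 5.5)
The plan is to reduce everything to two exact identities: the Pythagorean equality that characterizes projections onto a linear family in the sense of \citet{Csiszar1975}, and the Jensen-Shannon decomposition identity already used in \cref{subsec:comparison_monolithic}. Concretely, I take ``linear model'' to mean that $\Pi$ is convex (closed under mixtures) and that the projection $\pi^*(\sfp) = \argmin_{\pi \in \Pi} \KL(\sfp \parallel \pi)$ satisfies the Pythagorean relation \emph{with equality}. That is, for every $\pi \in \Pi$,
\[
\KL(\sfp \parallel \pi) = \KL(\sfp \parallel \pi^*(\sfp)) + \KL(\pi^*(\sfp) \parallel \pi).
\]
This equality is the defining feature of the linear case, as opposed to the general convex case, where only ``$\ge$'' holds.

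First, I fix an arbitrary $\pi \in \Pi$ and apply the Jensen-Shannon decomposition to the sources $\h\sfp_k$:
\[
\KL(\h\sfp_\lambda \parallel \pi) = \sum_k \lambda_k \KL(\h\sfp_k \parallel \pi) - \JSD^\lambda(\{\h\sfp_k\}).
\]
Second, I apply the Pythagorean equality to each term, with $\pi_k = \pi^*(\h\sfp_k)$. This gives
\[
\KL(\h\sfp_\lambda \parallel \pi) = C + \sum_k \lambda_k \KL(\pi_k \parallel \pi),
\qquad
C = \sum_k \lambda_k \KL(\h\sfp_k \parallel \pi_k) - \JSD^\lambda(\{\h\sfp_k\}),
\]
where $C$ does not depend on $\pi$.

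Third, I apply the Jensen-Shannon decomposition once more, now to the family $\{\pi_k\}$ with mixture $\bar\pi = \sum_k \lambda_k \pi_k$. This yields
\[
\sum_k \lambda_k \KL(\pi_k \parallel \pi) = \KL(\bar\pi \parallel \pi) + \JSD^\lambda(\{\pi_k\}).
\]
Hence $\KL(\h\sfp_\lambda \parallel \pi) = C' + \KL(\bar\pi \parallel \pi)$ for a constant $C'$ independent of $\pi$.

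Finally, since $\Pi$ is convex, $\bar\pi \in \Pi$. The right-hand side is then minimized over $\Pi$ exactly at $\pi = \bar\pi$, with $\KL(\bar\pi \parallel \bar\pi) = 0$. The minimizer is unique because $\KL(\bar\pi \parallel \pi) > 0$ whenever $\pi \ne \bar\pi$. Therefore $\pi^*(\h\sfp_\lambda) = \bar\pi = \sum_k \lambda_k \pi_k$. As a byproduct, this identifies $\pi^*(\h\sfp_\lambda)$ with the constant gate $g_\lambda$ applied to the component projections.

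The main obstacle is definitional rather than computational. I need to pin down exactly which property of a ``linear model'' is used, namely convexity together with Pythagorean \emph{equality} for the forward projection, and check that it matches \citet{Csiszar1975} as the paper intends. If the paper's notion only guarantees equality for the reverse (I-)projection, or only the inequality, the argument breaks. In that case I would first try to show directly that $\sfp \mapsto \pi^*(\sfp)$ is affine, for instance via first-order optimality conditions that are linear in $\sfp$ (moment matching). I would also need to handle support issues, ensuring each $\KL(\h\sfp_k \parallel \pi_k)$ is finite, so that the identities are not of the form $\infty - \infty$.
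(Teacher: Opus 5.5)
Your proposal is correct and is essentially the paper's proof. The paper likewise applies the Pythagorean equality to each $\h\sfp_k$, uses the Jensen--Shannon identity to see that $\KL(\h\sfp_\lambda\parallel\pi)$ and $\sum_k\lambda_k\KL(\h\sfp_k\parallel\pi)$ differ by a constant, and then notes that minimizing $\sum_k\lambda_k\KL(\pi_k\parallel\pi)$ amounts to minimizing the cross-entropy $-\E_{\bar\pi}[\log\pi]$ (your second Jensen--Shannon step), whose unique minimizer over $\Pi$ is $\bar\pi=\sum_k\lambda_k\pi_k$ because $\bar\pi\in\Pi$; the two properties you single out, Pythagorean equality for $\pi^*(\sfp)=\argmin_{\pi\in\Pi}\KL(\sfp\parallel\pi)$ and closure of $\Pi$ under mixtures, are exactly what the paper assumes (its Pythagorean lemma and its remark that the convex combination lies in the linear model), so your argument is complete relative to that setup, and your caution about them is well placed, since for instance a general exponential family satisfies the first property but not the second.
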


\begin{proof}[Proof Sketch]
  The proof leverages the Pythagorean equality for linear models \citep{Csiszar1975}. By expanding the component divergences, we show that the aggregate mixture risk $\KL(\h \sfp_\lambda \parallel \pi)$ and the expected individual risk $\sum_k \lambda_k \KL(\h \sfp_k \parallel \pi)$ differ only by a constant (the Jensen-Shannon Divergence). Since they share the same unique minimizer, the optimal monolithic model exactly equals the convex combination of the individually projected experts. Full details are provided in Appendix~\ref{app:linear_safety}.
\end{proof}

Thus, modularity acts as a ``safe''
architectural prior: it loses nothing in convex regimes while
providing strictly superior robustness guarantees in the presence of
conflicting, non-convex settings.

\ignore{
\textbf{Rigorous Limits and Safety.}
We further substantiate these insights in
Appendix~\ref{app:analysis_retraining_gating}. Theorem~\ref{th:improvement}
provides a strict lower bound on the retrained model's error,
confirming that capacity interference acts as an irreducible
``confusion cost'' for any distinct tasks
(Proposition~\ref{prop:cond-equality}). Finally,
Theorem~\ref{th:commuting-pythagorean} ensures that modularity is a
safe architectural prior: in convex settings (e.g., exponential
families), the modular solution coincides exactly with the optimal
retrained model, sacrificing nothing in simpler regimes.
}

\subsection{Generalization and Sample Efficiency}
\label{sec:generalization_gap}

The guarantees in Theorem~\ref{th:robust-existence} establish the
existence of a robust gate on the empirical distributions $\h
\sfp_k$. A critical advantage of the modular framework is that this
robustness transfers efficiently to the true population distributions
$\sfp_k$.  The generalization gap of the gated model scales with the
complexity of the lightweight gating network $\sGone$, rather than the
massive complexity of the generative experts. To formalize this, we rely on the vector-valued Rademacher complexity of the gate class and also require expert log-likelihoods to be bounded (Assumption~\ref{ass:boundedness}). The formal assumption and definitions are provided in Appendix~\ref{app:generalization_gap}.

\begin{restatable}[Generalization Bound for Modular Gate Models]{theorem}
  {GeneralizationGate}
\label{th:generalization-gate}
Under Assumption~\ref{ass:boundedness}, for any
$\delta > 0$, with probability at least $1 - \delta$ over the draw of
samples $S_k \sim \sfp_k^m$, the following inequality holds
simultaneously for all $g \in \sGone$ and $\lambda \in \Delta$:
for the generalization gap
$\Gamma = \E_{x \sim \sfp_\lambda} \bracket*{ -\log \pi_g(x) } - \E_{x
  \sim \h \sfp_\lambda} \bracket*{ -\log \pi_g(x) }$: $
  \Gamma
  \leq 2\sqrt{2} C_\Pi \, e^M \Rad_m(\sGone) + M
  \sqrt{\frac{\log (p/\delta) }{2m}},$
where $\Rad_m(\sGone)$ is the Rademacher complexity of the gate class
and $C_\Pi = \sup_x \|(\h \pi_1(x), \ldots, \h \pi_p(x)) \|_2$ is the
\emph{expert coincidence norm}.
\end{restatable}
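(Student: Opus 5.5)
The plan is to reduce the statement to a union of $p$ single-source uniform deviation bounds and then control each one with vector-valued Rademacher contraction. Since $\Gamma = \sum_k \lambda_k \bigl(\E_{\sfp_k}[-\log \pi_g] - \E_{\h\sfp_k}[-\log \pi_g]\bigr)$ is linear in $\lambda$, its supremum over $\lambda \in \Delta$ is attained at a vertex. It therefore suffices to show that, for each fixed $k$, with probability at least $1-\delta/p$,
\[
\sup_{g \in \sGone} \Bigl(\E_{\sfp_k}[-\log \pi_g] - \E_{\h\sfp_k}[-\log \pi_g]\Bigr) \le 2\sqrt{2}\, C_\Pi e^M \Rad_m(\sGone) + M\sqrt{\tfrac{\log(p/\delta)}{2m}}.
\]
A union bound over $k \in [1,p]$ then yields the simultaneous statement for all $g$ and all $\lambda$, and it explains the $\log(p/\delta)$ term.

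For fixed $k$, I would apply the standard Rademacher generalization bound to the loss class $\cL = \{x \mapsto -\log \pi_g(x) : g \in \sGone\}$. Assumption~\ref{ass:boundedness} is taken to guarantee that $-\log \pi_g(x) \in [0, M]$, or at least lies in a range of width $M$, for all $g$ and all relevant $x$. McDiarmid's inequality then gives the deviation term $M\sqrt{\log(p/\delta)/(2m)}$, and symmetrization contributes $2\Rad_m(\cL)$.

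It remains to bound $\Rad_m(\cL)$ by $\sqrt{2}\, C_\Pi e^M \Rad_m(\sGone)$. Write $\pi_g(x) = \langle g(x,\cdot), \h\pi(x)\rangle$, where $\h\pi(x) = (\h\pi_1(x), \dots, \h\pi_p(x))$. The map $\Phi_x \colon u \mapsto -\log \langle u, \h\pi(x)\rangle$, viewed as a function of the gate vector $u = g(x,\cdot) \in \Rset^p$, is the composition of a linear functional with Lipschitz constant $\|\h\pi(x)\|_2 \le C_\Pi$ (by Cauchy--Schwarz) and $-\log$. On the range $\langle u,\h\pi(x)\rangle \ge e^{-M}$ guaranteed by the boundedness assumption, $-\log$ is $e^M$-Lipschitz. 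Hence $\Phi_x$ is $C_\Pi e^M$-Lipschitz with respect to $\|\cdot\|_2$ on the relevant domain. Maurer's vector-contraction inequality then gives $\Rad_m(\cL) \le \sqrt{2}\, C_\Pi e^M \Rad_m(\sGone)$, where $\Rad_m(\sGone)$ is the vector-valued Rademacher complexity (with $mp$ independent Rademacher variables) defined in the appendix. Multiplying by $2$ gives the leading term $2\sqrt{2}\, C_\Pi e^M \Rad_m(\sGone)$.

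The main obstacle is the domain issue in the contraction step. Maurer's inequality needs $\Phi_x$ to be Lipschitz on the set of values the gates actually take, but $-\log$ is not globally Lipschitz. I would handle this by using the assumption's lower bound $\pi_g(x) \ge e^{-M}$ for all $g \in \sGone$ and all $x$ in the support of the $\sfp_k$. Failing that, I would replace $\Phi_x$ by $-\log \max(\cdot, e^{-M})$, which is globally $e^M$-Lipschitz and agrees with $\Phi_x$ on the feasible set. A secondary point is that the normalization constraint $Z_g = 1$ couples gate values across $x$. This does not affect symmetrization, since the supremum is simply taken over the fixed class $\sGone$. It does require $\sGone$ to be defined on the population support, which I would take as part of the appendix's setup.
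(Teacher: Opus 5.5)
Your proposal is correct and follows essentially the same route as the paper: a per-source Rademacher bound with a union bound over the $p$ sources (your vertex argument in $\lambda$ is equivalent to the paper's weighting by $\lambda_k$ and summing), then vector contraction with Lipschitz constant $C_\Pi e^M$, which gives the factor $\sqrt{2}$. Two of your points are more careful than the paper. Your explicit Lipschitz extension $-\log\max(\cdot, e^{-M})$ avoids relying on a gradient bound that holds only on the feasible set. You also flag that $\sGone$ must be defined on the population support, an issue the paper does not address.
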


See Appendix~\ref{app:generalization_gap} for the proof using vector
contraction inequalities. The term $C_\Pi$ acts as a condition number
for modularity, measuring expert overlap: (1) Specialized Experts
(Ideal): If experts have disjoint supports,
$\|\h \bpi(x)\|_2 \approx 1$, so $C_\Pi \approx 1$.  (2) Redundant
Experts (Worst Case): If experts are identical, $C_\Pi = \sqrt{p}$.
Crucially, since the gate is typically a lightweight network (e.g., a
shallow Transformer) compared to the massive experts (LLMs),
$\Rad_m(\sGone) \ll \Rad_m(\Pi)$. This implies that the modular
approach requires significantly fewer samples to learn a robust policy
than retraining a monolithic model.

\ignore{
The guarantees in Theorem~\ref{th:robust-existence} establish the
existence of a robust gate on the empirical distributions $\h p_k$. A
critical advantage of the modular framework is that this robustness
transfers efficiently to the true population distributions $p_k$. As
proven in Appendix~\ref{app:generalization}
(Theorem~\ref{th:generalization-gate}), the generalization gap of the
gated model scales with the Rademacher complexity of the gating
network $\Rad_m(\sGone)$, rather than the much larger complexity of
the generative experts. Specifically, for any $\delta > 0$, with
probability at least $1 - \delta$, the robust loss generalization gap
$\Delta = \max_{\lambda} \E_{\sfp_\lambda}[-\log \pi_{g^*}] -
\max_{\lambda} \E_{\h \sfp_\lambda}[-\log \pi_{g^*}]$ satisfies
\[
  \Delta
  \leq  O\paren*{ C_\Pi \Rad_m(\sGone) } + O\paren*{ \sqrt{\frac{\log p}{m}} },
\]
where $C_\Pi = \sup_x \|\h \bpi(x)\|_2$ is the \emph{expert
  coincidence norm}, which measures overlap. For specialized experts
(disjoint supports), $C_\Pi \approx 1$.  Since the gate is typically a
lightweight network (e.g., a shallow Transformer or MLP) compared to
the massive experts (LLMs), $\Rad_m(\sGone) \ll \Rad_m(\Pi)$, this
implies that the modular approach requires significantly fewer samples
to learn a robust policy than retraining a monolithic model.
}

\section{Optimization Algorithms}
\label{sec:optimization-robust}

The existence result (\cref{th:robust-existence}) guarantees a robust
gate $g^*$ but is non-constructive. To compute this gate, we must
solve the minimax game. Originally, we formulated the problem as
$\min_{g \in \sGone} \max_{\lambda \in \Delta} L(\lambda, g)$ with
payoff $L(\lambda, g) = \KL(\h \sfp_\lambda \parallel \pi_g)$. This
payoff is convex in both parameters (since $\h \sfp_\lambda$ is linear
in $\lambda$ and $\KL$ is convex in first argument), preventing
the direct application of standard descent-ascent guarantees.  See
Appendix~\ref{app:optimization-robust} for a more detailed
discussion.

\subsection{Reformulation via Linearization}

To derive a tractable algorithm, we reformulate the problem into an
equivalent convex-concave game. Since the function
$\lambda \mapsto L(\lambda, g)$ is convex, its maximum over the
simplex $\Delta$ is always achieved at a vertex. Thus,
$\max_{\lambda} L(\lambda, g) = \max_k \KL(\h \sfp_k \parallel \pi_g)$.
This observation allows us to introduce a linearized payoff function:
\[
\wt L(\lambda, g) = \sum_{k=1}^p \lambda_k \KL\paren{ \h \sfp_k \parallel \pi_g }.
\]
This new game shares the same value as the original problem but is
\emph{convex-concave}: linear in $\lambda$ and convex in $g$. This
structure allows us to apply standard no-regret dynamics. Specifically,
if the $\lambda$-player uses Exponentiated Gradient and the $g$-player
uses Online Gradient Descent, the system is guaranteed to converge (Algorithm~\ref{alg:constrained-kl} in Appendix~\ref{app:linearization}).

\begin{restatable}[Convergence of Dynamics]{theorem}{ConvergenceKl}
\label{th:convergence-kl-short}
Let
$\wt V = \min_{g \in \sGone} \max_{\lambda \in \Delta} \wt L(\lambda,
g)$.  If the projection $\Pi_{\sGone}$ onto the normalized gate space
can be computed, then with step sizes
$\eta_\lambda \propto 1/\sqrt{T}$ and $\eta_g \propto 1/\sqrt{T}$, the
time-averaged gate $\ov g_T$ converges to the optimal robust solution:
\[
  \max_{\lambda \in \Delta} \wt L(\lambda, \ov g_T)
  - \wt V \leq O\paren*{\sqrt{\frac{\log p}{T}}}.
\]
\end{restatable}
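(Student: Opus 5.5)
The plan is to use the standard reduction from no-regret dynamics to approximate equilibria in convex-concave games. Two learners interact. The $\lambda$-player runs Exponentiated Gradient on the linear reward $\lambda \mapsto \wt L(\lambda, g_t)$, with gradient vector $\ell_t = (\KL(\h \sfp_k \parallel \pi_{g_t}))_{k}$. The $g$-player runs projected Online Gradient Descent, $g_{t+1} = \Pi_{\sGone}(g_t - \eta_g \nabla_g \wt L(\lambda_t, g_t))$, on the convex losses $g \mapsto \wt L(\lambda_t, g)$. Convexity in $g$ holds because $\pi_g$ is linear in $g$, and $-\log$ is convex. The projection is well defined since $\sGone$ is nonempty, compact and convex by \cref{lemma:Gone-properties}.

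\textbf{Regret bounds.} I will write $R_\lambda(T)$ and $R_g(T)$ for the two players' regrets.
\begin{itemize}
\item EG with $\eta_\lambda \propto 1/\sqrt{T}$ gives $R_\lambda(T) \le B\sqrt{2T\log p}$, where $B$ bounds $\|\ell_t\|_\infty$.
\item OGD gives $R_g(T) \le D G\sqrt{T}$, where $D$ is the diameter of $\sGone$ (finite since $\sX_0$ is finite) and $G$ bounds the gradient norm.
\end{itemize}

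\textbf{Converting regret to a duality gap.} For any $\lambda$,
\[
\tfrac1T\sum_t \wt L(\lambda,g_t) \le \tfrac1T\sum_t \wt L(\lambda_t,g_t) + \tfrac{R_\lambda(T)}{T}.
\]
Also, for any $g$,
\[
\tfrac1T\sum_t \wt L(\lambda_t,g_t) \le \tfrac1T\sum_t \wt L(\lambda_t,g) + \tfrac{R_g(T)}{T} = \wt L(\ov\lambda_T,g) + \tfrac{R_g(T)}{T},
\]
using linearity in $\lambda$. Jensen's inequality in $g$ gives $\wt L(\lambda,\ov g_T) \le \tfrac1T\sum_t \wt L(\lambda,g_t)$. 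Chaining these and taking the max over $\lambda$ and the min over $g$ yields
\[
\max_\lambda \wt L(\lambda,\ov g_T) \le \min_g \wt L(\ov\lambda_T,g) + \tfrac{R_\lambda(T)+R_g(T)}{T} \le \wt V + O(1/\sqrt{T}).
\]
The last step uses $\min_g \wt L(\ov\lambda_T,g) \le \max_\lambda\min_g \wt L \le \wt V$. The $\sqrt{\log p}$ factor comes from the EG term. The OGD term contributes a $p$-independent constant and is absorbed into the $O(\cdot)$, treating $D$ and $G$ as constants.

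\textbf{Main obstacle: boundedness of losses and gradients.} $\KL(\h\sfp_k\parallel\pi_g)$ and its gradient $-\h\sfp_k(x)\h\pi_k'(x)/\pi_g(x)$ blow up when $\pi_g(x)\to 0$, and $\sGone$ allows gates that vanish on some points. I would first restrict the iterates to a truncated set $\sGone^\gamma = \{g\in\sGone : g(x,k)\ge\gamma\}$. This set is still compact and convex, and it is nonempty for small $\gamma$ (e.g., around the constant gate). On it, $\pi_g(x)\ge \gamma\min_{k,x\in\sX_0}\h\pi_k(x)>0$, assuming the experts are positive on $\sX_0$ as needed for finite $\e_k$. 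Then $B$ and $G$ are finite. Separately, I would argue that the truncation loses at most a vanishing amount of value as $\gamma\to0$, by continuity of $\wt L$ near the saddle point $g^*$ of \cref{th:robust-existence}. Alternatively, I would simply invoke the boundedness of log-likelihoods in the style of Assumption~\ref{ass:boundedness}, so that $B,G$ are absolute constants.
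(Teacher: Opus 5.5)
Your proposal is correct and follows the paper's proof. Both use the EG and projected OGD regret bounds, convert them (via linearity in $\lambda$ and convexity in $g$) into a bound on the duality gap of the averaged iterates, and finish with weak duality $\min_{g}\wt L(\ov\lambda_T,g)\le\wt V$; the paper simply takes the bounds $M_\lambda$ on the gains and $M_g$ on the gradients as given.

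Of your two fixes for boundedness, rely on the second. Under Assumption~\ref{ass:boundedness}, every $g\in\sGone$ already satisfies $\pi_g(x)\ge\min_k\h\pi_k(x)\ge e^{-M}$, because $g(x,\cdot)$ is a probability vector, so no truncation is needed. Truncating to $\sGone^\gamma$, by contrast, does not yield the stated rate. For fixed $\gamma$ it leaves a $T$-independent bias equal to the value gap of the truncated game. If $\gamma\to0$ with $T$, the available gradient bound grows like $1/\gamma$, which degrades the rate below $1/\sqrt{T}$.
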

See Appendix~\ref{app:convergence_analysis} for the proof. This theorem
provides a solid theoretical foundation: if we could enforce the
constraints exactly, we would provably find the robust gate. However,
the projection $\Pi_{\sGone}$ is computationally intractable for large
sequence models because the global normalization constraint
$Z_g = \sum_{x \in \sX_0} \pi_g(x) = 1$ couples the updates across all
inputs $x$ in the support.

\subsection{Scalable Primal-Dual Algorithm}

To scale to large generative models, we parameterize the gate
$g_\theta$ (e.g., as a Transformer Encoder) and enforce the global
constraint via Lagrangian relaxation. We introduce a dual variable
$\mu \in \Rset$ corresponding to the equality constraint $Z_g = 1$,
transforming the problem into a \emph{3-player primal-dual game}:
\[
  \min_{\theta} \max_{\lambda \in \Delta, \mu \in \Rset} \cL(\theta, \lambda, \mu)
  = \underbrace{\sum_{k=1}^p \lambda_k \mathcal{L}_{\text{NLL}}(k, \theta)}_{\text{Robust NLL}} + \underbrace{\mu (Z_{g_\theta} - 1)}_{\text{Penalty}}.
\]
The system simulates dynamics between
three players:

\textbf{1. $\lambda$-player (Adversary):} Maximizes the mixture
  difficulty using Exponentiated Gradient. This effectively upweights
  experts where the gate is currently underperforming.

\textbf{2. $\mu$-player (Constraint):} Performs Dual Ascent to
  enforce global normalization. If the total mass $Z_{g_\theta} > 1$,
  $\mu$ increases, penalizing the gate; if $Z_{g_\theta} < 1$, $\mu$
  decreases.

\textbf{3. $g$-player (Gate):} Updates parameters $\theta$ to
  minimize the Lagrangian via AdamW.

We solve this using stochastic estimates, (see
Algorithm~\ref{alg:primal-dual} in
Appendix~\ref{app:convergence_primal_dual}) and a detailed description
in Algorithm~\ref{alg:primal_dual_stochastic}
(Appendix~\ref{app:stochastic_primal_dual_algorithm}).

\subsection{Efficiency and Convergence}

The Primal-Dual formulation fundamentally alters the computational
profile of the problem, making it feasible for LLMs.

\textbf{Optimization Complexity.} The standard projection onto
$\sGone$ requires solving a constrained quadratic program over the
entire support $\sX_0$, which is impossible for sequence models. In
contrast, our constraint is enforced via a scalar update $\mu$,
costing $O(1)$ per parameter. The partition function $Z$ is estimated
efficiently using the training batch itself as the importance sampling
proposal, avoiding auxiliary data generation.

\textbf{Theoretical Guarantee.}
Crucially, replacing the hard projection with a Lagrangian penalty
does not sacrifice the convergence guarantee. The Primal-Dual dynamics
approximate the solution to the constrained game with the same
asymptotic rate.

\begin{restatable}[Convergence of Primal-Dual Dynamics]{theorem}
  {PrimalDualConvergence}
\label{th:primal-dual-convergence}
Consider the Lagrangian payoff
$\cL(g, \lambda, \mu) = \wt L(\lambda, g) + \mu(Z_g - 1)$. Under the
same convexity assumptions as Theorem~\ref{th:convergence-kl-short},
the time-averaged iterates $(\ov g_T, \ov \lambda_T)$ generated by
Algorithm~\ref{alg:primal-dual} converge to the optimal
robust solution with error $O(1/\sqrt{T})$, and the constraint
violation decays at rate $O(1/\sqrt{T})$:
\[
  \max_{\lambda} \wt L(\lambda, \ov g_T) - \wt V \leq O\paren*{\frac{1}{\sqrt{T}}}
  \quad \text{and} \quad |Z_{\ov g_T} - 1| \leq O\paren*{\frac{1}{\sqrt{T}}}.
\]
\end{restatable}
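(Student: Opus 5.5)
We treat Algorithm~\ref{alg:primal-dual} as a no-regret game between a minimizing player controlling $g$ and a maximizing player controlling $(\lambda,\mu)$. The key structural fact is that the Lagrangian $\cL(g,\lambda,\mu)=\wt L(\lambda,g)+\mu(Z_g-1)$ is convex in $g$. Indeed, $\wt L$ is convex in $g$ because $\pi_g$ is linear in $g$ and $-\log$ is convex, and $Z_g$ is linear in $g$. In addition, $\cL$ is linear, hence concave, in $(\lambda,\mu)$.

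We let $g$ range over the convex compact set $\sG=\prod_{x\in\sX_0}\Delta([1,p])$ rather than $\sGone$, so that the only coupling constraint is $Z_g=1$. We restrict $\mu$ to a bounded interval $[-B,B]$, with $B$ larger than the magnitude of an optimal dual multiplier. Such a multiplier exists by Slater/strong duality: the constraint is affine, and the constant gates give feasible relative-interior points, as in Lemma~\ref{lemma:Gone-properties}. We also assume, as in Theorem~\ref{th:convergence-kl-short}, bounded gradients of $\wt L$ on the relevant domain. This amounts to $\pi_g$ being bounded away from zero on $\sX_0$, which can be enforced by a floor on $g$.

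\textbf{Step 1 (regret bounds).} The $g$-player runs projected OGD on $\sG$, where projection is cheap because it is done per-$x$ onto the simplex. This gives regret $R_g(T)=O(\sqrt T)$ with $\eta_g\propto 1/\sqrt T$. The $\lambda$-player runs EG with regret $O(\sqrt{T\log p})$. The $\mu$-player runs projected gradient ascent on $[-B,B]$ with gradient $Z_{g_t}-1$, giving regret $O(B\sqrt T)$.

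Summing the three regrets yields, for all $g\in\sG$, $\lambda\in\Delta$ and $|\mu|\le B$,
\[
\frac1T\sum_t \cL(g_t,\lambda,\mu)-\frac1T\sum_t\cL(g,\lambda_t,\mu_t)\le \epsilon_T=O(1/\sqrt T).
\]
By convexity in $g$ and linearity in $(\lambda,\mu)$, the left-hand side is bounded below by $\cL(\ov g_T,\lambda,\mu)-\cL(g,\ov\lambda_T,\ov\mu_T)$. This is the standard averaging step from Freund--Schapire type arguments.

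\textbf{Step 2 (extracting both guarantees).} We choose $g=g^*\in\sGone$ to be the robust gate, so that $Z_{g^*}=1$ and $\cL(g^*,\ov\lambda_T,\ov\mu_T)=\wt L(\ov\lambda_T,g^*)\le\wt V$. We then take the maximizing $\lambda$ and $\mu=B\,\mathrm{sign}(Z_{\ov g_T}-1)$. This gives
\[
\max_\lambda\wt L(\lambda,\ov g_T)+B|Z_{\ov g_T}-1|\le \wt V+\epsilon_T,
\]
which immediately yields the optimality claim.

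For the constraint violation, we use the optimal multiplier $\mu^*$ of the saddle problem $\min_{g\in\sG}\max_\lambda\wt L$ subject to $Z_g=1$. Saddle-point duality gives $\max_\lambda\wt L(\lambda,\ov g_T)\ge \wt V-|\mu^*||Z_{\ov g_T}-1|$. Combining this with the previous display gives $(B-|\mu^*|)|Z_{\ov g_T}-1|\le\epsilon_T$. Taking $B=|\mu^*|+1$ then gives $|Z_{\ov g_T}-1|=O(1/\sqrt T)$.

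\textbf{Main obstacle.} The delicate step is justifying this lower bound via a bounded Lagrange multiplier $\mu^*$ for the convex–concave problem with an affine equality constraint. We would obtain it by applying Sion's minimax theorem to $\min_{g\in\sG}\max_{\lambda,\mu}\cL$ together with Slater's condition, ensuring the dual optimum is attained so that $|\mu^*|<\infty$. Secondary issues, namely boundedness of gradients near $\pi_g\to0$ and the lack of an a priori bound on the dual iterates, are handled by the gate floor and by projecting $\mu$ onto $[-B,B]$.
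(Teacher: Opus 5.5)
Your proposal is correct. It shares the paper's skeleton: EG for $\lambda$, gradient ascent for $\mu$, a gradient method for $g$, and the summed regrets converted into a bound on the averaged iterates via convexity in $g$ and linearity in $(\lambda,\mu)$. Your optimality bound $\max_\lambda \wt L(\lambda,\ov g_T)\le \wt V+\epsilon_T$ is also obtained, as in the paper, by comparing with the feasible $g^*$.

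The genuine difference is the constraint-violation step. The paper runs the analysis with $g$ ranging over $\sGone$: its gap is $\max_w \cL(\ov g_T,w)-\min_{g\in\sGone}\cL(g,\ov w_T)$ and its regret bound uses $D_{\sGone}$. Under that reading $Z_{\ov g_T}=1$ holds trivially. It then asserts $|Z_{\ov g_T}-1|\le \mathrm{Gap}/|\mu^*|$ on the heuristic ground that the $\mu$-player ``would exploit'' a violation. That inequality is not derived, and it degenerates when $\mu^*=0$.

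You instead do the following:
\begin{itemize}
\item let $g$ range over $\sG$, so the violation is a genuine quantity;
\item use the comparator $\mu=B\,\mathrm{sign}(Z_{\ov g_T}-1)$ to get $\max_\lambda \wt L(\lambda,\ov g_T)+B|Z_{\ov g_T}-1|\le \wt V+\epsilon_T$;
\item pair this with the multiplier inequality $\max_\lambda \wt L(\lambda,g)+\mu^*(Z_g-1)\ge \wt V$ on $\sG$, with dual attainment coming from the uniform gate, which is feasible and lies in the relative interior of $\sG$.
\end{itemize}
This gives $(B-|\mu^*|)\,|Z_{\ov g_T}-1|\le \epsilon_T$. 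It is the rigorous form of the paper's step. The correct denominator is the margin $B-|\mu^*|$. It recovers the paper's $\mathrm{Gap}/|\mu^*|$ only when $B\ge 2|\mu^*|>0$, and it still works when $\mu^*=0$.

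The cost is that you analyze a slightly modified Algorithm~\ref{alg:primal-dual}. Dual ascent is projected onto a box strictly larger than $|\mu^*|$; you need this anyway, because the $g$-gradients contain $\mu_t\h\pi_k(x)$ and must stay bounded. You also use projected OGD instead of AdamW, which has no general regret guarantee. The paper's sketch nominally covers the literal algorithm, but only by assuming these properties implicitly.

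One small fix: do not get bounded gradients from a floor on $g$. A floor shrinks the comparator set, so $g^*$ need not be admissible, and $\wt V$ would be replaced by the value of the floored problem. Simply assume bounded gradients as in Theorem~\ref{th:convergence-kl-short}. Assumption~\ref{ass:boundedness} is a convenient sufficient condition: it gives $\pi_g(x)\ge \min_k \h\pi_k(x)\ge e^{-M}$ on $\sX_0$ for every $g\in\sG$, without changing the domain.
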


\begin{proof}[Proof sketch]
  The proof follows from standard regret analysis for Lagrangian games
  (e.g., \citep{cherukuri2017saddle}). The total regret of the
  primal-dual system decomposes into the regret of the
  $\lambda$-player (simplex), the $g$-player (convex set), and the
  $\mu$-player (unconstrained linear). Since all three use no-regret
  algorithms (Exponentiated Gradient, Adam/OGD, and Gradient Ascent),
  the average duality gap and the constraint violation are bounded by
  the average regret, which scales as $O(1/\sqrt{T})$.
\end{proof}
The proof details are given in  Appendix~\ref{app:primal-dual-proof}. This theorem 
ensures that even without the expensive projection step, the algorithm
provably recovers the robust modular gate.

\textbf{Gap between Theory and Practice.} The convergence guarantees
provided in \cref{th:convergence-kl-short} and
\cref{th:primal-dual-convergence} rely on the convexity of the
optimization problem with respect to the gate $g$. In our scalable
implementation, the gate $g_\theta$ is parameterized by a deep neural
network, rendering the objective non-convex with respect to
$\theta$. While strict no-regret guarantees do not apply to this
non-convex setting, we empirically observe that the Primal-Dual
algorithm converges to effective robust solutions, consistent with the
success of similar game-theoretic optimization dynamics in deep
learning (e.g., GANs or adversarial training).

\subsection{Practical Implementation}
\label{sec:practical-implementation}

Translating the theoretical algorithm into a stable training loop
requires addressing two specific numerical challenges: estimating the
global partition function $Z_g$ and avoiding underflow. See
Appendix~\ref{app:scalable_implementation} for a more detailed
discussion.

\textbf{Estimating the Partition Function.}  Calculating the global
sum $Z_g = \sum_{x \in \sX_0} \pi_g(x)$ exactly is intractable. We
rely on a Monte Carlo estimate using the current training batch
$B$. Crucially, to avoid expensive auxiliary sampling, we use the
training batch itself as the proposal distribution for Importance
Sampling. The batch is constructed by sampling uniformly from the $p$
source datasets, effectively drawing
$x \sim \frac{1}{p} \sum_{k} \h \sfp_k$. Under the assumption that
experts approximate their sources ($\h \pi_k \approx \h \sfp_k$), the
empirical mixture closely matches the model mixture proposal
$q(x) = \frac{1}{p} \sum_{k} \h \pi_k(x)$. The estimator becomes: $
  \h Z = \frac{1}{|B|} \sum_{x \in B} \frac{\pi_g(x)}{q(x)}$.
This allows us to reuse the logits computed during the forward pass,
estimating the global constraint with zero additional inference
cost. To reduce variance in the $\mu$-update, we track $\h Z$ using an
Exponential Moving Average (EMA).

\textbf{Log-Space Stability.}  The mixture probability
$\pi_g(x) = \sum_{k} g(x, k) \h \pi_k(x)$ involves summing
probabilities that may be extremely small (e.g., $10^{-100}$ for long
sequences). Direct computation leads to catastrophic underflow. We
strictly perform all operations in log-space using the LogSumExp
trick: $
  \log \pi_g(x) = \text{LogSumExp}_k \paren*{ \log g(x, k) + \log \h \pi_k(x) }$.

\textbf{Quadratic Penalty.}  An alternative to the
Lagrangian method is to relax the hard constraint into a soft
quadratic penalty,
$\min_{g} \max_{\lambda} L'(\lambda, g) + \beta (Z_g - 1)^2$. This
eliminates the need for the $\mu$-player, reducing the
problem to a standard regularized minimax optimization. However, it
only guarantees approximate normalization. We find the Primal-Dual
approach superior as it dynamically adjusts the penalty strength $\mu$
to satisfy the constraint exactly in the limit.

\section{Sampling from the Robust Gated Model}
\label{sec:sampling}

\begin{figure}[t]
\vskip -.08in
\centering
\scalebox{.5}{
\begin{tikzpicture}[
    node distance=1.5cm and 1cm,
    >=stealth',
    thick,
    box/.style={draw, rectangle, rounded corners, minimum width=2.5cm, minimum height=0.8cm, align=center, font=\small},
    expert/.style={box, fill=expertgreen, draw=green!40!black},
    gate/.style={box, fill=gateblue, draw=blue!40!black},
    student/.style={box, fill=studentred, draw=red!40!black},
    router/.style={box, fill=routerorange, draw=orange!40!black},
    process/.style={->, very thick, draw=black!70},
    loss/.style={->, dashed, thick, draw=red!70, label distance=2pt},
    cross/.style={draw=red, very thick, opacity=0.8}
]

    
    \node[font=\bfseries] (titleA) at (0, 0) {A. Monolithic Distillation};
    
    \node[gate, below=0.5cm of titleA] (t_gate_a) {Non-Causal\\Gate $g^*$};
    \node[expert, below=0.2cm of t_gate_a] (t_exp_a) {Frozen Experts\\$\{\widehat{\pi}_k\}$};
    
    \node[left=0.8cm of t_gate_a] (input_a) {$x$};
    \draw[process] (input_a) -- (t_gate_a);
    \draw[process] (input_a) |- (t_exp_a);
    
    \begin{scope}[on background layer]
        \node[draw=gray, dashed, fit=(t_gate_a) (t_exp_a), inner sep=0.2cm, label={right:\scriptsize \textbf{Teacher}}] (teacher_group_a) {};
    \end{scope}

    \draw[loss] (teacher_group_a.south) -- node[right, font=\scriptsize, align=left] {Train via $\mathcal{D}_{\text{robust}}$\\(Next-token pred.)} ++(0, -1.2) coordinate (mid_a);

    \node[student, below=1.2cm of teacher_group_a, minimum height=1.5cm, text width=3cm] (student_a) {
        \textbf{Causal Student}\\ $\pi_{\text{causal}}(x)$\\
        \scriptsize (All knowledge baked in)
    };
    
    \node[expert, right=0.5cm of student_a, opacity=0.4, scale=0.7] (dead_exp) {Experts};
    \draw[cross] (dead_exp.north west) -- (dead_exp.south east);
    \draw[cross] (dead_exp.north east) -- (dead_exp.south west);
    \node[below=0.1cm of dead_exp, font=\scriptsize, text=red] {Discarded};


    \node[font=\bfseries, right=2cm of titleA] (titleB) {B. Structural Transfer};
    
    \node[gate, below=0.5cm of titleB] (t_gate_b) {Non-Causal\\Gate $g^*$};
    \node[left=0.8cm of t_gate_b] (input_b) {$x$};
    \draw[process] (input_b) -- (t_gate_b);

    \draw[loss] (t_gate_b.south) -- node[right, font=\scriptsize, align=left] {Distill Routing\\Weights Only} ++(0, -1.2) coordinate (mid_b);

    \node[router, below=1.2cm of t_gate_b] (s_router) {\textbf{Causal Router}\\ $\gamma_\phi(x_{<t})$};

    \node[expert, below=0.6cm of s_router] (frozen_exp_b) {Original Frozen Experts\\$\{\widehat{\pi}_k\}$};

    \draw[process] (s_router) -- node[right, font=\scriptsize] {Weighted Mixing} (frozen_exp_b);
    \draw[process] (input_b |- s_router) -- (s_router);
    
    \draw[process] (input_b |- s_router) |- (frozen_exp_b);

    \begin{scope}[on background layer]
        \node[draw=blue!50, thick, rounded corners, fit=(s_router) (frozen_exp_b), inner sep=0.2cm, label={right:\scriptsize \textbf{Inference System}}] {};
    \end{scope}

    \draw[gray!30, thick] ($(titleA)!0.5!(titleB) + (0, 0.5)$) -- ($(titleA)!0.5!(titleB) + (0, -6)$);

\end{tikzpicture}
}
\caption{Efficiency Strategies. (A) Monolithic Distillation trains a
  single large model to mimic the ensemble, discarding the original
  experts. (B) Structural Distillation trains a lightweight Causal
  Router to mimic only the gating decisions ($g^*$), preserving the
  original experts. This maintains modularity: upgrading an expert in
  (B) improves the system immediately.}
\vskip -.1in
\label{fig:distillation}
\end{figure}
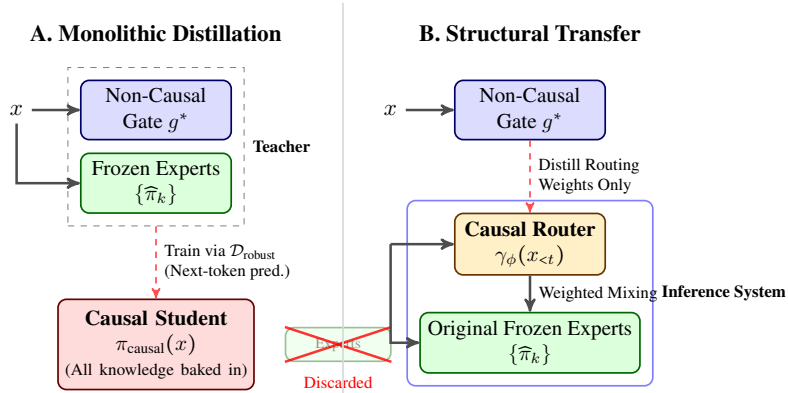

The optimization procedure yields a robust gate $g^* \in \sGone$ that
guarantees the mixture model
$\pi_{g^*}(x) = \sum_k g^*(x, k) \h \pi_k(x)$ is globally
normalized. However, sampling from this model presents a unique
challenge: the optimal gate $g^*(x, \cdot)$ is \emph{non-causal}. It
determines the mixture weights based on the \emph{complete} sequence
$x$, meaning the probability of the first token theoretically depends
on the last. This breaks the standard autoregressive property required
for efficient token-by-token generation. To sample from $\pi_{g^*}$, we must rely on methods that treat the
model as an unnormalized density or a re-weighted approximation. We
explore two Monte Carlo strategies:

\textbf{Sampling-Importance-Resampling (SIR).}  This method generates
samples asymptotically from the target distribution. We first draw $N$
candidate sequences $x^{(1)}, \dots, x^{(N)}$ from a proposal
distribution $q(x)$ that is easy to sample from—specifically, the
uniform mixture of experts $q(x) = \frac{1}{p} \sum_{k} \h
\pi_k(x)$. We then compute importance weights
$w_i = \pi_{g^*}(x^{(i)}) / q(x^{(i)})$. Finally, we resample a single
sequence $x^*$ from the candidates with probability proportional to
$w_i$. As $N \to \infty$, $x^*$ converges to a draw from $\pi_{g^*}$.
For full details, see Appendix~\ref{sec:SIR}.

\textbf{Exact Rejection Sampling.}  Unlike SIR, Rejection Sampling
provides \emph{exact} samples from $\pi_{g^*}$ for any finite
$N$. This requires an envelope constant $M$ such that
$\pi_{g^*}(x) \leq M q(x)$ for all $x$. A crucial property of our
normalized gate space $\sGone$ is that $g^*(x, k) \leq 1$, which
implies: $
  \pi_{g^*}(x) = \sum_{k=1}^p g^*(x, k) \h \pi_k(x)
  \leq \sum_{k=1}^p \h \pi_k(x) = p \cdot q(x)$.
Thus, we can use the number of experts $M=p$ as a strict envelope. The
algorithm samples a candidate $x \sim q(x)$ and accepts it with
probability $A(x) = \pi_{g^*}(x) / (p \cdot q(x))$. The acceptance
rate is $1/p$, meaning we must generate approximately $p$ candidates
to obtain one valid sample (see
Appendix~\ref{sec:rejection-sampling}).

\textbf{Inference Bottleneck.}  While these methods preserve the
theoretical robustness guarantees, their inference cost scales
linearly with the number of experts $p$. Evaluating the acceptance
probability or importance weight requires running a forward pass on
\emph{all} $p$ experts for every candidate sequence. For large
ensembles (e.g., $p=100$), this cost is prohibitive for real
applications, motivating the need for distillation.

\section{Efficient Inference: Structural Distillation}
\label{sec:distillation}

While the robust gate $g^*$ guarantees optimal performance, its
non-causal nature requires expensive sampling methods like SIR or
Rejection Sampling (Section~\ref{sec:sampling}) during inference. \ignore{ To
recover efficient $O(1)$ autoregressive generation while preserving
the benefits of modularity, we propose \emph{Structural Distillation}.}
With \emph{Structural Distillation} we recover efficient $O(1)$ autoregressive generation while preserving
the benefits of modularity.

\subsection{Monolithic vs. Structural Distillation}

Standard distillation would involve training a single large student
model to mimic the input-output behavior of the ensemble $\pi_{g^*}$
(see Appendix~\ref{sec:efficient-distillation}). While efficient at
inference time, this \emph{Monolithic Distillation}
(Figure~\ref{fig:distillation}(A)) discards the modular structure: if
one expert is updated or a new domain is added, the entire student
model must be retrained from scratch. In contrast, our \emph{Structural Distillation} approach
(Figure~\ref{fig:distillation}(B)) preserves the pre-trained experts
(see detailed analysis in
Appendix~\ref{app:realizability-structured-distillation}). We distill
the robust, non-causal gate $g^*$ into a lightweight \emph{Causal
  Router} $\gamma_\phi$. The inference system remains a mixture of
experts, but the routing decisions are now made causally.

\subsection{The Causal Router \& Objective}

We define the student model $\pi_\gamma$ as a causal mixture of the frozen experts, parameterized by a learnable router $\gamma_\phi$:
$
  \pi_\gamma(x) = \prod_{t=1}^T \pi_\gamma(x_t \mid x_{<t})
  = \prod_{t=1}^T \sum_{k=1}^p \gamma_\phi(x_{<t}, k) \, \h \pi_k(x_t \mid x_{<t}).$
Here, $\gamma_\phi(x_{<t}, \cdot) \in \Delta$ is a distribution over experts predicted by a small causal network (e.g., a shallow Transformer) given only the history. Our goal is to train $\phi$ to minimize the KL divergence from the robust teacher $\pi_{g^*}$ to the student $\pi_\gamma$ over the sequence space $\sX$:
\[
  \min_{\phi} \cJ(\phi) = \KL(\pi_{g^*} \parallel \pi_\gamma)
  = \E_{x \sim \pi_{g^*}} \bracket*{ \log \frac{\pi_{g^*}(x)}{\pi_\gamma(x)} }.
\]
Crucially, this global sequence-level objective decomposes into a tractable token-level optimization.

\begin{restatable}[Decomposition of Structural Distillation]{proposition}
  {DistillationDecomposition}
\label{prop:distillation-decomposition}
Minimizing the sequence-level divergence
$\KL(\pi_{g^*} \parallel \pi_\gamma)$ is equivalent to maximizing the
expected log-likelihood of the student model on trajectories sampled
from the robust teacher. Specifically, the gradient is:
\[
  \nabla_\phi \cJ(\phi)
  = - \mspace{-15mu} \E_{x \sim \pi_{g^*}} \mspace{-5mu} \bracket*{
    \sum_{t=1}^T \nabla_\phi \log \paren*{ \sum_{k=1}^p
      \gamma_\phi(x_{<t}, k) \, \h \pi_k(x_t \mid x_{<t}) } }.
\]
\end{restatable}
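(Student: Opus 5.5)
The plan is to split the KL objective into a cross-entropy term and a term that does not depend on $\phi$, and then differentiate the cross-entropy term under the expectation. The justification is that the sampling distribution $\pi_{g^*}$ is fixed and does not depend on $\phi$.

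First, I would write
\[
\cJ(\phi) = \E_{x \sim \pi_{g^*}}[\log \pi_{g^*}(x)] - \E_{x \sim \pi_{g^*}}[\log \pi_\gamma(x)].
\]
The first term is the negative entropy of the teacher. The teacher is defined by the fixed robust gate $g^*$ and the frozen experts, so this term is constant in $\phi$. Minimizing $\cJ$ is therefore equivalent to maximizing $\E_{x\sim\pi_{g^*}}[\log \pi_\gamma(x)]$, which is the expected student log-likelihood on teacher trajectories. This gives the first claim of Proposition~\ref{prop:distillation-decomposition}.

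Second, I would use the autoregressive factorization that defines $\pi_\gamma$. This gives
\[
\log \pi_\gamma(x) = \sum_{t=1}^T \log\Big(\sum_{k=1}^p \gamma_\phi(x_{<t},k)\,\h\pi_k(x_t\mid x_{<t})\Big).
\]
The per-token conditionals are valid distributions because $\gamma_\phi(x_{<t},\cdot)\in\Delta$ and each $\h\pi_k(\cdot\mid x_{<t})$ is normalized. Hence $\pi_\gamma$ is a normalized sequence distribution, and no partition function depending on $\phi$ appears.

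Third, I would differentiate with respect to $\phi$. The expectation is over the finite set $\sX_0$, or more generally over a set where $\pi_{g^*}$ places its mass, and the measure does not involve $\phi$. Interchanging gradient and expectation therefore just means differentiating a finite sum term by term, which gives the stated formula.

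The main obstacle is well-definedness. We need $\pi_\gamma(x)>0$ on the support of $\pi_{g^*}$ so that the logarithms and their gradients are finite. Since $\pi_{g^*}(x)>0$ implies $\h\pi_k(x)>0$ for some $k$, it suffices to assume that the router gives positive weight to every expert (for example, a softmax parameterization) and is differentiable in $\phi$. I would state this regularity assumption explicitly. If the support were infinite, I would instead appeal to dominated convergence under a bounded-gradient assumption.
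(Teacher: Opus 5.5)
Your proposal is correct and follows the paper's own proof: you split the KL into the $\phi$-independent negative teacher entropy and the cross-entropy, apply the causal factorization of $\pi_\gamma$, and differentiate under the $\phi$-independent expectation. The only difference is that you make explicit the regularity conditions the paper leaves implicit, namely positive router weights so that the logarithms are finite on $\supp(\pi_{g^*})$, and differentiability in $\phi$ to justify interchanging gradient and expectation.
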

See Appendix~\ref{app:distillation-decomposition} for the proof. This
result allows us to train the router using standard MLE on a dataset
of ``robust sequences" generated by the teacher (using Rejection
Sampling). Furthermore, we prove in
Theorem~\ref{th:causal-decomposition}
(Appendix~\ref{app:realizability-structured-distillation}) that this
objective minimizes the \emph{Router Approximation Error}, with no
irreducible structural mismatch.

\subsection{Cached-Logit Distillation Algorithm}
\label{sec:structural-transfer}

A naive gradient update requires evaluating all $p$ experts at every
step. To avoid this bottleneck, we exploit the fact that experts are
frozen and propose a \emph{Cached-Logit} training loop
(Algorithm~\ref{alg:structural-distillation} in Appendix~\ref{app:structural-transfer}). First, we generate a
dataset $\cD$ from $\pi_{g^*}$ using Rejection Sampling or SIR, caching the
expert probability vectors
$\mathbf{P}_t = [\h \pi_1(x_t|x_{<t}), \dots, \h \pi_p(x_t|x_{<t})]$
for every token. While generating this robust synthetic dataset requires evaluating all experts, it is strictly a one-time, offline process that is highly parallelizable across multiple GPUs. As established by our generalization bound (\cref{th:generalization-gate}), the sample complexity scales with the lightweight Causal Router rather than the complex experts, meaning we only need to generate a relatively small dataset to distill the robust policy. Second, we train the router $\phi$ iteratively to maximize the
likelihood of these cached sequences by minimizing
$\mathcal{L} = -\log(\gamma_\phi(x_{<t}) \cdot \mathbf{P}_t)$. This
decouples the expensive expert evaluation (one-time cost) from router
training and bypasses the need for repeated, expensive forward passes through the experts during the distillation optimization loop, yielding a system that is robust, modular, and efficient.

\ignore{
  \subsection{Cached-Logit Distillation Algorithm}

A naive implementation of the gradient update would still require
running all $p$ experts to compute the term
$\h \pi_k(x_t \mid x_{<t})$. However, since the experts are
\emph{frozen}, we can decouple the expensive data generation from the
router training. We propose a \emph{Cached-Logit} training loop
(Algorithm~\ref{alg:structural-distillation} in
Appendix~\ref{app:structural-transfer}):
(Phase 1) Generation \& Caching. We use Rejection Sampling
(Section~\ref{sec:sampling}) to generate a fixed dataset $\cD$ of
sequences from the robust model $\pi_{g^*}$. Crucially, during this
generation, we compute and store the vector of expert probabilities
$\mathbf{P}_t = [\h \pi_1(x_t|x_{<t}), \dots, \h \pi_p(x_t|x_{<t})]$
for every token. This phase incurs a one-time cost proportional to
$p$.
(Phase 2) Router Training. We train the router parameters $\phi$
to maximize the likelihood of the cached sequences. The loss at step
$t$ becomes simply $-\log(\gamma_\phi(x_{<t}) \cdot
\mathbf{P}_t)$. This phase involves only the forward pass of the small
router and a dot product, making it extremely fast and independent of
the expert model size.

This structural approach yields a system that is robust (inheriting
properties from $g^*$), modular (experts remain frozen), and efficient
(inference cost is dominated by the top-$k$ experts selected by the
router).
}

\section{Experiments}
\label{sec:experiments}

\subsection{Empirical Comparison: Gate vs. Monolithic}
\label{sec:empirical-comparison}

In this section, we compare our gated model against standard retrained
baselines on synthetic data. Before presenting the results, we discuss
the nuances of a fair comparison.

\textbf{Fairness and Gradient Conflict.}
Comparing a modular architecture (frozen experts) with a monolithic
model (trained from scratch) is non-trivial. Standard metrics like
parameter count are insufficient. While a gated model might have a
larger \emph{total} parameter count, its effective hypothesis space is
constrained to the convex hull of the experts. A retrained monolithic
model theoretically enjoys greater flexibility, as it can move freely
in weight space to minimize aggregate loss. However, this flexibility comes at a cost: \emph{gradient
  conflict}. When source distributions contain conflicting signals
(e.g., distinct tasks or contradictory rules), a single model trained
on the aggregate objective suffers from destructive interference. The
optimization settles for a high-entropy \emph{compromise} that
underperforms on individual components. Our modular architecture
structurally orthogonalizes these conflicts. Therefore, we frame our
comparison not just on capacity, but on \emph{robustness to
  distribution shift}.

\textbf{Experimental Protocol.}
We define $p=2$ experts ($N_k$ parameters each), pre-trained to
convergence on source domains $D_k$. We evaluate three model classes:

\textbf{Robust Gate Model (Ours):} Combines frozen experts via a gate
  trained with the Primal-Dual Algorithm (Algorithm~\ref{alg:primal_dual_stochastic} in Appendix~\ref{app:stochastic_primal_dual_algorithm}) on the union dataset. The
  total size is $N_{\text{gate}} \approx 1.24 \sum N_k$ (only 24\%
  trainable parameters).

\textbf{Retrained Model (Fixed $\lambda$):} Monolithic models trained on the
  aggregate data (fixed $\lambda=0.5$). We evaluate a \emph{Smaller} version
  ($N = \sum N_k$) and a \emph{Larger} version ($N = 1.5 \sum N_k$) to
  test if increased capacity overcomes interference.

\textbf{Oracle Model:} ``Cheating" baselines retrained from scratch on
  the \emph{exact} test mixture $\lambda$ for every evaluation
  point. We again test Smaller and Larger variants.

\textbf{Synthetic Verification.}
We define a sequence task over vocabulary $V=100$ with two
contradictory domains. Domain A follows
$x_{t+1} = (x_t + 1) \pmod{100}$, while Domain B follows
$x_{t+1} = (x_t - 1) \pmod{100}$, with $x_0$ chosen uniformly at random. For any token $x_t$, the gradients
from A and B are directly opposed. We evaluate performance by varying
the test mixture $\lambda \in [0, 1]$ in steps of $0.1$.
For experimental details, see Appendix~\ref{app:implementation_details}.

\begin{figure}[t]
    
    \centering
    \includegraphics[width=0.49\linewidth]{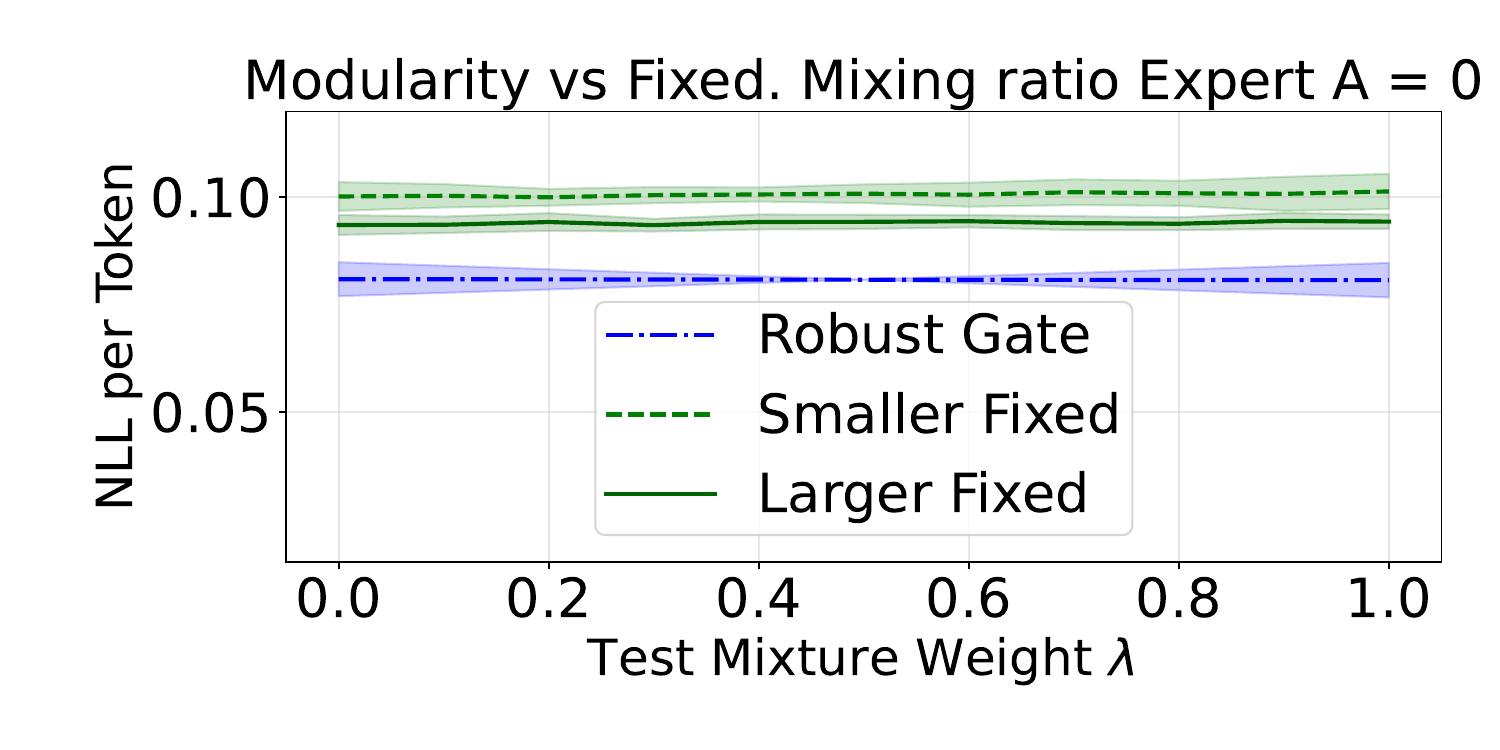}
    \includegraphics[width=0.49\linewidth]{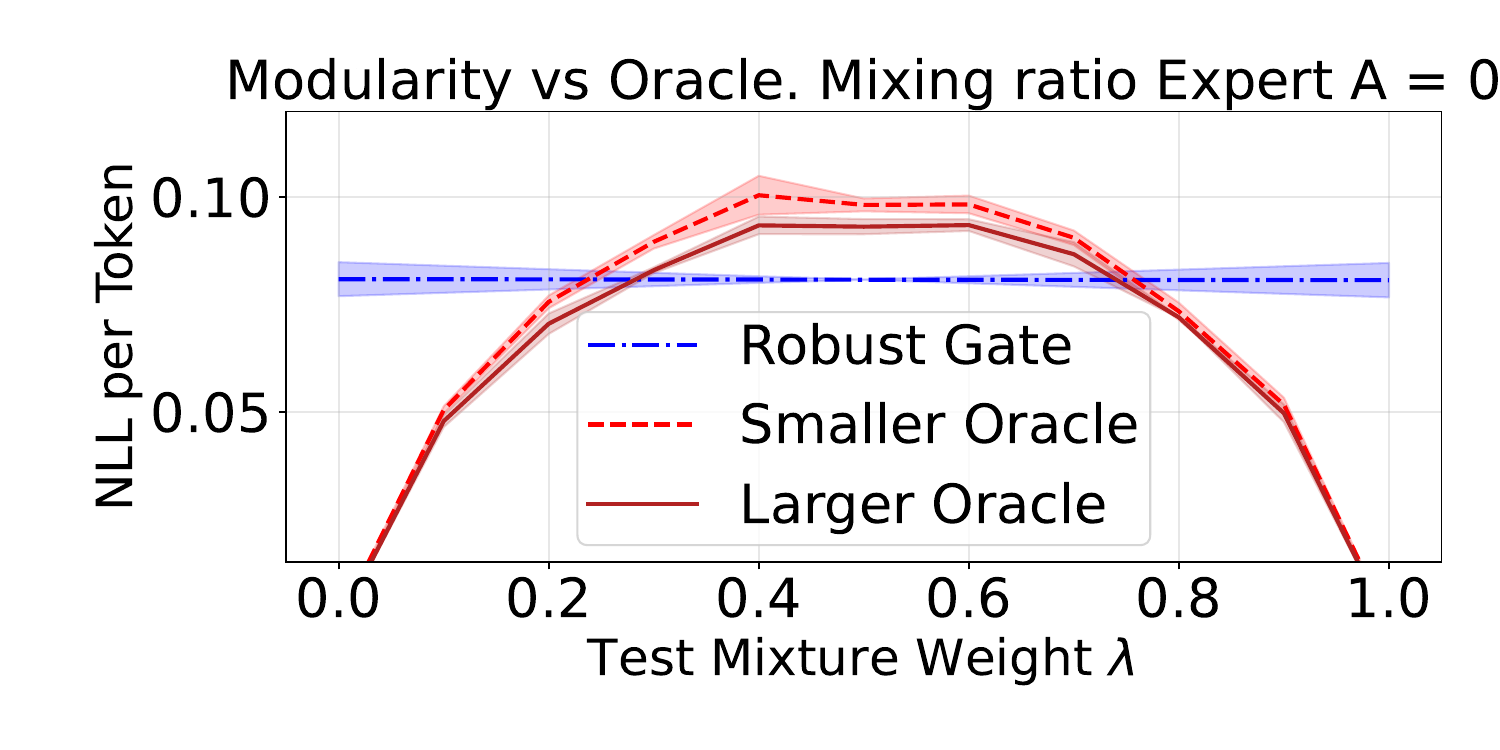}
    \vskip -.1in
    \caption{\textbf{Modularity overcomes gradient conflict.} Left:
      The Robust Gate (blue) outperforms Fixed monolithic baselines
      (green) across all mixtures. Right: Even compared to Oracle
      models (red) trained on the exact test distribution, the Gate
      wins in the high-interference region ($\lambda \approx
      0.5$). The concave shape of the Oracle curves indicates
      destructive interference.}
    \vskip -.1in
    \label{fig:capacity_gap}
\end{figure}

\textbf{Results: The Interference Gap.}
Figure~\ref{fig:capacity_gap} (Left) compares the Robust Gate against
the Fixed baselines. The Fixed models, trained on the conflict-heavy
mixture ($\lambda=0.5$), learn a high-entropy policy that fails to
specialize for either domain. The Gate achieves consistently
lower loss, confirming that modularity is superior to ERM when tasks
are disjoint.

Figure~\ref{fig:capacity_gap} (Right) reveals a more profound insight:
the \emph{Interference Gap}. In the high-entropy region
($\lambda \in [0.3, 0.7]$), the Gate outperforms even the
``cheating" Larger Oracle.  This empirically validates
Theorem~\ref{thm:jsd_gap} (The JSD Gap). The Oracle's performance
curve is distinctly concave: even with perfect knowledge of $\lambda$,
a single set of weights cannot simultaneously master contradictory
rules without increasing entropy (divergence). The modular system
avoids this penalty because the experts remain disjoint, and the gate
simply routes queries to the correct specialist.  At the extremes
($\lambda \approx 0$ or $1$), the task collapses to a single domain,
allowing the Oracles to specialize and naturally surpass the Gate.  Appendix~\ref{app:extra_experiments} provides additional results where the distributions of two experts A and B were less contradictory.

\subsection{Algorithm Stability and Convergence}

A key concern with minimax optimization is stability. We monitored the
dynamics of the Primal-Dual variables during training. The adversary's
mixture weights $\lambda_t$ rapidly converged to
$\lambda \approx [0.5, 0.5]$. This indicates that the gate
successfully balanced the performance across domains
($\e_A \approx \e_B$), reaching a maximum-entropy equilibrium where
the adversary has no incentive to concentrate on a specific task.
Simultaneously, the dual variable $\mu_t$, initialized at $0$,
increased steadily during the first epoch as the gate initialization
($Z \approx 1/p$) violated the constraint, before stabilizing once the
gate learned to satisfy the partition unity $Z_g \approx 1$.  The
system exhibited stable convergence without the oscillations typical
of adversarial training, likely due to the convexity of the inner
maximization over $\lambda$.

\subsection{Experiments with Structural Distillation}

\ignore{
\setlength{\intextsep}{0pt}
\setlength{\columnsep}{5pt}
\begin{wrapfigure}{r}{0.5\columnwidth}
  \centering
    \includegraphics[width=0.5\columnwidth]{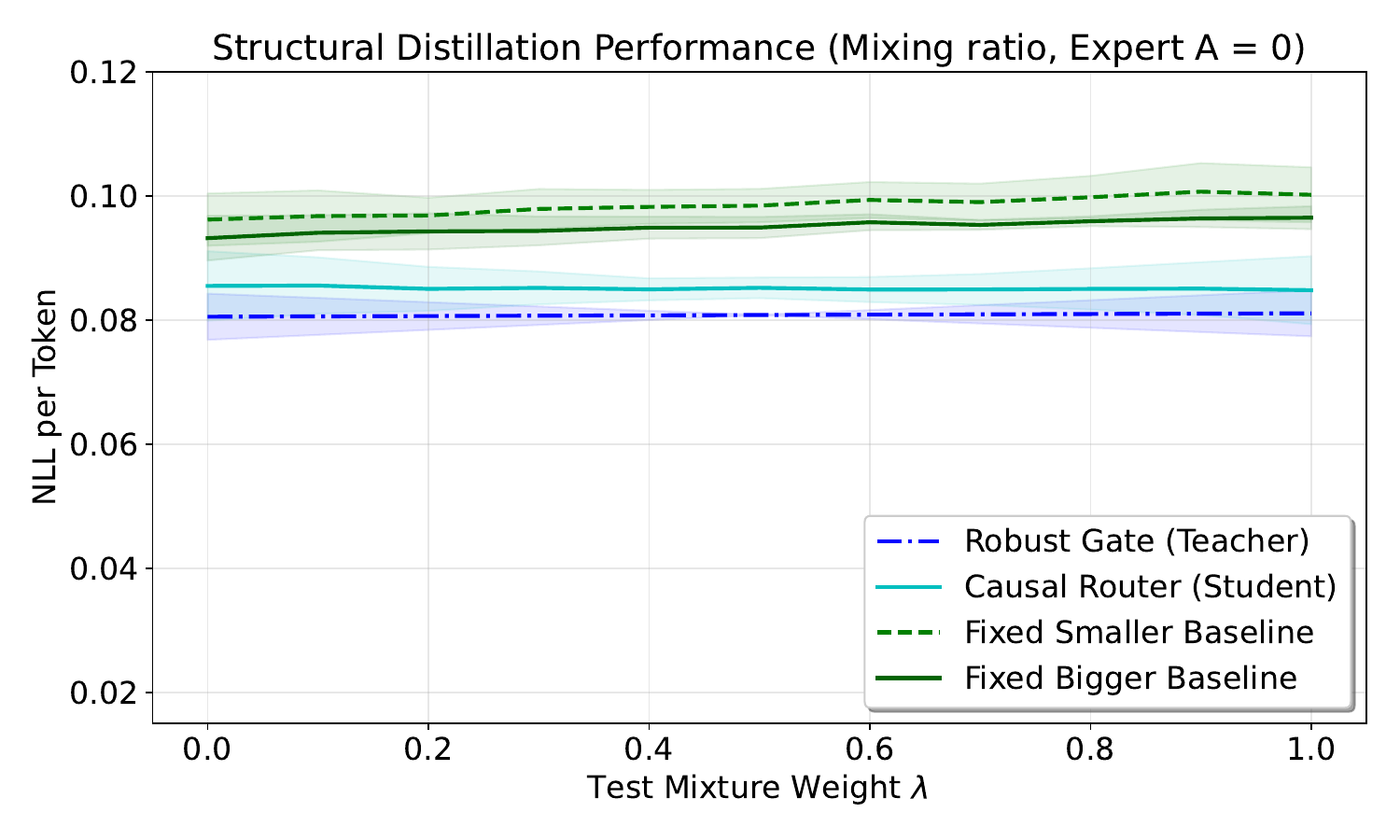}
  \vskip -5pt
    \caption{\textbf{Real-world robustness.} Relative test error (vs. Index 1 baseline) across sample sizes 10K--30K, averaged over 5 runs. The Robust Gate (orange) shows greater stability than the Retrained Model (blue).}
    \label{fig:causalgate}
    \vskip 0pt
\end{wrapfigure}
}

We also evaluate the Structured Distillation algorithm
(Section~\ref{sec:distillation}), which distills the robust mixture
$\pi_{g^*}$ into a \emph{Causal Router} $\gamma_\phi$.  We sampled
5,000 sequences from $\pi_{g^*}$ using rejection sampling. About 3\%
of these sequences contained inversions (switching between rules),
reflecting the non-trivial nature of the task. We trained a
causal Transformer router with $d=10$ (matching
the parameter count of the Larger Fixed Retrained baseline) on these
sequences. 

\begin{figure}[h]
    \centering
    \includegraphics[scale = .3]{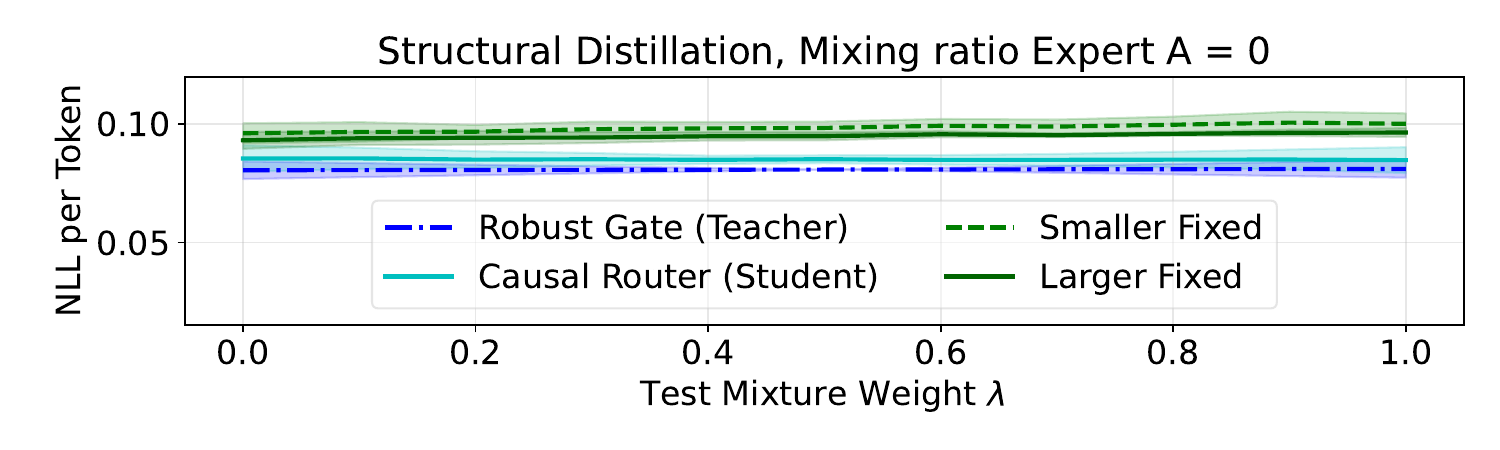}
    \vspace{-10pt}
    \caption{\textbf{Structured Distillation.} The Causal Router
      (cyan) effectively recovers the performance of the non-causal
      Robust Gate (blue),  outperforming the Larger Fixed
      baseline (green).}
    \label{fig:causalgate}
\end{figure}
As shown in Figure~\ref{fig:causalgate}, the distilled Causal Router
(cyan) performs nearly identically to the optimal non-causal Robust
Gate (blue), losing minimal performance despite the architectural
constraint.
 It significantly outperforms the monolithic Larger Fixed
model (green), demonstrating that we can transfer the robustness
benefits into an efficient autoregressive form.

\subsection{Modularity for Real-World Data and LLM Scaling}
\label{sec:experiments-real}

Finally, we experimented with three distinct HuggingFace datasets to
test real-world transfer:  (1) \texttt{\small{wikimedia/wikipedia}}:
High-quality factual prose; (2) \texttt{\small{bigcode/the-stack-smol}}:
Source code across 30+ languages; (3) \texttt{\small{fineweb-edu}}: Filtered
high-quality educational web content.
While Wikipedia and FineWeb share domain characteristics, the Code
dataset represents a significant distribution shift. Merging code
(strict syntax, high repetition) with natural language (ambiguous,
fluid) is known to cause negative transfer in monolithic models.

\textbf{Small-Scale Experiments.} We trained 3 experts \ignore{(each a 2-layer Transformer, 6.5M params)} and a
lightweight gate\ignore{(2-layer, 290K params)} with a  combined 
 $\sim$20M parameters. We compared this against a monolithic
Retrained model of matching size (19.8M, 3 layers). To address standard architectural baselines, we also evaluated a token-level Mixture-of-Experts (MoE) trained from scratch on the uniform mixture of datasets, using shared attention layers and a token-level routing layer over three expert Feed-Forward Networks, guided by a standard load-balancing loss, matching the capacity ($\sim$20M) of the combined modular framework. For additional experimental details, see Appendix~\ref{app:implementation_details_rw}.

\begin{table}[h]
  \vskip 0.02in
    \centering
    \caption{NLL per token on Real-World Data.}
    \vskip -.1in
    \label{tab:real_data}
    \resizebox{\columnwidth}{!}{
    \begin{tabular}{lcc}
        \toprule
        \textbf{Model} & \textbf{NLL (Diff seed)} & \textbf{NLL (Diff data)} \\ 
        \midrule
        Wiki Expert & 5.122 $\pm$ 0.005 & 5.118 $\pm$ 0.011 \\
        Code Expert & 4.722 $\pm$ 0.045 & 5.267 $\pm$ 0.788 \\
        FineWeb Expert & 5.623 $\pm$ 0.004 & 5.623 $\pm$ 0.006 \\
        \midrule
        Retrained Model & 5.133 $\pm$ 0.010 & 5.306 $\pm$ 0.257 \\
        Standard MoE (Scratch) & 5.080 $\pm$ 0.009 & 5.245 $\pm$ 0.269 \\
      \textbf{Gate Model (Ours)} & \textbf{4.994} $\pm$ 0.013
                                                  & \textbf{5.087} $\pm$ 0.141 \\
        \bottomrule
    \end{tabular}
  }
\end{table}

Table~\ref{tab:real_data} details the results (averages over 5 runs) across two settings:
varying initialization seeds and varying data splits. In both cases,
the Gate Model outperforms the Retrained model and the Standard MoE model. As theoretically predicted in Section~\ref{subsec:comparison_monolithic}, joint training over conflicting domains induces gradient interference even in standard MoE architectures due to their shared representation layers and joint optimization. In contrast, our frozen-expert routing approach effectively mitigated this conflict.

Regarding the domain-specific variance, the massive variance of 0.788 stemming from using different training data splits, the ``Diff data'' column, corresponds to the Code Expert (while the FineWeb Expert exhibited a very low variance of 0.006). This is empirically expected: code datasets are highly heterogeneous, with significant structural and syntactic differences across varying programming languages. Consequently, evaluating the Code Expert across different unseen splits naturally leads to a more volatile NLL. Crucially, despite this inherent instability in the monolithic code domain, our Gate Model robustly gates inputs across experts, smoothing out these domain-specific instabilities and dropping the total variance substantially to 0.141. Keeping the training data fixed but varying the seed for the model training, the ``Diff seed'' column, exhibits much smaller variations across models.

This result is
significant: even with a small-scale experiment (20M parameters), the
modular approach navigates the conflict between code and natural
language better than a monolithic model trained on the union. This
validates our hypothesis that structural modularity acts as a
regularizer against negative transfer in real-world deployments. Note that there is no guarantee that a modular model will always outperform a model retrained on all the data. As stated in Theorem~\ref{thm:jsd_gap}, the JSD gap governs the relative performance.

\textbf{Scaling to Massive LLMs (1B+ parameters).} To confirm that our theoretical framework scales to modern massive LLMs, we scaled our experiments to an ensemble of massive experts ($\sim$1.6B parameters each, GPT-2 XL-style configuration with 48 layers and 1600 embedding dimension) pre-trained on different domains. To ensure a strictly fair comparison, the retrained monolithic baseline model was symmetrically scaled to a massive 4.9B parameter architecture (48 layers, 2832 embedding dimension) to exactly match the combined capacity of the three experts plus the gate model. The Robust Gate, which is highly parameter-efficient at only $\sim$2.5M parameters, successfully learned to route across these massive experts without optimization instability, achieving a test NLL of 4.734 $\pm$ 0.006 compared to the monolithic baseline's 4.829 $\pm$ 0.012. This confirms that our framework successfully scales to massive LLMs, decoupling geometric interference and yielding a significant performance improvement of 0.095.

We further examined the stability of the models under varying test set compositions, see Table~\ref{tb:lambda} in Appendix~\ref{app:robustness_details}. The Gate model consistently exhibits lower loss and hence greater robustness to distribution shifts compared to the monolithic baseline. Details are provided in the appendix.

\section{Conclusion}

\ignore{We presented a game-theoretic framework for robust, modular generative
modeling. By optimizing over the space of normalized gates $\sGone$,
we derived a robust gate $g^*$ that guarantees a stable upper bound on
the worst-case risk. Our theoretical analysis identifies a structural
phase transition: while monolithic models incur interference costs
proportional to task diversity (Jensen-Shannon Divergence), our
modular approach decouples tasks, leveraging diversity to cancel
capacity costs. We further proved that modularity acts as a ``safe''
architectural prior, matching optimal retraining in convex regimes
while providing strictly superior guarantees for conflicting
distributions. Finally, we introduced a scalable Stochastic
Primal-Dual algorithm and Structural Distillation, empirically
validating their ability to mitigate gradient conflict on synthetic
and real-world datasets.
}

We presented a game-theoretic framework for robust generative modeling, deriving a gate $g^*$ with bounded worst-case risk. Our analysis identifies a phase transition: while monolithic models suffer interference proportional to the Jensen-Shannon Divergence, modularity decouples tasks to cancel capacity costs. We proved modularity acts as a ``safe'' prior, matching optimal retraining in convex regimes while superior for conflicting distributions. Finally, we validated our scalable Primal-Dual algorithm and Structural Distillation on synthetic and real-world datasets.

\ignore{
\newpage
\section{Experiments}
\label{sec:experiments}

\subsection{Empirical comparison of our gate and retrained models}
\label{sec:empirical-comparison}

In this section, we report the results of experiments with synthetic
data for our gate model and the retrained models. Before presenting
the protocol, we first discuss questions of fairness for such an
experimental design.

\paragraph{Fair Comparison and Experimental Design.}
Establishing a strictly fair comparison between a modular architecture
composed of frozen experts and a monolithic model trained from scratch
is nuanced, as standard metrics like parameter count or training FLOPs
do not capture the structural differences in optimization. While a
gated model may possess a larger \emph{total} parameter count (summing
across all experts), its effective hypothesis space is significantly
constrained: it can only interpolate within the convex hull of the
pre-trained experts' outputs. The retrained monolithic baseline,
conversely, enjoys a theoretical advantage in flexibility, as all its
parameters are trainable, and it can move freely in the weight space
to minimize the loss on the aggregate distribution. A fair comparison,
therefore, should be viewed through the lens of \emph{optimization
  dynamics} and \emph{sample efficiency}.
While the monolithic model has higher theoretical capacity, accessing
that capacity requires a volume of data proportional to its size to
avoid overfitting. The modular gate, having lower complexity (as
detailed in Appendix~\ref{app:generalization}), can learn robust
routing policies even in data-limited regimes where a full model might
fail to generalize or suffer from catastrophic interference.
Our results demonstrate that despite the monolithic model's advantages
in information access (even when observing the specific mixture
$\lambda$), it often fails to match the modular solution in regimes of
high distribution shift.

This performance gap is not merely a function of model size, but a
consequence of \emph{gradient conflict} (or parameter
interference). When source distributions contain conflicting signals,
such as distinct tasks or contradictory rules, a single model trained
on the aggregate objective suffers from destructive gradient
interference, leading to a high-entropy \emph{compromise} that
underperforms on the individual components. The modular architecture
structurally orthogonalizes these conflicts by isolating task-specific
knowledge in frozen experts. While a monolithic model could
theoretically outperform a modular one in scenarios where tasks are
highly congruent (positive transfer), in the context of \emph{robust}
generative modeling, where the goal is to guarantee performance
against worst-case, potentially conflicting mixtures, the modular
approach provides a stability that direct retraining cannot reliably
achieve, regardless of the baseline's scale.

\paragraph{Experimental Protocol.}
To empirically validate these claims and preempt criticism regarding
model capacity, we adopt an experimental setup comparing our proposed
method against four autoregressive Transformer baselines. We define a
set of $p$ experts ($p = 2$ in our experiments), each a standard
Transformer with embedding dimension $d$, trained to convergence on their
respective source domains $D_k$. These form the frozen backbone of the
modular system. We denote by $N_k$ the number of parameters of model
$k \in [p]$.

We first evaluate the \emph{Robust Gate} (Blue in figures), which
combines the $p$ frozen experts via our gate function $g$. We train
the gate using the exact Primal-Dual Algorithm
(Algorithm~\ref{alg:primal_dual_stochastic}) on the union of the
datasets $D_k$, using importance sampling to estimate the partition
function $Z$. The total number of parameters is
$N_{\text{gate}} = 1.24 \times \sum_{k = 1}^p N_k$. Since the gate
model inherits $\sum_{k = 1}^p N_k$ frozen parameters from the base
experts, it reserves only $24\%$ additional capacity for the $g$
function.

We compare this against \emph{retrained models}, which are monolithic
models trained on the aggregate distribution $\h \sfp_\lambda$, with a
fixed value of $\lambda$ ($\lambda = 0.5$ in our experiments)
representing a $\lambda$-weighted union of the $D_k$s. We consider a
\emph{smaller-sized} retrained model with
$N_{\text{smaller-retrained}} = \sum_{k = 1}^p N_k$ parameters, and a
\emph{larger-sized} model with
$N_{\text{larger-retrained}} = 1.5 \times
\sum_{k = 1}^p N_k $. The fixed mixing-ratio distribution retrained model represents
the only realistic (static) baseline for a standard deployment, where
a single model must handle unknown future variations of the data
distribution without retraining. We include both the smaller model as a baseline and the larger models to test if
increased capacity can overcome optimization interference.

Finally, we include \emph{Oracle models} as a control. These are
single models retrained from scratch on the exact $\lambda$-mixture
distribution $\h \sfp_\lambda$ for every evaluation point
$\lambda$. This is a ``cheating'' baseline, as it requires perfect
knowledge of the test distribution. We again evaluate a smaller-sized oracle
($N_{\text{smaller-oracle}} = \sum_{k = 1}^p N_k$) and a larger-sized oracle
($N_{\text{larger-oracle}} = 1.5 \times \sum_{k=1}^p N_k$).

\paragraph{Synthetic Verification.}
We define a sequence modeling task over a vocabulary of size 100, 
$V = \{0, \dots, 99\}$ partitioned into two domains, $A$ and $B$,
which share the same support but follow contradictory deterministic
rules. Domain A follows $x_{t+1} = (x_t + 1) \pmod{100}$, while Domain
B follows $x_{t+1} = (x_t - 1) \pmod{100}$, where $x_1$ is chosen uniformly at random in $\{0, 1, \ldots, 99\}$. This setup ensures that
for any given token $x_t$, the gradients from Domain A and Domain B
are directly opposed. We evaluate performance across the full spectrum
of distribution shifts by varying the mixture weight
$\lambda \in [0, 1]$ in steps of $0.1$ in the test distribution
$p_\lambda(x) = \lambda p_A(x) + (1-\lambda) p_B(x)$. That is, $\lambda=0$ means all the test data comes from Domain B.

See Appendix~\ref{app:implementation_details} for all implementation details.

\begin{figure}[t]
    \centering
    \includegraphics[width=0.48\linewidth]{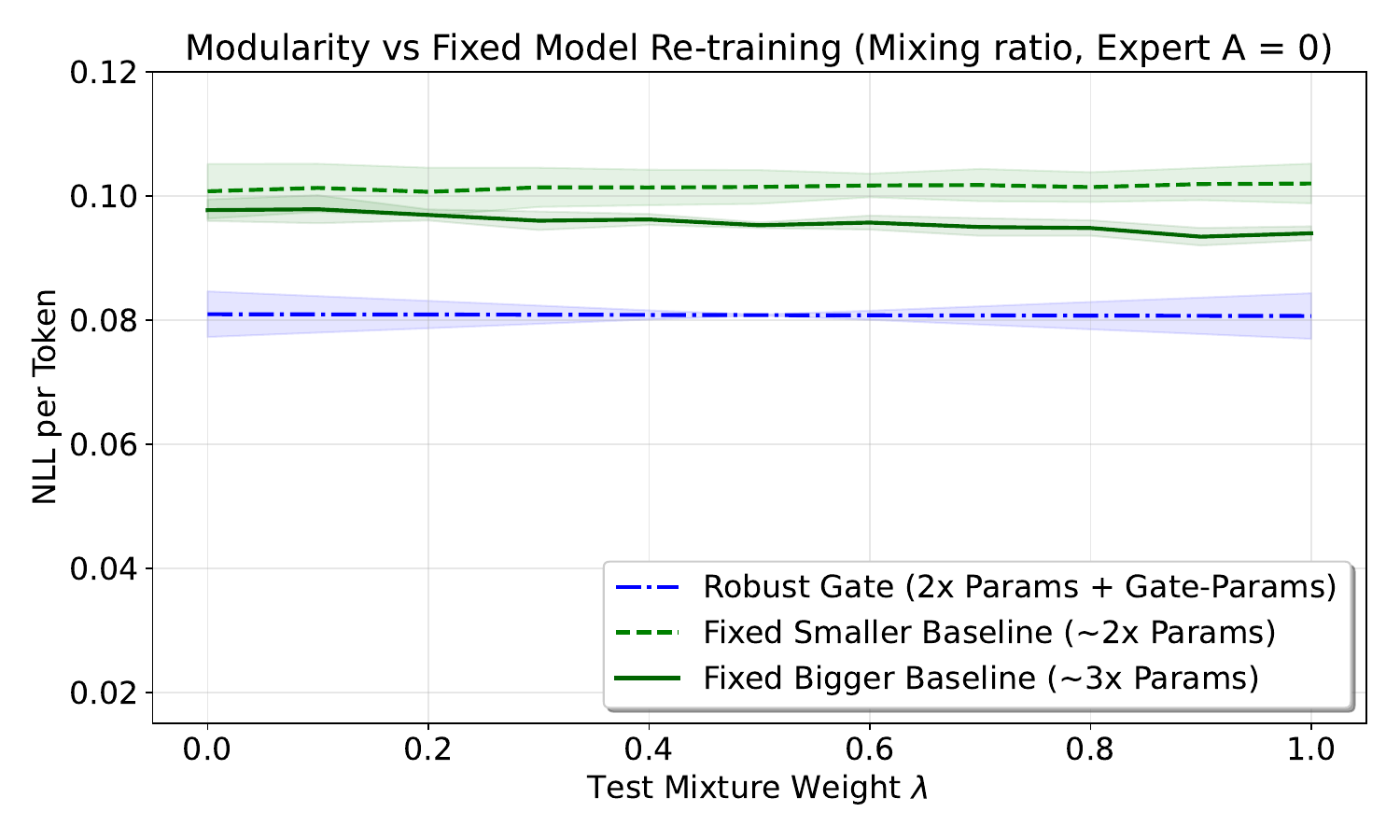}
    \includegraphics[width=0.48\linewidth]{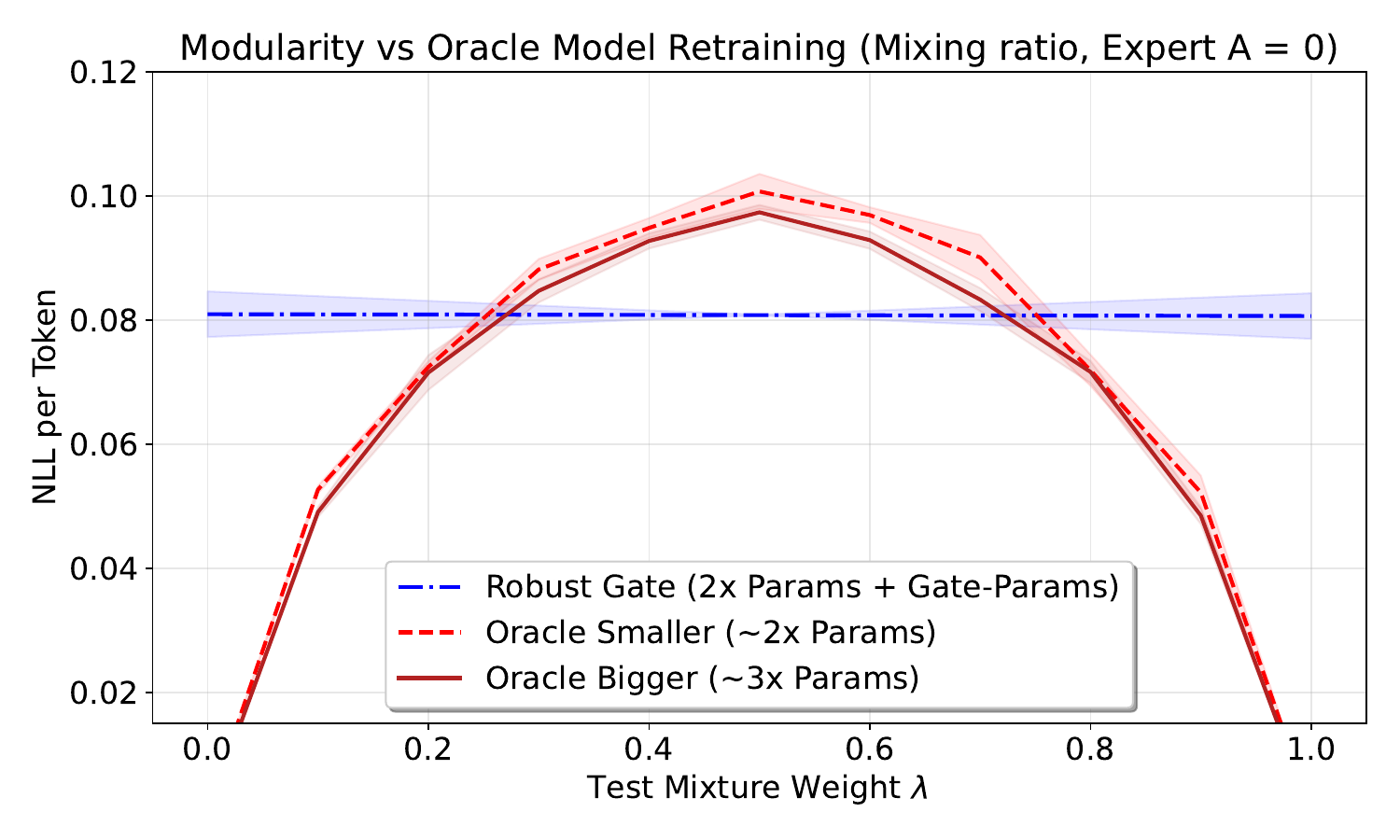}
    \caption{\textbf{Modularity overcomes gradient conflict.} Left
      figure illustrates a comparison to the Fixed Smaller and Larger
      models (in Green), while the right figure illustrates the
      comparison to the Oracle models (in Red). Results are
      illustrated with lines for the mean values over 5 runs and
      standard deviations indicated with shaded regions. The Robust
      Gate (blue) maintains consistently low loss across all mixture
      weights. The Fixed models share the same consistent behavior but
      both at significantly higher loss values. The Oracle models in
      the right figure naturally obtains a better loss in the skewed
      distribution regions ($\lambda < 0.3$ and $ \lambda > 0.7$), but
      both the Smaller (dashed) and Larger (solid) Oracles suffer from
      interference in the high-entropy region ($\lambda \approx 0.5$),
      forming a concave error curve. Remarkably, the modular system
      outperforms the monolithic Larger Oracle in this mixed regime
      despite having a significantly smaller total parameter count.}
    \label{fig:capacity_gap}
\end{figure}

\paragraph{Empirical Results.}
Figure~\ref{fig:capacity_gap} illustrates the NLL per token across the
mixture range. For all figures we provide mean values over 5 runs and
indicate the standard deviation with shaded regions. The left figure
illustrates a comparison to the \emph{Realistic Baseline} Fixed
models, while the right figure shows a comparison to the Oracle
models. In the left figure, both the Robust Gate and the Fixed models
demonstrate a consistent loss across the test distributions obtained
by varying the $\lambda$-mixing parameter of the test data, where
$\lambda=0$ means all the test data comes from Domain B. However, the
Robust Gate does so at a much lower loss than the Fixed models.
Having been trained on conflicting data ($\lambda=0.5$), the Fixed
model learns a high-entropy ``compromise'' policy that fails to
specialize for either domain. This confirms that realistic empirical
risk minimization on mixed data yields suboptimal density estimation
when the underlying tasks are disjoint and contradictory.

For the comparison to the Oracle models, we observe three distinct
regimes. First, in the \emph{Interference Gap}
($\lambda \in [0.3, 0.7]$), where both domains are active, the Robust
Gate significantly outperforms both ``cheating'' Oracles. Notably, the
Larger Oracle (solid line) fails to match the Gate's performance
despite having more parameters. The concave shape of the Oracle curves
indicates that gradient conflict prevents monolithic models from
simultaneously mastering contradictory rules, whereas the modular
architecture avoids this by orthogonalizing the tasks. Second, at the
\emph{extremes} ($\lambda \approx 0$ or $1$), the task collapses to a
single domain. Here, the Oracles become perfect specialists and
naturally outperform the Gate.  Overall, the experiment confirms that
modularity is not merely a substitute for model capacity; even when
matched for parameters, the modular system achieves lower loss in
mixed-domain settings by structurally preventing parameter
interference.

We further carried out experiments where the distributions of our two
experts A and B were less contradictory. We did so by mixing up Domain
A to have a fraction of Domain B. We experimented with fractions of
zero (the just described experiment with 'clean' distributions), a
fraction of 0.5 and a fraction of 0.75, at which point Domain A only
contains 25\% of its original data.

The two additional experiments are illustrated in
Figure~\ref{fig:capacity_gap-0.5} and
Figure~\ref{fig:capacity_gap-0.75}. The Oracle models maintain their
advantage for very skewed test distributions, but the Robust Gate
model demonstrates the best performance for most test distributions,
despite having fewer total parameters than the larger-sized models.

\subsection{Experiments with structural distillation}

We also report results with the Structured Distillation algorithm
described in Section~\ref{sec:structural-transfer}. This method
distills the robust, non-causal mixture $\pi_{g^*}$ into a
\emph{Causal Router} $\gamma_\phi$ parameterized by $\phi$, that
learns to dynamically select experts based solely on past tokens. The
experiment is carried out by first generating sequences from
$\pi_{g^*}$ and subsequently train the Causal Router on these
sequences.

The Causal Router was implemented with the same transformer
architecture as already described. For embedding dimension, we used
$d=10$, which is larger than the original gate, but in combination
with the two experts it gives a total number of parameters matching
the larger retrained model. We sampled $5,000$ sequences from
$\pi_{g^*}$ using the rejection sampling techniques described in
Algorithm~\ref{alg:rejection-sampling}. Learning rate and number of
training steps are as for the baseline models. Note that not all
sampled sequences will necessarily follow Domain A or Domain B. We
observed that about 3\% of the sampled sequences had a single or two
inversions from being monotonically increasing or decreasing.

\begin{figure}[t]
    \centering
    \includegraphics[width=0.75\linewidth]{experiment_results-r1-distill.pdf}
    \caption{\textbf{Structured distillation.} Experimental data
      comparing the Robust Gate (blue) with the Causal Gate (cyan) for
      experts trained on pure Domain A and Domain B distributions. The
      Causal Gate has equivalent number of parameters to the Larger
      Fixed model (solid green), yet it outperforms this model across
      all test distributions as it only loses little performance as
      compared to the Robust Gate.}
    \label{fig:causalgate}
\end{figure}

The results are demonstrated in Figure~\ref{fig:causalgate}. In this
example with experts trained on pure Domain A or Domain B
distributions, it can be seen that the Structured Distillation model
performs just slightly worse than the Robust Gate. Despite having an
equivalent number of parameters to the Larger Fixed model, it clearly
outperforms that one, demonstrating the power of our proposed
algorithms.

\subsection{Modularity for Real-Life Data}
We finally conducted experiments with three real-life dataset from
Huggingface: {\tt wikimedia/wikipedia (``20231101.en", ``train")}: A
cleaned, processed dump of English Wikipedia articles from November 1,
2023, providing high-quality, structured factual prose for general
knowledge pre-training; {\tt bigcode/the-stack-smol(``train")}: a
diverse, manageable subset of ``The Stack" containing source code
across 30+ programming languages, specifically curated for training
and benchmarking code-focused large language models; and {\tt
  HuggingFaceFW/fineweb-edu (``sample-10BT", ``train")}: A 10-billion
token sample of web data filtered using an LLM-based classifier to
prioritize high-quality educational content over general web crawl
noise. While the wikipedia and fineweb share some domain
characteristics, the code data is significantly different.

We trained 3 experts each on 80K sequences of length 128 on these
dataset using the {\tt gpt2} tokenizer. The experts were chosen as
two-layer transformers with 2 heads and an embedding dimension of 256,
providing them with about 6.5M parameters. The gate model was
implemented as a 2-headed 2-layered transformer with an internal
dimension of 256. It has just about 290K parameters, making the
combined Robust Gate of size 20M parameters. In comparison, we trained
a 19.8M parameter model, a transformer with 4 heads, 3 layers and an
internal dimension of 184. The Gate model and the retrained model were
trained on the union of the dataset. All models were tested on a hold
out sample of 20K sequences.

The learning rate for the AdamW optimizer was set to 1e-4 for the
experts, the gate, and the retrained model. For the gate the
additional learning parameters were set to $\eta_{\lambda}$ = 0.05,
$\eta_{\mu}$ = 0.02 and $\alpha = 0.9$.

\begin{table}[ht]
    \centering
    \caption{Performance Metrics Across Specialized Experts, Gate
      Model, and Retrained Model. We report losses as averages over 5
      runs carried out with different initialization seeds and same
      data, as well as same seed and different data. }
    \label{tab:real_data}
    \begin{tabular}{lcccc}
        \toprule
        \textbf{Model } & \textbf{NLL per token}   & \textbf{NLL per token} \\ 
         & Diff seed  & Diff data \\ 
        \midrule
        Wiki Expert      & 5.122 $\pm$ 0.005 & 5.118 $\pm$ 0.011 \\
        Code Expert      & 4.722  $\pm$ 0.045 & 5.267  $\pm$ 0.788 \\
        FineWeb Expert   & 5.623  $\pm$ 0.004 & 5.623  $\pm$ 0.006 \\
        Retrained Model & 5.133  $\pm$ 0.010 & 5.306  $\pm$ 0.257 \\
        Gate Model & 4.994  $\pm$ 0.013 & 5.087  $\pm$ 0.141 \\
        \bottomrule
    \end{tabular}
\end{table}
Table~\ref{tab:real_data} details our results. We experimented with
both different initialization of the models, keeping the training and
validation data constant, as well as with varying the data. The
results are averages over 5 runs. We observe that in these small scale
experiments the Gate Model outperforms the Retrained model. While this
being a testament to the strength of our formulation we caution that
the performance of a gate model relative to a retrained model will
always depend on the characteristics of the expert domains. The more
distinct, the more likely the Gate Model will work better than a
Retrained model that could suffer from conflicting gradients as
discussed in Section~\ref{sec:empirical-comparison}.  }

\clearpage
\newpage

\section*{Impact Statement}

This paper presents work whose goal is to advance the field of Machine
Learning. There are many potential societal consequences of our work, none
which we feel must be specifically highlighted here.

\bibliography{mod}
\bibliographystyle{icml2026}

\newpage
\appendix
\onecolumn

\renewcommand{\contentsname}{Contents of Appendix}
\tableofcontents
\addtocontents{toc}{\protect\setcounter{tocdepth}{3}} 
\clearpage

\section{Extended Content}

\subsection{Extended Related Work}
\label{app:related-work}

Our proposed framework for robust modularity intersects with several
active areas of research, including model composition, theoretical
routing, and the emerging economics of modular AI ecosystems.

\paragraph{Robustness and Multiple-Source Adaptation.}  Our approach
is rooted in the theory of multiple-source domain adaptation (MSA)
\citep*{MansourMohriRostamizadeh2008, MansourMohriRostamizadeh2012,
HoffmanMohriZhang2018, MohriHoffmanZhang2021, HoffmanMohriZhang2022,
CortesMohriSureshZhang2021}, which seeks to learn predictors robust to
mixtures of source domains. Recently,
\citet{DannMansourMarinovMohri2025} applied similar minimax principles
to the problem of model routing. Their work addresses
\emph{value-based} routing, where the goal is to maximize a scalar
reward (linear regret). Our work can be viewed as the generative
counterpart to this line of research. By moving from linear rewards to
the standard KL divergence objective which is needed for tackling
generative modeling, we face a fundamentally different mathematical
challenge: the resulting optimization problem is convex but
non-linear, and crucially, requires enforcing a global normalization
constraint ($Z_g=1$) on the mixture model. This necessitates the
constrained minimax analysis developed in this paper, distinguishing
our contribution from the unconstrained or locally-constrained
optimization found in value-based routing or standard MSA.
Our problem formulation also shares historical roots with the
\emph{Meta-Pi} network \citep{Waibel}, which combined expert outputs
via a gating mechanism. However, their primary goal was robustness for
the \emph{average} dataset, whereas we target the \emph{worst-case}
mixture. Furthermore, while they demonstrated source-independence
empirically, we provide rigorous existence and convergence guarantees.

\paragraph{Mixtures, Merging, and Composition.}
The concept of combining models has a rich history. \emph{Mixture of
  Experts (MoE)} \citep{Jacobs1991Adaptive, Fedus2021Switch} trains a
routing mechanism jointly with specialized sub-networks. In contrast,
our framework operates on \emph{frozen, pre-trained} experts,
decoupling the routing learning from the generative training. Another
approach is \emph{Model Merging} or \emph{Model Soups}
\citep{Wortsman2022}, which averages weights to find a
single high-performing static model. Our approach differs by
maintaining the experts as discrete entities and using an
input-dependent gate $g(x, \cdot)$ to adapt to distribution shifts
dynamically.

Recent work explores deeper architectural integration.  For example,
\citet{bansal2024llm} introduce \emph{Composition to Augment Language
  Models (CALM)}, which leverages cross-attention to merge
representations from a base LLM and specialized models, expanding
capabilities without full retraining. Distinct from a symmetric
modular view, this method designates one model as an \emph{anchor} and
the other as an \emph{augmenting} counterpart. It is also not clear
how this construction scales beyond two models, as it may require a
quadratic number of pairwise cross-attention connections.
Complementary to this is \emph{model stitching} \citep{jiang2024look},
where pre-trained blocks from disparate models, such as BERT and GPT,
are integrated directly. Similarly, recent frameworks like StitchLLM
\citep{pan2024stitchllm} dynamically route requests across stitched
blocks—for instance, feeding the lower layers of one model into the
upper layers of another—to optimize the trade-off between latency and
accuracy.
Crucially, neither approach provides theoretical analysis or
guarantees for the resulting composed model. In contrast, our approach
preserves experts as black boxes and offers strong theoretical
guarantees for a gating mechanism robust to worst-case distribution
mixtures.

\paragraph{Theoretical Routing and Learning to Defer.}
Our problem shares conceptual similarities with routing in
\emph{learning to defer}, where a learner chooses between predicting
or deferring to experts. Foundational work by
\citet*{CortesDeSalvoMohri2016, CortesDeSalvoMohri2024} established
the theory for learning with rejection in binary classification. This
line of work was significantly expanded to multi-class settings and
predictor-rejector frameworks by \citet{MaoMohriZhong2023c,MaoMohriZhong2024a, MaoMohriZhong2024b,MaoMohriZhong2024d,mao2024regression,MaoMohriZhong2025mastering,mao2025theory,desalvo2025budgeted,CortesMaoMohriZhong2026defid}, supported by \emph{$\sH$-consistency} guarantees \citep{awasthi2022h,awasthi2022multi,mao2023cross,MaoMohriZhong2023structured,MaoMohriZhong2023ranking,MaoMohriZhong2023characterization,MaoMohriZhong2023rankingabs,mao2024universal,mao2024h,mao2024multi,mao2024realizable,MohriAndorChoiCollinsMaoZhong2024learning,cortes2024cardinality,cortes2025balancing,cortes2025improved,mao2025enhanced,MaoMohriZhong2025principled,zhong2025fundamental,MohriZhong2026rllm,mohri2025beyond,MohriZhong2026slin,mohri2026principled,mohri2026generalized}. Our
approach diverges from this literature in three key aspects. First,
unlike standard routing which performs a hard selection of a single
expert, our gated framework induces a distribution over base
models. Second, rather than optimizing for average-case performance,
we address \emph{robustness} against adversarial distribution
mixtures. Finally, while computational cost is a primary consideration
in standard model routing, our current framework focuses purely on
statistical performance guarantees.

\paragraph{Modular Marketplaces and Ecosystems.}
Beyond functional integration, the rise of LLMs has spurred interest
in the economic dynamics of modular
systems. \citet{bhawalkar2025equilibria} analyze ``modular
marketplaces'' from a game-theoretic perspective, focusing on price
equilibria where module owners act strategically to maximize
profit. Broader analyses of the AI ecosystem's evolutionary dynamics
\citep{khansari2021evolutionary} further highlight how the interplay
between large upstream providers (e.g., cloud and foundation models)
and specialized downstream modules is fundamentally reshaping
industrial organization. Our work complements these economic and
ecosystem perspectives by providing the \textit{statistical}
equilibria---ensuring that the aggregated output of these traded
modules remains robust regardless of how they are combined.

\paragraph{Generalization Guarantees.}
We provide rigorous generalization bounds for the modular framework
using vector-valued Rademacher complexity
(Appendix~\ref{app:generalization_gap}). We show that the sample
complexity of learning the robust gate typically admits only a mild
logarithmic dependency on the \emph{number} of experts (provided they
are diverse), theoretically confirming that modularity acts as a
powerful regularizer against overfitting.

\subsection{Problem Formulation Details}
\label{sec:problem-formulation}

Given the setup in \cref{sec:setup}, our objective is to find a single
gating function $g \in \sGone$ such that the resulting model $\pi_g$
is a high-quality approximation of any data mixture $\h
\sfp_\lambda$. We use the relative entropy, $\KL$ divergence, as our
measure of dissimilarity. This leads to two primary formulations.
First, as a preliminary question, we can ask what performance is
achievable for a \emph{single, fixed} mixture $\lambda$. This
corresponds to the standard convex optimization problem:
\[
\min_{g \in \sGone} \KL(\h \sfp_\lambda \parallel \pi_g).
\]
Finding a solution to this problem would provide an optimal gate for a
known, static test distribution.
Second, and more central to our goal of modularity and robustness, we
ask for a \emph{single} gate $g^*$ that performs well against the
\emph{worst-case} mixture $\lambda \in \Delta$. This is a robust
optimization problem that can be formulated as a minimax game:
\[
\min_{g \in \sGone} \max_{\lambda \in \Delta([1, p])}
\KL(\h \sfp_\lambda \parallel \pi_g).
\]
The solution $g^*$ to this game would be a truly robust model,
providing a uniform performance guarantee across the entire ambiguity
set of possible data mixtures.

\noindent Our use of the relative entropy, rather than the
cross-entropy loss, is essential.  While minimizing KL is equivalent
to minimizing cross-entropy for a \emph{fixed} target distribution
(since entropy is constant), this equivalence breaks down in the
robust setting.  The entropy term $H(\h \sfp_\lambda)$ varies with the
adversarial choice of $\lambda$.  Consequently, minimizing the
worst-case cross-entropy
$\max_\lambda \E_{\h \sfp_\lambda}[-\log \pi]$ is not equivalent to
minimizing the worst-case divergence
$\max_\lambda \KL(\h \sfp_\lambda \| \pi)$.  We target the latter to
ensure the model approximates the distribution $\h \sfp_\lambda$
itself, rather than merely covering its support.

\ignore{
Our use of the relative entropy, rather than the cross-entropy loss, is
essential. The two metrics are closely
related:
\[
\KL(\sfp_\lambda \parallel \pi) = \underbrace{-\E_{x \sim
\sfp_\lambda}\bracket*{\log \pi(x)}}_{\text{Cross-entropy}} -
\underbrace{H(\sfp_\lambda)}_{\text{Entropy of } \sfp_\lambda}.
\]
When optimizing a model $\pi$ for a single, fixed distribution
$\sfp_\lambda$, the entropy term $H(\sfp_\lambda)$ is a constant. As a
result, minimizing the $\KL$ divergence with respect to $\pi$ is
equivalent to minimizing the cross-entropy. However, this equivalence
breaks down in our setting: We are not optimizing for one
$\sfp_\lambda$; we seek a single, robust solution $\pi$ that must
achieve a small $\KL$ divergence simultaneously with respect to
all different distributions $\sfp_\lambda$.
}

Our work aims to answer several fundamental theoretical and
algorithmic questions arising from these formulations:

\begin{enumerate}

\item Fixed-mixture performance: For a fixed mixture $\lambda$, does
  there exist a gated solution $\pi_g$ with small divergence
  $\KL(\h \sfp_\lambda \parallel \pi_g)$? More specifically, how does
  this optimal error
  $\min_{g \in \sGone} \KL(\h \sfp_\lambda \parallel \pi_g)$ compare
  to the baseline errors $\e_k$?
\item Robust guarantee: Does there exist a \emph{robust} gated
  solution $\pi_{g^*}$ that achieves small divergence for \emph{all}
  $\lambda \in \Delta$? This is a question of existence for the
  minimax problem
  $\min_g \max_\lambda \KL(\h \sfp_\lambda \parallel \pi_g)$.
\item Construction and bounds: If such a robust solution exists, how 
  can we construct it algorithmically? What explicit, non-asymptotic
  guarantees can we provide for its worst-case performance,
  $\max_{\lambda} \KL(\h \sfp_\lambda \parallel \pi_{g^*})$, in terms
  of the individual expert guarantees $\e_k$?
\item Comparison to aggregate training: For a fixed mixture $\lambda$, how does the performance of our
  modular solution $\pi_g$ compare to that of a model $\h \pi_\lambda$
  trained from scratch on the aggregate data
  $\h \sfp_\lambda = \sum_{k = 1}^p \lambda_k \h \sfp_k$? Understanding this trade-off is key to justifying the modular
  approach over the expensive, non-adaptive retraining baseline.
\end{enumerate}
In \cref{sec:theoretical-analysis}, we will address these questions,
starting with the existence and bounds for the fixed and robust
solutions.

\section{Fundamental Theory (Existence and Generalization)}

\subsection{Properties of the Gate Space}
\label{app:Gone-properties}

\GoneProperties*

\begin{proof}
  $\sGone$ is non-empty since for any $\lambda \in \Delta([1, p])$ it
  contains the constant gate $g_\lambda(x, k) = \lambda_k$.  $\sGone$
  is convex since $\sG$ is convex, as a product of simplices, and
  since the affine equality $Z_g = 1$ is preserved by convex
  combinations.  The base family
  $\sG = \prod_{x \in \sX_0} \Delta([1, p])$ is compact since each
  simplex $\Delta([1, p])$ is compact and the product of compact sets
  (over the finite support $\sX_0$, or even countable sets) is compact
  by Tychonoff's theorem. The constraint function
  $g \mapsto Z_g = \sum_{x, k} g(x, k) \h\pi_k(x)$ is continuous since
  linear. The set $\sGone = \sG \cap \curl*{g \colon Z_g = 1}$ is the
  intersection of a compact set, $\sG$, and a closed set, the level
  set $\curl*{g \colon Z_g - 1 = 0}$ of a continuous
  function. Therefore, $\sGone$ is closed. Since it is a closed subset
  of a compact set, it is also compact.
\end{proof}

\subsection{Fixed-Mixture Analysis}

\subsubsection{Fixed-Mixture Guarantee: Proof of
  Proposition~\ref{prop:fixed-mixture}}
\label{app:fixed-mixture}

\FixedMixture*

\begin{proof}
The result follows directly from the joint convexity of the KL divergence:
\[
  \KL(\h \sfp_\lambda \parallel \pi_\lambda)
  = \KL\paren*{\sum_{k = 1}^p \lambda_k \h \sfp_k
    \parallel \sum_{k = 1}^p  \lambda_k \h \pi_k}
  \leq \sum_{k = 1}^p \lambda_k \KL(\h \sfp_k \parallel \h \pi_k)
  \leq \sum_{k = 1}^p \lambda_k \e_k.
\]
This completes the proof.
\end{proof}

Note that we could in fact search for the optimal gate function $g^*$
and thus gated solution $\pi_{g^*}$, by solving
$g^* = \argmin_{g \in \sGone} \KL(\sfp_\lambda \parallel \pi_g)$,
which is a convex optimization problem since $\sGone$ is convex,
$\pi_g$ is linear in $g$, $\KL$ is jointly convex and composition with
a linear function preserves convexity. However, the resulting
guarantee for $\pi_{g^*}$ is complex and does not admit a simple
expression, as detailed in the following section.

\subsubsection{Fixed-Mixture Optimal Solution: Characterization}
\label{app:optimal-fixed-mixture}

For a fixed mixture weights vector $\lambda \in \Delta([1, p])$, we
consider the convex optimization problem of finding the best
normalized gated model:
\[
\min_{g \in \sGone} \KL(\h \sfp_\lambda \parallel \pi_g).
\]
The following lemma characterizes the unique optimal model
$\pi_{g^*}$.

\begin{figure}[H]
\centering
\includegraphics[scale = .5]{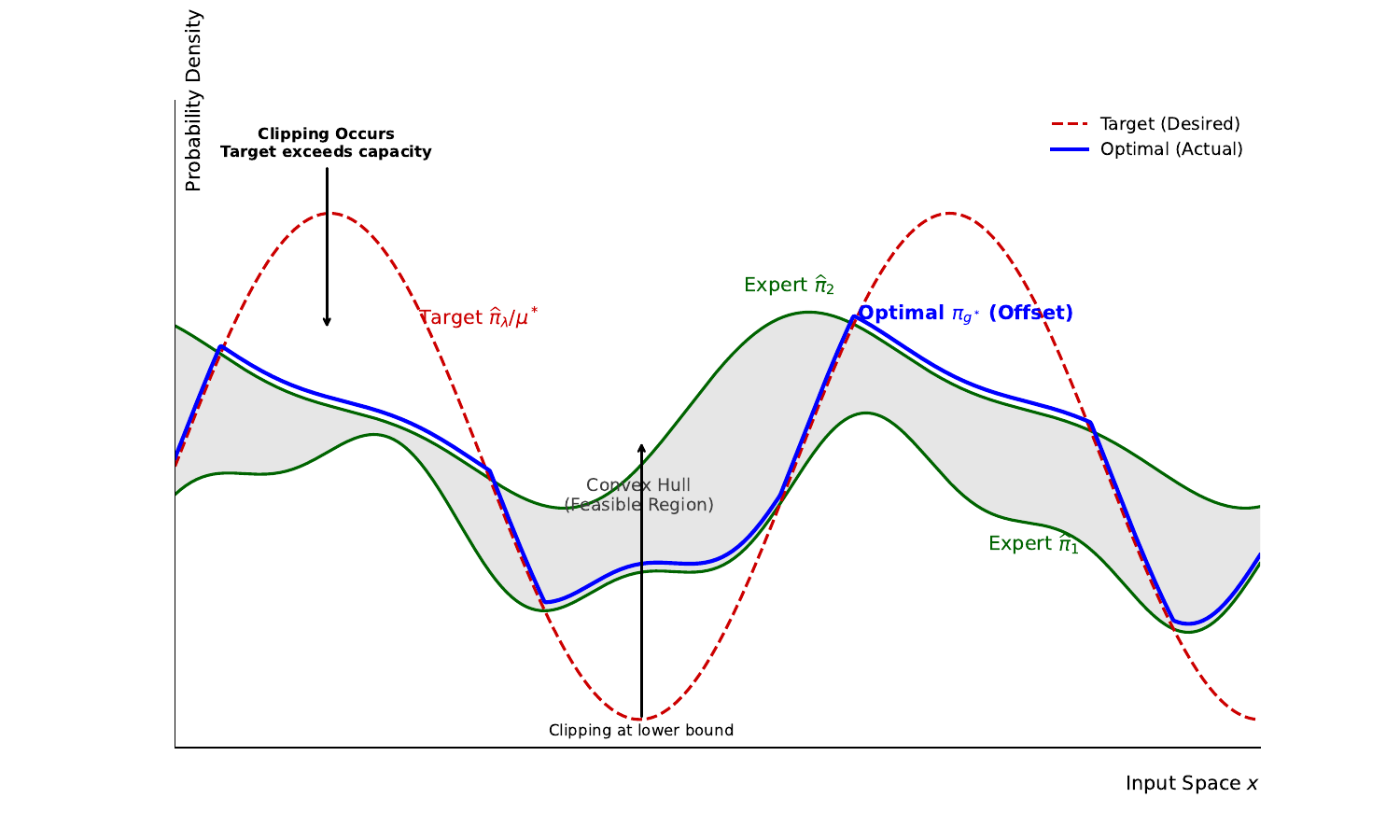}
\caption{Geometric Interpretation of
  Lemma~\ref{lemma:optimal-fixed-form}. The shaded gray
  region represents the convex hull of the experts $\h \pi_1$ and
  $\h \pi_2$. The target distribution (red dashed) exceeds this
  feasible capacity. The optimal gate $\pi_{g^*}$ (solid blue) traces
  the target where possible but is ``clipped" to the expert boundaries
  when the target falls outside the hull.}
\label{fig:clipping_geometry}
\end{figure}

\begin{lemma}[Structure of the Optimal Fixed-Mixture Model]
\label{lemma:optimal-fixed-form}
Let $\h \sfp_\lambda = \sum_k \lambda_k \h \sfp_k$. The optimal model
$\pi_{g^*}$ solving the minimization problem is unique and takes the
form of a clipped version of the mixture distribution
$\h \sfp_\lambda$. Specifically, there exists a unique scalar
$\mu^* > 0$ such that for all $x \in \sX_0$:
\[
  \pi_{g^*}(x)
  = \clip\paren*{ \frac{\h \sfp_\lambda(x)}{\mu^*},
    m(x),  M(x) },
\]
where $m(x) = \min_{k} \h \pi_k(x)$ and $M(x) = \max_{k} \h
\pi_k(x)$. The scalar $\mu^*$ is the unique solution to the
normalization equation $\sum_{x \in \sX_0} \pi_{g^*}(x) = 1$.
\end{lemma}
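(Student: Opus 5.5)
The plan is to reduce the problem over gates to a convex problem over the values $q(x) = \pi_g(x)$, and then read the clipped form off the KKT conditions.

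\textbf{Step 1: reduce to a box-constrained problem.} For each $x \in \sX_0$, the value $\pi_g(x) = \sum_k g(x,k)\h\pi_k(x)$ is a convex combination of the numbers $\h\pi_1(x),\dots,\h\pi_p(x)$. As $g(x,\cdot)$ ranges over $\Delta$, it therefore sweeps exactly the interval $[m(x), M(x)]$. Hence the set $\{\pi_g : g \in \sGone\}$ equals
\[
Q = \Bigl\{ q \in \Rset^{\sX_0} : m(x) \le q(x) \le M(x),\ \textstyle\sum_{x} q(x) = 1 \Bigr\}.
\]
This set is nonempty by Lemma~\ref{lemma:Gone-properties}, which also gives $\sum_x m(x) \le 1 \le \sum_x M(x)$. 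Since $\KL(\h\sfp_\lambda \parallel q) = -H(\h\sfp_\lambda) - \sum_x \h\sfp_\lambda(x)\log q(x)$, the problem becomes: minimize $F(q) = -\sum_x \h\sfp_\lambda(x)\log q(x)$ over the polytope $Q$. Existence of a minimizer follows from compactness of $Q$ and lower semicontinuity of $F$. The minimum is finite, because $\h\sfp_\lambda(x)>0$ forces $\h\sfp_k(x)>0$ for some $k$, and then $\h\pi_k(x)>0$ by the finiteness of $\e_k$, so $M(x)>0$ wherever $\h\sfp_\lambda(x)>0$.

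\textbf{Step 2: KKT conditions.} The constraints are linear, so KKT conditions are necessary and sufficient for this convex program; no Slater condition is needed. Introduce a multiplier $\mu$ for the equality constraint and $\alpha(x),\beta(x)\ge 0$ for the lower and upper bounds. Stationarity reads
\[
-\frac{\h\sfp_\lambda(x)}{q(x)} + \mu - \alpha(x) + \beta(x) = 0,
\]
together with complementary slackness. On points where $m(x)<q(x)<M(x)$ this gives $q(x) = \h\sfp_\lambda(x)/\mu$. At a lower-bound-active point, $\alpha(x) \ge 0$ gives $\h\sfp_\lambda(x)/\mu \le m(x)$. At an upper-bound-active point, $\beta(x) \ge 0$ gives $\h\sfp_\lambda(x)/\mu \ge M(x)$. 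Together these three cases say precisely that $q(x) = \clip(\h\sfp_\lambda(x)/\mu, m(x), M(x))$.

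This requires $\mu>0$. Suppose instead $\mu\le 0$. At any $x$ with $\h\sfp_\lambda(x)>0$, stationarity forces $\beta(x)>0$, so $q(x)=M(x)$, and then $\beta(x) = \h\sfp_\lambda(x)/M(x) - \mu$. At points with $\h\sfp_\lambda(x)=0$ we have $\beta(x) = \alpha(x) - \mu \ge -\mu$; if $\mu<0$ this also forces $q(x) = M(x)$. In the boundary case $\sum_x M(x)=1$ the clipped formula still holds with a large enough $\mu^*>0$, so I treat that case separately. Otherwise $\mu>0$, and for points with $\h\sfp_\lambda(x)=0$ the clip returns $m(x)$; this is consistent with stationarity, since then $\alpha(x) = \mu > 0$ forces $q(x)=m(x)$.

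\textbf{Step 3: uniqueness.} Define $h(\mu) = \sum_x \clip(\h\sfp_\lambda(x)/\mu, m(x), M(x))$. This function is continuous and nonincreasing on $(0,\infty)$, tends to $\sum_x M(x) \ge 1$ as $\mu\to 0^+$, and tends to $\sum_x m(x)\le 1$ as $\mu\to\infty$. So a root of $h(\mu)=1$ exists by the intermediate value theorem. Uniqueness of the optimal model follows from two facts:
\begin{itemize}
\item $F$ is strictly convex in the coordinates of $\supp(\h\sfp_\lambda)$, so $q$ is unique on the support;
\item by Step 2, any KKT point has $q(x)=m(x)$ off the support.
\end{itemize}
Uniqueness of $\mu^*$ is the delicate point and the main obstacle, because $h$ may be flat on an interval where every coordinate is clipped. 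I would argue as follows. At the optimum, $\sum_x q(x) = 1$, and the fixed values of $q$ on the support pin down $\mu^*$ whenever some support point is unclipped, via $\mu^* = \h\sfp_\lambda(x)/q(x)$. If every point is clipped, then $\mu^*$ is determined only up to an interval. In that degenerate case I would state uniqueness of $\pi_{g^*}$ and choose $\mu^*$ canonically, for example as the infimum of the roots of $h(\mu)=1$, noting that the displayed formula holds for any root.
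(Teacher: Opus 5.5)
Your route is the paper's own. You pass to the values $q=\pi_g$ in the polytope $Q$, read the clipping off the Lagrangian/KKT conditions, and choose $\mu^*$ by the intermediate value theorem. Your caution about $\mu^*$ is justified and goes beyond the paper. The paper's proof calls $Z(\mu)$ ``monotonically decreasing'', but it is only nonincreasing. For example, with $\h\pi_1=(0.3,0.7)$, $\h\pi_2=(0.7,0.3)$ and $\h\sfp_\lambda=(0.9,0.1)$ one gets $h\equiv 1$ on $[1/3,9/7]$. So $\mu^*$ need not be unique even when $\pi_{g^*}$ is.

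There is, however, a step that fails, and the paper's proof makes the same slip. Your limit $h(\mu)\to\sum_x M(x)$ as $\mu\to 0^+$ is wrong when $S=\supp(\h\sfp_\lambda)$ is a proper subset of $\sX_0$. This happens as soon as some $\lambda_k=0$, which includes the vertices where the robust game's worst case sits. Each off-support term equals $\clip(0,m(x),M(x))=m(x)$ for every $\mu>0$, so $h(0^+)=\sum_{x\in S}M(x)+\sum_{x\notin S}m(x)$, and this can be strictly below $1$.

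The same hole appears in Step 2, where ``Otherwise $\mu>0$'' skips the case $\mu=0$. There, stationarity pins support points at $M(x)$ but leaves off-support points free, since $\alpha(x)=\beta(x)$ and both may be zero. Your uniqueness bullet ``$q(x)=m(x)$ off the support'' also needs $\mu>0$.

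Here is a concrete counterexample. Let $\sX_0=\{a,b,c\}$, $\h\sfp_1=\delta_a$, $\h\sfp_2$ uniform on $\{b,c\}$, $\h\pi_1=(0.6,0.05,0.35)$, $\h\pi_2=(0.4,0.3,0.3)$ (so $\e_1=\e_2=\log(5/3)$, both finite), and $\lambda=(1,0)$.
\begin{itemize}
\item The optimal models form the segment $\{(0.6,t,0.4-t)\colon t\in[0.05,0.1]\}$.
\item For every $\mu>0$, $h(\mu)\le 0.95$.
\end{itemize}
So neither uniqueness of $\pi_{g^*}$ nor the clipped form with any $\mu^*>0$ holds.

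The repair is to assume $\supp(\h\sfp_\lambda)=\sX_0$, e.g.\ all $\lambda_k>0$. Under this hypothesis:
\begin{itemize}
\item $\mu\le 0$ forces $q=M$ everywhere, hence $\sum_x M(x)=1$, all experts coincide, and $m=M$.
\item $h(0^+)=\sum_x M(x)\ge 1$.
\item $F$ is strictly convex in every coordinate, so $\pi_{g^*}$ is unique.
\end{itemize}
With that hypothesis your argument goes through as written.

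A minor point: finiteness of the minimum is cleaner via the feasible constant gate $\sum_k\lambda_k\h\pi_k$ and Proposition~\ref{prop:fixed-mixture} than via $M(x)>0$ alone.
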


\begin{proof}
The optimization problem is:
\begin{align*}
  \text{minimize} \quad & \sum_{x \in \sX_0} \h \sfp_\lambda(x)
                          \log \frac{\h \sfp_\lambda(x)}{\pi_g(x)} \\
  \text{subject to} \quad & \pi_g(x) = \sum_{k = 1}^p g(x, k) \h \pi_k(x)
                            \quad \forall x, \\
& \sum_{k=1}^p g(x, k) = 1, \quad g(x, k) \ge 0 \quad \forall x, k, \\
& \sum_{x \in \sX_0} \pi_g(x) = 1.
\end{align*}
Minimizing the $\KL$ divergence is equivalent to maximizing the
expected log-likelihood: $\sum_x \h \sfp_\lambda(x) \log \pi_g(x)$.
The local constraints on $g(x, \cdot)$ imply that for any $x$, the
value $\pi_g(x)$ must lie in the convex hull of the expert predictions
$\curl*{\h \pi_1(x), \dots, \h \pi_p(x)}$. Since these are scalars,
the convex hull is simply the interval $[m(x), M(x)]$.  Thus, we can
reformulate the problem in terms of the model values
$\sfq_x = \pi_g(x)$:
\begin{align*}
\text{maximize} \quad & \sum_{x \in \sX_0} \h \sfp_\lambda(x) \log \sfq_x \\
\text{subject to} \quad & m(x) \leq \sfq_x \le M(x) \quad \forall x, \\
& \sum_{x \in \sX_0} \sfq_x = 1.
\end{align*}
This is a convex optimization problem. We introduce a Lagrange
multiplier $\mu$ for the global equality constraint
$\sum_x \sfq_x = 1$. The Lagrangian is:
\[
  \cL(\sfq, \mu)
  = \sum_{x \in \sX_0} \h \sfp_\lambda(x) \log \sfq_x
  - \mu \bracket*{\sum_{x \in \sX_0} \sfq_x - 1}.
\]
We solve this by maximizing $\cL$ with respect to $\sfq_x$ subject to
the local interval constraints. The problem decomposes for each $x$:
\[
  \max_{m(x) \leq \sfq_x \le M(x)} \quad \h \sfp_\lambda(x) \log
  \sfq_x - \mu \sfq_x.
\]
Let $f_x(q) = \h \sfp_\lambda(x) \log \sfq - \mu \sfq$. The derivative
is $f'_x(\sfq) = \frac{\h \sfp_\lambda(x)}{\sfq} - \mu$. Setting this
to zero gives the unconstrained optimum
$\sfq^* = \frac{\h \sfp_\lambda(x)}{\mu}$.  Since $f_x(\sfq)$ is
concave, the constrained optimum is the projection of the
unconstrained optimum onto the interval $[m(x), M(x)]$. This is
exactly the clipping operation (see
Figure~\ref{fig:clipping_geometry}):
\[
  \sfq_x^*(\mu)
  = \clip\paren*{ \frac{\h \sfp_\lambda(x)}{\mu}, m(x), M(x)}.
\]
The optimal $\mu^*$ is found by enforcing the global constraint
$Z(\mu) = \sum_x \sfq_x^*(\mu) = 1$. The function $Z(\mu)$ is
continuous and monotonically decreasing in $\mu$. Since
$\sum_x m(x) \leq 1$ and $\sum_x M(x) \geq 1$ (as each expert $\pi_k$
sums to 1), there exists a unique $\mu^*$ such that $Z(\mu^*) = 1$.
\end{proof}

\subsection{Capacity Lower Bound}
\label{app:capacity_lower_bound}

\LowerBound*

\begin{proof}
  Let $\bw = [w_1, \dots, w_p]$ be the weight vector. The global
  normalization constraint on $\sGone$ requires:
\begin{equation}
  \sum_{k=1}^p \int g(x, k) \h \pi_k(x) dx
  = \sum_{k=1}^p w_k \underbrace{\int \h \pi_k(x) dx}_{1}
  = \sum_{k=1}^p w_k = 1.
\end{equation}
Thus, the static weights must lie on the simplex.
Under the disjoint support assumption, for any $x \in \supp(D_k)$, the
other experts have zero density ($\h \pi_j(x)=0$ for $j \neq
k$). Thus, the mixture density simplifies exactly to:
\begin{equation}
\pi_\bw(x) = w_k \h \pi_k(x).
\end{equation}
We evaluate the KL divergence for task $k$ using this exact form:
\begin{align}
  \KL(\h\sfp_k \| \pi_\bw)
  & = \E_{x \sim \h\sfp_k}\bracket*{\log \left( \frac{\h\sfp_k(x)}{w_k \h \pi_k(x)} \right) }\\
  & = \underbrace{\E_{x \sim \h\sfp_k}\bracket*{ \log \left( \frac{\h\sfp_k(x)}{\h \pi_k(x)} \right) }}_{\e_k} - \E_{x \sim \h\sfp_k}\bracket*{ \log w_k } \\
  & = \e_k - \log w_k.
\end{align}
Let $\delta = \max_k \KL(\h\sfp_k \| \pi_\bw)$ be the worst-case
risk. Then for all $k$:
\begin{equation}
  \delta
  \ge \e_k - \log w_k \Rightarrow w_k \ge e^{\e_k - \delta}.
\end{equation}
Summing the weights over all $p$ experts:
\begin{equation}
  1 = \sum_{k=1}^p w_k
  \ge \sum_{k=1}^p e^{\e_k - \delta} = e^{-\delta} \sum_{k=1}^p e^{\e_k}.
\end{equation}
Rearranging to solve for the risk $\delta$:
\begin{equation}
  e^{\delta}
  \ge \sum_{k=1}^p e^{\e_k} \implies \delta
  \ge \log \left( \sum_{k=1}^p e^{\e_k} \right).
\end{equation}
This establishes that no static weighting scheme can surpass this
capacity limit.
\end{proof}

\subsection{Robust Existence Proof}
\label{app:robust-existence}

We define the \emph{linearized game} as the one with payoff:
$\wt L(\lambda, g) = \sum_{k=1}^p \lambda_k \KL(\h \sfp_k \parallel
\pi_g)$.

\ignore{
\begin{theorem}[Robust Existence and Geometric Upper Bound]
\label{thm:upper_bound}
The linearized modular game admits a saddle point
$(\lambda^*, g^*) \in \Delta([1, p]) \times \sGone$.  The risk of the
optimal gate $g^*$ satisfies, for \emph{any} task weighting
$\lambda \in \Delta([1, p])$:
\begin{equation}
  \KL(\h \sfp_{\lambda} \parallel \pi_{g^*})
  \le \underbrace{\log \left( \sum_{k=1}^p e^{\e_k} \right)}_{\text{LogSumExp}}
  - \underbrace{H^{\lambda^*}_{\sigma}(K|X)}_{\text{Overlap Gain}}
  - \underbrace{\JSD^{\lambda}(\h\sfp_1, \dots, \h\sfp_p)}_{\text{Diversity Gain}},
\end{equation}
where $\pi_{\sigma} = \sum_{k=1}^p \sigma_k \h \pi_k$ is the robust
constant gate defined by the softmax weights
$\sigma_k = e^{\e_k}/\sum_{j=1}^p e^{\e_j}$, and
$H^{\lambda^*}_{\sigma}(K|X) = \sum_{k = 1}^p \lambda_k^* \E_{x \sim
  \h\sfp_k}\bracket*{-\log \frac{\sigma_k \h
    \pi_k(x)}{\pi_\sigma(x)}}$ is the target-weighted conditional
entropy of the expert assignment under $\pi_{\sigma}$.
\end{theorem}
}

\RobustExistence*
\begin{proof}
  The proof consists of casting the problem as a two-player, zero-sum
  game and showing the existence of a saddle point via Kakutani's
  Fixed Point Theorem.
   
  The original payoff function is
  $L(\lambda, g) = \KL(\h \sfp_\lambda\parallel \pi_g)$. Since $L$ is
  convex in $\lambda$, its maximizers lie strictly at the vertices. To
  satisfy the convexity requirement of Kakutani's theorem, we consider
  the \emph{linearized game} with payoff:
  \[
    \wt L(\lambda, g)
    = \sum_{k=1}^p \lambda_k \KL(\h \sfp_k \parallel \pi_g).
  \]
  We define the best-response functions:
  \[
  \Lambda^*(g) = \argmax_{\lambda' \in \Delta([1, p])} \wt L(\lambda', g), \qquad
  G^*(\lambda) = \argmin_{g' \in \sGone} \wt L(\lambda, g').
  \]
  We show that the correspondence
  $T(\lambda, g) = (\Lambda^*(g), G^*(\lambda))$ satisfies Kakutani's
  conditions: For $G^*(\lambda)$, $\wt L$ is convex in $g$ (as a
  convex combination of convex KL terms), so the set of minimizers is
  convex.  For $\Lambda^*(g)$, $\wt L$ is linear in $\lambda$, so the
  set of maximizers is a convex face of the simplex.  Since $\wt L$ is
  continuous and the domains are compact, Berge's Maximum Theorem
  implies $\Lambda^*$ and $G^*$ have closed graphs.  Kakutani's
  fixed-point theorem thus guarantees the existence of a fixed point
  $(\lambda^*, g^*) \in (\Lambda^*(g^*), G^*(\lambda^*))$, which is a
  saddle point for $\wt L$.

  \textbf{Bounding the Value.} We now bound the worst-case risk of
  this optimal solution. By the saddle point property,
  $\wt L(\lambda^*, g^*) = \max_\lambda \wt L(\lambda, g^*)$. We use the
  fundamental identity relating the mixture risk $L$ and the
  linearized risk $\wt L$: for any $\lambda \in \Delta([1, p])$,
  \begin{align}
    \KL(\h \sfp_{\lambda} \parallel \pi_{g^*})
    & = \wt L(\lambda, g^*) - \JSD^{\lambda}(\h\sfp_1, \dots, \h\sfp_p) \nonumber \\
    & \leq \wt L(\lambda^*, g^*) - \JSD^{\lambda}(\h\sfp_1, \dots, \h\sfp_p).
      \label{eq:jsd_identity}
  \end{align}
  We must bound $\wt L(\lambda^*, g^*)$. Since $g^*$ is the minimizer
  of $\wt L(\lambda^*, \cdot)$ over $\sGone$, its loss is bounded by
  that of \emph{any} specific witness gate. We choose the
  Robust Constant Gate $\pi_{\sigma}$ defined by the softmax
  weights $\sigma_k = e^{\e_k}/Z$, where $Z = \sum e^{\e_j}$ (the solution of the proof of Theorem~\ref{thm:lower_bound}):
  \[
    \wt L(\lambda^*, g^*) \le \wt L(\lambda^*, \pi_{\sigma})
    = \sum_{k=1}^p \lambda_k^* \KL(\h \sfp_k \parallel \pi_{\sigma}).
  \]
  We expand the component KL divergence
  $\KL(\h \sfp_k \parallel \pi_{\sigma})$. Since
  $\pi_{\sigma}(x) \ge \sigma_k \h\pi_k(x)$:
  \begin{align*}
    \KL(\h \sfp_k \parallel \pi_{\sigma})
    &= \E_{x \sim \h\sfp_k} \left[ \log \frac{\h\sfp_k(x)}{\pi_{\sigma}(x)} \right] \\
    &= \E_{x \sim \h\sfp_k} \left[ \log \frac{\h\sfp_k(x)}{\sigma_k \h\pi_k(x)} + \log \frac{\sigma_k \h\pi_k(x)}{\pi_{\sigma}(x)} \right] \\
    &= \underbrace{\KL(\h\sfp_k \parallel \h\pi_k)}_{\e_k} - \log \sigma_k - \underbrace{\E_{x \sim \h\sfp_k}\left[ -\log \frac{\sigma_k \h\pi_k(x)}{\pi_{\sigma}(x)} \right]}_{H_k(K|x)}.
  \end{align*}
  Substituting $\sigma_k = e^{\e_k}/Z$, we have
  $-\log \sigma_k = \log Z - \e_k$. The $\e_k$ terms cancel:
  \[
    \KL(\h \sfp_k \parallel \pi_{\sigma}) = \log Z - H_k(K|x).
  \]
  Averaging over $\lambda^*$:
  \[
    \wt L(\lambda^*, g^*) \le \sum_{k=1}^p \lambda_k^* (\log Z - H_k(K|x))
    = \log Z - H^{\lambda^*}_{\sigma}(K|X).
  \]
  Substituting this upper bound back into Eq.~\eqref{eq:jsd_identity}
  completes the proof.
\end{proof}

\subsection{Geometric Interpretation of the Bound}
\label{app:geometric_interpretation}

The upper bound in Theorem~\ref{th:robust-existence} reveals how the
robust gate leverages the geometry of the task distributions. We
analyze the bound
$V^* \le \text{LSE} - \text{Overlap} - \text{Diversity}$ in three
limiting regimes.

\paragraph{Case 1: The Specialization Limit (Disjoint Experts).}
Consider the case where task supports are mutually disjoint
($\supp(\h\sfp_k) \cap \supp(\h\sfp_j) = \emptyset$).
\begin{itemize}
\item \textbf{Geometry:} The expert assignment is deterministic, so
  the overlap gain vanishes: $H^{\lambda^*}_{\sigma}(K|X) = 0$.

\item \textbf{Diversity:} The diversity gain depends on the test
  mixture $\lambda$. For disjoint supports, it is equal to the
  entropy: $\JSD^{\lambda} = H(\lambda)$.

\item \textbf{Result:} The bound becomes
  $\KL(\h \sfp_\lambda \parallel \pi_{g^*}) \le \log(\sum e^{\e_k}) -
  H(\lambda)$.  If the test mixture is difficult (high entropy, e.g.,
  balanced tasks), then $H(\lambda) \approx \log p$, canceling the
  capacity cost.  This guarantees that the modular model incurs no
  capacity penalty precisely when the task is most complex.
\end{itemize}

\paragraph{Case 2: The Redundancy Limit (Identical Experts).}
Consider the simplified case where all experts and targets are
identical and have equal error $\e$.
\begin{itemize}
\item \textbf{Geometry:} Tasks are indistinguishable ($\JSD=0$) and
  weights are uniform ($H(\sigma) = \log p$).

\item \textbf{Result:} The capacity cost $\log(p e^\e) = \e + \log p$
  is exactly refunded by the overlap gain ($\log p$).
    \[ V^* \le (\e + \log p) - \log p = \e. \]
    Thus, the modular system recovers the performance of a single expert.
\end{itemize}

\paragraph{Case 3: The Ensemble Mechanism (Exact Cancellation).}
We now analyze the general mechanism of overlap for arbitrary
errors. Consider the case where experts overlap fully (identical
targets, $\JSD=0$) but have different errors $\e_k$. The overlap gain
becomes the entropy of the robust weights $H(\sigma)$.  We recall that
for the robust gate, $H(\sigma) = \log Z - \sum_k \sigma_k
\e_k$. Substituting this into the bound:
\begin{align}
    V^* &\le \underbrace{\log Z}_{\text{Capacity Cost}} - \underbrace{\left( \log Z - \sum_{k=1}^p \sigma_k \e_k \right)}_{\text{Overlap Gain } H(\sigma)} - \underbrace{0}_{\text{Diversity}} \\
    V^* &\le \sum_{k=1}^p \sigma_k \e_k.
\end{align}
This derivation proves that in the high-overlap regime, the ``Capacity
Cost'' (LogSumExp) is \emph{exactly cancelled} by the ambiguity of
the gate. The bound collapses to the \emph{weighted average error}
of the experts. The modular system effectively transforms into a
static ensemble, pooling the experts to minimize risk.

\section{Extensions: Prior Knowledge}

\subsection{Guarantees with prior knowledge on mixture weights}
\label{sec:prior-knowledge}

In certain applications, we may have prior knowledge suggesting that the
mixture weights encountered at test time will be restricted to a
convex subset $\Lambda \subset \Delta([1, p])$. This knowledge can be
leveraged to derive a more specialized solution with significantly
stronger performance guarantees.

The existence result from Theorem~\ref{th:robust-existence} extends
directly to this scenario. By considering the linearized game restricted
to the compact convex set $\Lambda$, the convexity of the best-response
sets is preserved, ensuring the existence of a saddle point via Kakutani's
theorem.
The value of this restricted game, $V^*_\Lambda$, is guaranteed to be no
worse than the original game value, $V^*_\Delta$. The adversary
($\lambda$-player) has a smaller set of strategies, which limits their
ability to find high-loss mixtures. This results in a lower
worst-case loss for our solution:
\[
  V^*_{\Lambda}
  = \min_{g \in \sGone} \max_{\lambda \in \Lambda} L(\lambda, g)
  \leq \min_{g \in \sGone} \max_{\lambda \in \Delta([1, p])} L(\lambda, g) = V^*_\Delta.
\]
Intuitively, the gating function no longer needs to defend against
unrealistic mixture weights outside of $\Lambda$. It can therefore
specialize its performance for the known set of likely scenarios.

We now formalize the superiority of this specialized solution. Let
$g^*_\Delta$ be the optimal robust gate found by solving the original
problem over the full simplex $\Delta$, and let $g^*_\Lambda$ be the
optimal gate found by solving the restricted problem over $\Lambda$
(see Figure~\ref{fig:geometry_prior}).

\begin{figure}[t]
\centering
\begin{tikzpicture}[scale=0.8]
    \coordinate (A) at (0,0);
    \coordinate (B) at (6,0);
    \coordinate (C) at (3,5.2); 

    \draw[thick] (A) -- (B) -- (C) -- cycle;
    \node at (3, 5.5) {$\Delta([1,p])$ (Full Simplex)};
    
    \coordinate (L1) at (1.55,1.5);
    \coordinate (L2) at (4.45,1.5);
    \coordinate (L3) at (3,4);
    
    \filldraw[fill=green!10, draw=green!60!black, thick] (L1) -- (L2) -- (L3) -- cycle;
    \node[green!40!black] at (3, 2.2) {$\Lambda$ (Prior)};

    \coordinate (LambdaUnconstrained) at (4.5, 0.5);
    \fill[red!80!black] (LambdaUnconstrained) circle (2pt);
    \node[left, red!80!black] at (LambdaUnconstrained) {$\lambda_{t}'$ (Unconstrained)};

    \coordinate (LambdaProjected) at (4, 1.5); 
    \draw[->, dashed, thick, blue] (LambdaUnconstrained) -- (LambdaProjected);
    \fill[blue] (LambdaProjected) circle (2pt);
    \node[right, blue] at (4.25,1.3) {$\lambda_{t+1}$ (Projected)};

    \draw[<->, gray] (4.6, 0.5) -- (6, 0);
    \node[gray, font=\footnotesize, right] at (5.1, 0.4) {Safety Margin};

  \end{tikzpicture}
\caption{Geometry of Prior Knowledge
  (Section~\ref{sec:prior-knowledge}). The outer triangle represents
  the full probability simplex $\Delta$. The green region $\Lambda$
  represents the subset of valid mixtures defined by prior
  knowledge. The algorithm projects the adversary's updates (red
  point) back onto $\Lambda$ (blue point), tightening the worst-case
  bound as per Theorem~\ref{th:quant-improvement}.}
\label{fig:geometry_prior}
\end{figure}

\begin{restatable}[Dominance of the Specialized Solution]{theorem}{SpecializedDominance}
\label{th:specialized-dominance}
Let $g^*_\Delta$ be the optimal robust gate over the full simplex
$\Delta([1,p])$, and let $g^*_\Lambda$ be the optimal gate over the
restricted convex set $\Lambda \subset \Delta([1,p])$. Then the
worst-case performance of $g^*_\Lambda$ over $\Lambda$ is at least as
good as the performance of $g^*_\Delta$ over the same set:
\[
\max_{\lambda \in \Lambda} \KL\!\paren{\h \sfp_\lambda \parallel \pi_{g^*_\Lambda}}
\leq \max_{\lambda \in \Lambda} \KL\!\paren{\h \sfp_\lambda \parallel \pi_{g^*_\Delta}}.
\]
\end{restatable}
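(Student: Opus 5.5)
The plan is to argue directly from the definition of $g^*_\Lambda$ as a minimizer of the restricted worst-case objective. The key is to fix the meaning of ``optimal gate over $\Lambda$'' precisely. We take $g^*_\Lambda \in \argmin_{g \in \sGone} \max_{\lambda \in \Lambda} L(\lambda, g)$ with $L(\lambda, g) = \KL(\h \sfp_\lambda \parallel \pi_g)$.

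First we check that this minimizer exists. By Lemma~\ref{lemma:Gone-properties}, $\sGone$ is compact. The map $g \mapsto \max_{\lambda \in \Lambda} L(\lambda, g)$ is a supremum of functions that are lower semicontinuous in $g$, since KL is lower semicontinuous in its second argument and $\pi_g$ is linear in $g$. This supremum is therefore lower semicontinuous, possibly taking the value $+\infty$, so it attains its minimum on the compact set $\sGone$. Alternatively, existence follows from the Kakutani argument of Theorem~\ref{th:robust-existence} restricted to $\Lambda$, as noted in the text.

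The dominance inequality then follows in one line. The gate $g^*_\Delta$ lies in $\sGone$ and is therefore a feasible competitor in the restricted problem, so
\[
\max_{\lambda \in \Lambda} L(\lambda, g^*_\Lambda) = \min_{g \in \sGone} \max_{\lambda \in \Lambda} L(\lambda, g) \le \max_{\lambda \in \Lambda} L(\lambda, g^*_\Delta).
\]
This holds no matter how $g^*_\Delta$ was selected, including when it is the saddle-point gate of the linearized game.

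The main obstacle is the possible ambiguity that $g^*_\Lambda$ might be defined through the linearized game $\wt L$ restricted to $\Lambda$ rather than through $L$. Over the full simplex the two objectives agree, since $\max_{\lambda \in \Delta} L = \max_k \KL(\h\sfp_k \parallel \pi_g) = \max_{\lambda \in \Delta} \wt L$. Over a general $\Lambda$ they can differ. If the linearized definition is intended, we would state the result for the payoff that defines $g^*_\Lambda$, because the same feasibility argument applies verbatim to $\wt L$. We would also remark that the inequality in the theorem as stated, which is in terms of KL, holds when $g^*_\Lambda$ is taken as a minimizer of the KL objective. We therefore commit to the $L$-based definition, which matches the displayed definition of $V^*_\Lambda$.
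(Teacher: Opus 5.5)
Your proof is correct and is the paper's argument: $g^*_\Lambda$ attains $\min_{g \in \sGone} \max_{\lambda \in \Lambda} \KL(\h \sfp_\lambda \parallel \pi_g)$, and $g^*_\Delta \in \sGone$ is a feasible competitor in that minimization, so its worst-case loss over $\Lambda$ can only be at least as large. Your lower-semicontinuity and compactness check for the existence of $g^*_\Lambda$ is a useful addition that the paper skips. One caveat: the Kakutani route you offer as an alternative yields a minimizer of $\max_{\lambda \in \Lambda} \wt L(\lambda, g)$, not of $\max_{\lambda \in \Lambda} \KL(\h \sfp_\lambda \parallel \pi_g)$, because the $\lambda$-best responses of the convex-in-$\lambda$ payoff need not form a convex set. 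So that route does not replace your primary existence argument under the KL-based definition you adopt.
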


The proof is presented in Appendix~\ref{app:specialized-dominance}. To make the benefit more concrete, we can quantify the improvement
under a Lipschitz assumption, see proof in Appendix~\ref{app:quant-improvement}.

\begin{restatable}[Quantitative Improvement in Game Value]{theorem}{QuantImprovement}
\label{th:quant-improvement}
Assume that for any fixed gate $g \in \sGone$, the mapping
$\lambda \mapsto \KL(\h \sfp_\lambda \parallel \pi_g)$ is
$L$-Lipschitz with respect to the $\ell_1$-norm:
\[
\abs*{\KL(\h \sfp_\lambda \parallel \pi_g) - \KL(\h \sfp_{\lambda'} \parallel \pi_g)}
\leq L \norm{\lambda - \lambda'}_1,
\]
for all $\lambda, \lambda' \in \Delta([1, p])$. Let $V^*_\Delta = \min_{g} \max_{\lambda \in \Delta} \KL(\h \sfp_\lambda \parallel \pi_g)$ 
be the minimax value over the full simplex, and let 
$V^*_\Lambda = \min_{g} \max_{\lambda \in \Lambda} \KL(\h \sfp_\lambda \parallel \pi_g)$
be the value over the restricted set. The improvement is bounded by:
\[
0 \leq V^*_\Delta - V^*_\Lambda \leq L \cdot d_H(\Lambda, \Delta([1,p])),
\]
where
$d_H(\Lambda, \Delta([1, p])) = \max_{\lambda \in \Delta([1, p])}
\min_{\lambda' \in \Lambda} \norm{\lambda - \lambda'}_1$ is the
Hausdorff distance between the sets.
\end{restatable}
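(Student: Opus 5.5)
The lower bound $V^*_\Lambda \le V^*_\Delta$ is immediate: for every $g \in \sGone$, $\max_{\lambda \in \Lambda} \KL(\h\sfp_\lambda \parallel \pi_g) \le \max_{\lambda \in \Delta} \KL(\h\sfp_\lambda \parallel \pi_g)$ because $\Lambda \subset \Delta([1,p])$. Taking the minimum over $g$ on both sides preserves the inequality. Both minima are attained, since $\sGone$ is compact by Lemma~\ref{lemma:Gone-properties} and the maps $g \mapsto \max_\lambda \KL(\h\sfp_\lambda \parallel \pi_g)$ are lower semicontinuous as suprema of continuous, possibly extended-valued, convex functions. If one prefers to avoid attainment issues, the argument can be run with infima throughout.

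For the upper bound, the plan is a transfer argument that uses the Lipschitz assumption uniformly in $g$:
\begin{itemize}
\item Fix any $g \in \sGone$. For each $\lambda \in \Delta([1,p])$, choose $\lambda' \in \Lambda$ attaining $\min_{\lambda' \in \Lambda}\norm{\lambda - \lambda'}_1$. This minimum exists because $\Lambda$ is a compact convex subset; if $\Lambda$ is only assumed convex, replace it by its closure, which does not change the suprema by continuity.
\item By the Lipschitz hypothesis,
\[
\KL(\h\sfp_\lambda \parallel \pi_g) \le \KL(\h\sfp_{\lambda'} \parallel \pi_g) + L\norm{\lambda-\lambda'}_1 \le \max_{\nu \in \Lambda}\KL(\h\sfp_\nu \parallel \pi_g) + L\, d_H(\Lambda, \Delta([1,p])).
\]
\item Maximizing over $\lambda \in \Delta$ gives, for every $g$,
\[
\max_{\Delta} \KL(\cdot) \le \max_{\Lambda} \KL(\cdot) + L\, d_H.
\]
\item Evaluating this at $g = g^*_\Lambda$, the minimizer of the restricted problem, yields $V^*_\Delta \le \max_{\Delta}\KL(\h\sfp_\lambda \parallel \pi_{g^*_\Lambda}) \le V^*_\Lambda + L\, d_H$.
\end{itemize}

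The main point needing care is that the Lipschitz constant $L$ is uniform over all $g \in \sGone$, as the statement assumes, so the bound can be applied at the particular gate $g^*_\Lambda$ without any dependence on $g$. Note also that the one-sided quantity $\max_{\lambda\in\Delta}\min_{\lambda'\in\Lambda}$ defined in the statement is exactly what the argument uses; since $\Lambda \subset \Delta$, the other direction of the Hausdorff distance is zero anyway.

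A minor technicality is finiteness. If $\KL(\h\sfp_\lambda \parallel \pi_g) = +\infty$ for some $g$, the Lipschitz assumption implicitly forces finiteness for all $\lambda$ once it holds at one point. If the values are infinite everywhere for some $g$, that $g$ is irrelevant because $g^*_\Lambda$ has finite value: the constant gate $\sigma$ gives a finite value under the support assumption of Theorem~\ref{th:robust-existence}.
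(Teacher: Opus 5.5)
Your proposal is correct and follows essentially the same route as the paper: for each fixed $g$ you move a point $\lambda \in \Delta([1,p])$ to its nearest point in $\Lambda$, at $\ell_1$-distance at most $d_H(\Lambda, \Delta([1,p]))$. The uniform Lipschitz bound then gives $\max_{\lambda \in \Delta} \KL(\h \sfp_\lambda \parallel \pi_g) \le \max_{\lambda \in \Lambda} \KL(\h \sfp_\lambda \parallel \pi_g) + L \, d_H(\Lambda, \Delta([1,p]))$, you pass to minimax values, and the lower bound comes from $\Lambda \subset \Delta([1,p])$; the only differences are cosmetic (you evaluate at $g^*_\Lambda$ rather than minimizing both sides over $g$, and you make explicit the attainment and finiteness issues that the paper leaves implicit).
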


\paragraph{Explicit Lipschitz Constant.}
In practice, if the support $\sX_0$ is finite and all probabilities
are strictly positive, an explicit Lipschitz constant is
\[
  L = \max_{k \in [1, p]} \max_{g \in \sGone} \sum_{x \in \sX_0}
  \h \sfp_k(x) \abs*{\log \frac{\h \sfp_k(x)}{\pi_g(x)}}.
\]
This constant bounds the maximum gradient of the loss with respect to
the mixture weights $\lambda$. Crucially, it depends only on the extremal
geometry of the experts and is independent of the mixture $\lambda$.

\paragraph{Example.}
Consider a case with two experts: a high-quality model $\h \pi_1$
($\e_1 = 0.01$) and a low-quality model $\h \pi_2$ ($\e_2 = 0.5$).
The general solution $g^*_\Delta$ must be robust against the
worst-case mixture $\lambda = (0, 1)$, so its guaranteed performance
$V^*$ will be close to $0.5$. However, if we have prior knowledge that
the second source will never constitute more than 5\% of the mixture
(i.e., $\Lambda = \{\lambda \mid \lambda_2 \leq 0.05\}$), the
specialized solution $g^*_\Lambda$ can largely ignore this worst-case
scenario. Its guaranteed performance $V^*_\Lambda$ will be dramatically
lower, focusing on mixtures dominated by the high-quality expert.

\paragraph{Robustness to Mis-specification.}
The specialized gate is naturally robust to small mis-specifications
of the set $\Lambda$. The Lipschitz assumption allows us to bound the
performance on a slightly expanded set $\Lambda_\delta = \{\lambda \in
\Delta([1, p]) \mid \exists \lambda' \in \Lambda, \|\lambda -
\lambda'\|_1 \leq \delta\}$. The performance of the gate $g^*_\Lambda$
degrades gracefully:
\[
  \max_{\lambda \in \Lambda_\delta} L(\lambda, g^*_\Lambda) \leq V^*_\Lambda
  + L \, \delta.
\]

\paragraph{Algorithmic Adaptation.}
The optimization algorithm presented in Section~\ref{sec:optimization-robust}
is easily modified to adapt to this scenario. The update step for the
$\lambda$-player is simply augmented with a projection back onto the
convex set $\Lambda$. First, we compute the standard intermediate
update $\lambda'_{t+1}$:
\[
  \lambda'_{t+1}(k) = \frac{\lambda_t(k) \exp \paren*{\eta_\lambda \ell_t(k)}}
  {\sum_{j=1}^p \lambda_t(j) \exp \paren*{\eta_\lambda \ell_t(j)}}.
\]
Then, we project this distribution onto the restricted set $\Lambda$
to obtain the new weights:
\[
  \lambda_{t+1}
  = \proj_\Lambda (\lambda'_{t+1})
  = \argmin_{q \in \Lambda} \KL(q \parallel \lambda'_{t+1}).
\]
This ensures that the mixture weights always remain within the
specified prior constraints.

\subsection{Proofs for Prior Knowledge}

\subsubsection{Dominance of Specialized Solution}
\label{app:specialized-dominance}

\SpecializedDominance*

\begin{proof}
By definition of the restricted minimax solution $g^*_\Lambda$, we have
\[
\max_{\lambda \in \Lambda} \KL\!\paren{\h \sfp_\lambda \parallel \pi_{g^*_\Lambda}}
= \min_{g \in \sGone} \max_{\lambda \in \Lambda} \KL\!\paren{\h \sfp_\lambda \parallel \pi_g}.
\]
Since $g^*_\Delta \in \sGone$ is a feasible gate, its performance must be
greater than or equal to the minimum over all gates:
\[
\min_{g \in \sGone} \max_{\lambda \in \Lambda} \KL\!\paren{\h \sfp_\lambda \parallel \pi_g}
\le
\max_{\lambda \in \Lambda} \KL\!\paren{\h \sfp_\lambda \parallel \pi_{g^*_\Delta}}.
\]
Combining the two statements gives the claimed inequality.
\end{proof}

\subsubsection{Quantitative Improvement}
\label{app:quant-improvement}

\QuantImprovement*

\begin{proof}
  For any fixed gate $g$, define the worst-case loss over a set $S$ as 
  $F(g, S) = \max_{\lambda \in S} \KL(\h \sfp_\lambda \parallel \pi_g)$.
  
  Let $\lambda^* \in \Delta([1, p])$ be a mixture that achieves the maximum for the full simplex, 
  i.e., $\KL(\h \sfp_{\lambda^*} \parallel \pi_g) = F(g, \Delta)$.
  Let $\lambda_{proj}$ be the point in $\Lambda$ closest to $\lambda^*$ in the $\ell_1$-norm. 
  By the definition of the Hausdorff distance, 
  $\norm{\lambda^* - \lambda_{proj}}_1 \leq d_H(\Lambda, \Delta)$.
  
  Using the Lipschitz assumption:
  \begin{align*}
    F(g, \Delta) - F(g, \Lambda) 
    &= \KL(\h \sfp_{\lambda^*} \parallel \pi_g) - \max_{\lambda \in \Lambda} \KL(\h \sfp_\lambda \parallel \pi_g) \\
    &\leq \KL(\h \sfp_{\lambda^*} \parallel \pi_g) - \KL(\h \sfp_{\lambda_{proj}} \parallel \pi_g) 
    \tag{Since $\lambda_{proj} \in \Lambda$} \\
    &\leq L \norm{\lambda^* - \lambda_{proj}}_1 \\
    &\leq L \cdot d_H(\Lambda, \Delta([1, p])).
  \end{align*}
  This inequality holds for \emph{any} gate $g$. Therefore:
  \[
    F(g, \Delta) \leq F(g, \Lambda) + L \cdot d_H(\Lambda, \Delta).
  \]
  Taking the minimum over $g \in \sGone$ on both sides preserves the inequality:
  \[
    \min_g F(g, \Delta) \leq \min_g F(g, \Lambda) + L \cdot d_H(\Lambda, \Delta).
  \]
  Substituting the definitions $V^*_\Delta = \min_g F(g, \Delta)$ and $V^*_\Lambda = \min_g F(g, \Lambda)$ yields the upper bound.
  The lower bound $V^*_\Delta - V^*_\Lambda \geq 0$ follows immediately because $\Lambda \subset \Delta$, so the maximum over $\Lambda$ can never exceed the maximum over $\Delta$.
\end{proof}

\section{Theoretical Comparison: Modular Gating vs. Monolithic
  Retraining}
\label{app:theoretical_comparison}

In this appendix, we rigorously contrast our proposed Modular Robustness with standard Monolithic training (or retraining on a mixture). We analyze the trade-off from four perspectives:
\begin{enumerate}
    \item \textbf{Qualitative Scenarios (Section \ref{app:qualitative_scenarios}):} We illustrate the trade-off between specialization and flexibility with concrete examples.
    \item \textbf{Optimization Geometry (Section \ref{app:optimization_gap}):} We prove that monolithic models face a fundamental ``Interference Floor" determined by the Jensen-Shannon Divergence.
    \item \textbf{Architectural Safety (Section \ref{app:linear_safety}):} We show that for linear models, our approach coincides with the optimal retrained model.
    \item \textbf{Statistical Generalization (Section \ref{app:generalization_gap}):} We derive bounds showing that modular sample complexity scales with the lightweight gate.
\end{enumerate}

\subsection{Qualitative Scenarios: Retraining vs. Gating}
\label{app:qualitative_scenarios}

We compare the optimal gated model for a fixed $\lambda$,
$\pi_{g^*_\lambda} = \argmin_{\pi \in \Pi_G} \KL(\h \sfp_\lambda
\parallel \pi)$ (where $\Pi_G = \{\pi_g \mid g \in \sGone\}$), with
the optimal retrained model
$\h \pi_\lambda = \argmin_{\pi \in \Pi} \KL(\h \sfp_\lambda \parallel
\pi)$. Both approaches seek the best model for $\h \sfp_\lambda$
within their respective model classes ($\Pi_G$ for gating, $\Pi$ for
retraining).

For a fixed $\lambda$, both optimal solutions are, by definition, at
least as good as the simple, non-adaptive mixture
$\pi_\lambda = \sum_k \lambda_k \h \pi_k$ (which is a feasible point
in both $\sGone$ and $\Pi$, assuming $\Pi$ is closed under convex
combinations). Thus, by Proposition~\ref{prop:fixed-mixture}, both
optimal solutions have their performance upper-bounded by the same
quantity:
\begin{align*}
  \KL(\h \sfp_\lambda \parallel \pi_{g^*_\lambda})
  &\leq \KL(\h \sfp_\lambda \parallel \pi_\lambda)
  \leq \sum_k \lambda_k \e_k
\\
  \KL(\h \sfp_\lambda \parallel \h \pi_\lambda)
  &\leq \KL(\h \sfp_\lambda \parallel \pi_\lambda)
  \leq \sum_k \lambda_k \e_k
\end{align*}
The key difference lies in which model class, $\Pi_G$ (the
$x$-dependent mixtures of experts) or $\Pi$ (the original model
family), is better suited for approximating the specific mixture
distribution $\h \sfp_\lambda$. This leads to a trade-off between
specialization and flexibility, illustrated in the following
scenarios.

\begin{itemize}
\item \textbf{Scenario 1: Retraining Wins (Shuffling of the data may
    introduce new structures in the mixture).}
  The retrained model $\h \pi_\lambda$ will outperform the best gated
  model $\pi_{g^*_\lambda}$ when shuffling of the mixture distribution
  $\h \sfp_\lambda$ introduces significant new patterns or structures
  that are not present in the individual source distributions
  $\h \sfp_k$ and cannot be approximated by an $x$-dependent mixture
  of their expert models $\{\h \pi_k\}$.

  \textbf{Abstract Case:} Let $\Pi$ be a powerful, universal model
  class.  The optimal model for the mixture, $\h \pi_\lambda \in \Pi$,
  may be very accurate (low $\KL$). However, if this optimal
  $\h \pi_\lambda$ is fundamentally different from any model in
  $\Pi_G = \{\sum_k g(x, k) \h \pi_k \mid g \in \sGone\}$, then
  $\h \pi_\lambda \notin \Pi_G$. The best gated model
  $\pi_{g^*_\lambda}$ will be a poor approximation, and its error
  $V_1(\lambda) = \min_{\pi \in \Pi_G} \KL(\h \sfp_\lambda \parallel
  \pi)$ will be large.

  \textbf{Concrete Example:} Let $\h \sfp_1$ be text describing only
  cats, making $\h \pi_1$ a ``cat expert". Let $\h \sfp_2$ be text
  describing only dogs, making $\h \pi_2$ a ``dog expert". Let the
  mixture $\h \sfp_\lambda$ contain both $\h \sfp_1$ and $\h \sfp_2$,
  but from shuffling of the data, possibly also a new set of documents
  that \emph{compare} cats and dogs. The retrained model
  $\h \pi_\lambda \in \Pi$ sees these new comparison articles and
  learns the corresponding language patterns. The gated model
  $\pi_{g^*_\lambda} \in \Pi_G$, however, has no ``comparison expert"
  to choose from. It can only mix the cat expert and the dog expert,
  resulting in a poor approximation of the new comparison texts. In
  this case, we expect
  $\KL(\h \sfp_\lambda \parallel \h \pi_\lambda) < \KL(\h \sfp_\lambda
  \parallel \pi_{g^*_\lambda})$.

\item \textbf{Scenario 2: Gated Model Wins (Conflicting tasks).}
  The gated model $\pi_{g^*_\lambda}$ will outperform the retrained
  model $\h \pi_\lambda$ when the source distributions $\h \sfp_k$
  represent distinct or conflicting tasks that are ``easier" to learn
  separately than together, especially if the base model class $\Pi$
  has limited capacity or suffers from optimization issues like
  catastrophic forgetting.

  \textbf{Abstract Case:} Let $\h \sfp_1$ and $\h \sfp_2$ have largely
  disjoint supports. A retrained model $\h \pi_\lambda \in \Pi$
  trained on the mixture $\h \sfp_\lambda$ may find a poor compromise,
  modeling neither $\h \sfp_1$ nor $\h \sfp_2$ well (a phenomenon
  known as parameter interference or catastrophic forgetting). Its
  error $\KL(\h \sfp_\lambda \parallel \h \pi_\lambda)$ could be high,
  potentially $\gg \max(\e_1, \e_2)$. The optimal gate $g^*_\lambda$,
  however, can learn to be a perfect ``router": it sets
  $g^*(x, 1) \approx 1$ for $x \in \supp(\h \sfp_1)$ and
  $g^*(x, 2) \approx 1$ for $x \in \supp(\h \sfp_2)$. The resulting
  model $\pi_{g^*_\lambda}(x)$ effectively \emph{is} $\h \pi_1(x)$ on
  the first part of the data and $\h \pi_2(x)$ on the second. The
  performance
  $V_1(\lambda) = \KL(\h \sfp_\lambda \parallel \pi_{g^*_\lambda})$
  will be approximately the average of the individual expert errors:
  $V_1(\lambda) \approx \lambda_1 \KL(\h \sfp_1 \parallel \h \pi_1) +
  \lambda_2 \KL(\h \sfp_2 \parallel \h \pi_2) \le \sum_k \lambda_k
  \e_k \le \max_k \e_k$.

  \textbf{Concrete Example:} Let $\Pi$ be a transformer. Let
  $\h \sfp_1$ be an English$\to$French translation dataset ($\h \pi_1$
  is the expert, low $\e_1$) and $\h \sfp_2$ be English$\to$German
  ($\h \pi_2$ is the expert, low $\e_2$). Retraining $\h \pi_\lambda$
  on the 50/50 mixture may cause parameter interference, resulting in
  a model that confuses French and German vocabulary. The
  $x$-dependent gate $g^*_\lambda$ can learn to read the prompt (e.g.,
  ``Translate to French:") and set $g^*(x, 1) = 1$, perfectly routing
  to the correct expert. In this common scenario, we expect
  $\KL(\h \sfp_\lambda \parallel \pi_{g^*_\lambda}) < \KL(\h
  \sfp_\lambda \parallel \h \pi_\lambda)$.
\end{itemize}

\subsection{The Optimization Gap: Interference vs. Decoupling}
\label{app:optimization_gap}

\subsubsection{The Monolithic Barrier: Diversity as Interference}

Consider a monolithic model $\hat{\pi}_{\lambda}$ retrained to minimize the loss on the aggregate mixture $\h\sfp_\lambda = \sum_{k=1}^p \lambda_k \h\sfp_k$. While the model may achieve low training loss on the mixture distribution, its performance on individual tasks is governed by the \emph{Jensen-Shannon Decomposition Identity}. For any model $\pi$, the average task risk decomposes exactly into two terms:
\begin{equation}
    \underbrace{\sum_{k=1}^p \lambda_k \KL(\h\sfp_k \parallel \pi)}_{\text{Average Task Risk}} = \underbrace{\KL(\h\sfp_\lambda \parallel \pi)}_{\text{Mixture Fit}} + \underbrace{\JSD^\lambda(\h\sfp_1, \dots, \h\sfp_p)}_{\text{Interference}}.
\end{equation}
This equality reveals a fundamental limitation. Even in the limit of infinite capacity where the model fits the global mixture perfectly ($\KL(\h\sfp_\lambda \parallel \pi) \to 0$), the average risk is strictly lower-bounded by the task diversity. We formalize this in the following theorem.

\begin{restatable}[The JSD Gap]{theorem}{Improvement}
\label{thm:jsd_gap}
Let $\{\h \sfp_k\}_{k = 1}^p$ be source distributions and let $\e_k = \min_{\pi \in \Pi} \KL(\h \sfp_k \parallel \pi)$ be the best-in-class error for each source. Then, the risk of the optimal retrained model $\h \pi_\lambda$ satisfies:
\[
  \KL(\h \sfp_\lambda \parallel \h\pi_\lambda)
  \geq \sum_{k = 1}^p \lambda_k \e_k - \JSD^\lambda(\h \sfp_1, \ldots, \h \sfp_p).
\]
\end{restatable}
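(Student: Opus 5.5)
The proof rests on the Jensen-Shannon Decomposition Identity stated just before the theorem, applied to the particular model $\pi = \h\pi_\lambda$. We first verify that identity directly. For any $\pi$ with $\supp(\h\sfp_\lambda) \subseteq \supp(\pi)$, expand each component divergence through the mixture:
\[
\KL(\h\sfp_k \parallel \pi) = \E_{x \sim \h\sfp_k}\bracket*{\log \tfrac{\h\sfp_k(x)}{\h\sfp_\lambda(x)}} + \E_{x \sim \h\sfp_k}\bracket*{\log \tfrac{\h\sfp_\lambda(x)}{\pi(x)}}.
\]
Multiply by $\lambda_k$ and sum over $k$. The first terms add up to $\JSD^\lambda(\curl*{\h\sfp_k})$ by definition. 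The second terms combine, by linearity of expectation in the distribution, into $\E_{x \sim \h\sfp_\lambda}[\log(\h\sfp_\lambda(x)/\pi(x))] = \KL(\h\sfp_\lambda \parallel \pi)$. This gives
\[
\sum_k \lambda_k \KL(\h\sfp_k \parallel \pi) = \KL(\h\sfp_\lambda \parallel \pi) + \JSD^\lambda.
\]
If the support condition fails, both sides are $+\infty$, so the identity still holds in the extended sense. The same identity is used in \eqref{eq:jsd_identity} in the proof of Theorem~\ref{th:robust-existence}, so we may simply cite it.

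Next, specialize to $\pi = \h\pi_\lambda \in \Pi$. Since $\e_k$ is defined as the best-in-class error over $\Pi$, and $\h\pi_\lambda$ is a member of $\Pi$, we have $\KL(\h\sfp_k \parallel \h\pi_\lambda) \ge \e_k$ for every $k$. Multiplying by $\lambda_k \ge 0$ and summing gives
\[
\KL(\h\sfp_\lambda \parallel \h\pi_\lambda) + \JSD^\lambda = \sum_k \lambda_k \KL(\h\sfp_k \parallel \h\pi_\lambda) \ge \sum_k \lambda_k \e_k.
\]
Rearranging yields the stated bound. We never use the optimality of $\h\pi_\lambda$ for the mixture, only that it lies in $\Pi$. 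So the inequality holds for every $\pi \in \Pi$, and in particular for the retrained model.

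The only delicate point is the infinite case. If $\KL(\h\sfp_\lambda \parallel \h\pi_\lambda) = +\infty$, the bound is trivial. Otherwise the support condition holds, every term in the identity is finite (since $\JSD^\lambda \le H(\lambda) < \infty$), and the subtraction is legitimate. No other step presents an obstacle.
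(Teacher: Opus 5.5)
Your proposal is correct and matches the paper's proof: derive the Jensen--Shannon decomposition identity, specialize it to $\pi = \h\pi_\lambda$, and use $\KL(\h\sfp_k \parallel \h\pi_\lambda) \ge \e_k$ because $\h\pi_\lambda \in \Pi$. Your two extra remarks are sound refinements of points the paper only alludes to: the handling of the $+\infty$ case, and the observation that optimality of $\h\pi_\lambda$ is never used, so the bound holds for every $\pi \in \Pi$.
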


\begin{proof}
  The proof relies on a fundamental identity for the KL divergence of a mixture.
  For any model $\pi$ and letting $\mathcal{X} = \cup_k \supp(\h \sfp_k)$, the following equality holds:
\begin{align*}
\sum_{k = 1}^p \lambda_k \KL(\h \sfp_k \parallel \pi)
& = \sum_{k = 1}^p \lambda_k \sum_{x \in \mathcal{X}} \h \sfp_k (x) \log \frac{\h \sfp_k(x)}{\pi(x)} \\
& = \sum_{k = 1}^p \lambda_k \sum_{x \in \mathcal{X}} \h \sfp_k (x) \left( \log \frac{\h \sfp_\lambda(x)}{\pi(x)} + \log \frac{\h \sfp_k(x)}{\h \sfp_\lambda(x)} \right) \\
& = \sum_{x \in \mathcal{X}} \left( \sum_{k = 1}^p \lambda_k \h \sfp_k (x) \right) \log \frac{\h \sfp_\lambda(x)}{\pi(x)} + \sum_{k = 1}^p \lambda_k \left( \sum_{x \in \mathcal{X}} \h \sfp_k (x) \log \frac{\h \sfp_k(x)}{\h \sfp_\lambda(x)} \right) \\
& = \KL(\h \sfp_\lambda \parallel \pi) + \sum_{k = 1}^p \lambda_k \KL(\h \sfp_k \parallel \h \sfp_\lambda) \\
& = \KL(\h \sfp_\lambda \parallel \pi) + \JSD^\lambda(\h \sfp_1, \ldots, \h \sfp_p).
\end{align*}
Rearranging gives the identity:
\[
\KL(\h \sfp_\lambda \parallel \pi) = \sum_{k = 1}^p \lambda_k \KL(\h \sfp_k \parallel \pi) - \JSD^\lambda(\h \sfp_1, \ldots, \h \sfp_p).
\]
This holds for any $\pi \in \Pi$. We select the optimal model for the mixture, $\h\pi_\lambda$, and use the fact that $\KL(\h \sfp_k \parallel \h\pi_\lambda) \ge \e_k$ for all $k$:
\[
\KL(\h \sfp_\lambda \parallel \h\pi_\lambda) = \sum_{k = 1}^p \lambda_k \KL(\h \sfp_k \parallel \h\pi_\lambda) - \JSD^\lambda(\{\h \sfp_k\}) \ge \sum_{k = 1}^p \lambda_k \e_k - \JSD^\lambda(\{\h \sfp_k\}).
\]
This completes the proof.
\end{proof}

This theorem shows that the retrained model's performance is lower-bounded by the average best-case error $\sum_k \lambda_k \e_k$, offset by the $\JSD$ divergence, which measures the diversity of the sources. For a monolithic architecture, task diversity manifests as \textbf{Geometric Interference}. The model is forced to collapse distinct distributions into a single centroid. Consequently, as tasks become more dissimilar (increasing $\JSD$), the monolithic performance necessarily degrades.

In the theorem, we chose $\e_k$ to represent the smallest possible error. Alternatively, one can simply assume that the error of $\h \pi_\lambda$ on domain $k$ is never smaller than that of the specialist $\h \pi_k$. This is the only natural assumption required for the proof. Under this condition, the theorem establishes a fundamental limit on the retrained model: its performance is bounded below by the average specialist error minus a diversity term.

\subsubsection{Strictness of the Gap (Condition for Equality)}
One might ask if the retrained model can ever overcome this bound. We show that the inequality is strict unless the tasks are degenerate.

\begin{restatable}[Condition for Equality]{proposition}{CondEquality}
\label{prop:cond_equality}
The lower bound on the retrained model's error is met with equality if and only if the optimal model for the mixture, $\h \pi_\lambda$, is simultaneously an optimal model for every individual source distribution $\h\sfp_k$ for which $\lambda_k > 0$. That is, $\KL(\h \sfp_k \parallel \h\pi_\lambda) = \e_k$ for all $k$.
\end{restatable}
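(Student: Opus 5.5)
Since the identity established in the proof of Theorem~\ref{thm:jsd_gap} holds with equality for every model, the whole question reduces to when the single inequality step in that proof is tight. I would start from the exact identity applied to $\pi = \h\pi_\lambda$:
\[
  \KL(\h \sfp_\lambda \parallel \h\pi_\lambda) - \Bigl(\sum_{k=1}^p \lambda_k \e_k - \JSD^\lambda(\{\h\sfp_k\})\Bigr)
  = \sum_{k=1}^p \lambda_k \bigl(\KL(\h \sfp_k \parallel \h\pi_\lambda) - \e_k\bigr).
\]
Here the $\JSD$ terms cancel exactly. So the slack in the bound equals a $\lambda$-weighted sum of excess risks.

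Next I would note that each summand is non-negative. This follows from the definition $\e_k = \min_{\pi \in \Pi} \KL(\h\sfp_k \parallel \pi)$ together with $\h\pi_\lambda \in \Pi$, which gives $\KL(\h \sfp_k \parallel \h\pi_\lambda) \ge \e_k$. The lower bound is therefore attained with equality if and only if this weighted sum of non-negative terms vanishes. That happens precisely when $\lambda_k\bigl(\KL(\h \sfp_k \parallel \h\pi_\lambda) - \e_k\bigr) = 0$ for every $k$. For each $k$ with $\lambda_k>0$, this forces $\KL(\h \sfp_k \parallel \h\pi_\lambda) = \e_k$, meaning $\h\pi_\lambda$ attains the minimum defining $\e_k$ and is an optimal model for $\h\sfp_k$. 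Conversely, if that holds for all $k$ with $\lambda_k>0$, then every term is zero and equality follows. Both directions come out of the same display.

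The only delicate point is bookkeeping rather than mathematics. The statement's phrase ``for all $k$'' must be read as restricted to $k$ with $\lambda_k>0$, since components with zero weight contribute nothing to either side. I would also note two technical assumptions. First, the KL terms should be finite; otherwise the identity's rearrangement is meaningless. This holds when $\h\pi_\lambda$ has full support on $\sX_0$. Second, the minimum defining $\e_k$ must be attained, or else ``optimal model'' should be read as achieving the infimum. With those caveats stated, the argument is essentially immediate.
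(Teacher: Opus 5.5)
Your proposal is correct and matches the paper's proof: both apply the mixture identity from Theorem~\ref{thm:jsd_gap} to $\h\pi_\lambda$, cancel the $\JSD^\lambda$ terms to get $\sum_k \lambda_k(\KL(\h\sfp_k \parallel \h\pi_\lambda) - \e_k)$, and conclude from the non-negativity of each summand that equality holds iff $\KL(\h\sfp_k \parallel \h\pi_\lambda) = \e_k$ for every $k$ with $\lambda_k>0$. The differences are minor: you handle both directions from a single display, and you state explicitly two assumptions the paper leaves implicit, namely that the KL terms are finite and that the minimum defining $\e_k$ is attained.
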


\begin{proof}
Assume first that $\KL(\h \sfp_k \parallel \h\pi_\lambda) = \e_k$ for
all $k$ with $\lambda_k > 0$. We start with the general identity for
the KL divergence of a mixture:
\[
\KL(\h \sfp_\lambda \parallel \h\pi_\lambda) = \sum_{k = 1}^p \lambda_k \KL(\h \sfp_k \parallel \h\pi_\lambda) - \JSD^\lambda(\h\sfp_1, \ldots, \h\sfp_p).
\]
Substituting our assumption into this identity directly yields the
desired equality:
\[
\KL(\h \sfp_\lambda \parallel \h\pi_\lambda) = \sum_{k = 1}^p \lambda_k \e_k - \JSD^\lambda(\h\sfp_1, \ldots, \h\sfp_p).
\]

Assume now that the equality holds:
\[
\KL(\h \sfp_\lambda \parallel \h\pi_\lambda) = \sum_{k = 1}^p \lambda_k \e_k - \JSD^\lambda(\h\sfp_1, \ldots, \h\sfp_p).
\]
By equating this assumption with the general identity from the first part, we get:
\[
\sum_{k = 1}^p \lambda_k \e_k - \JSD^\lambda(\{\h\sfp_k\}) = \sum_{k = 1}^p \lambda_k \KL(\h \sfp_k \parallel \h\pi_\lambda) - \JSD^\lambda(\{\h\sfp_k\}).
\]
Canceling the $-\JSD^\lambda(\{\h\sfp_k\})$ term and rearranging gives:
\[
\sum_{k = 1}^p \lambda_k \left( \KL(\h \sfp_k \parallel \h\pi_\lambda) - \e_k \right) = 0.
\]
By the definition of $\e_k$ as the minimum error, the term
$\KL(\h \sfp_k \parallel \h\pi_\lambda) - \e_k$ must be greater than
or equal to zero for all $k$. Since each $\lambda_k$ is also
non-negative, the expression is a sum of non-negative terms. Such a
sum can only equal zero if every term is individually zero. Therefore,
for every $k$ with $\lambda_k > 0$, it must be that:
\[
\KL(\h \sfp_k \parallel \h\pi_\lambda) - \e_k = 0,
\]
which implies $\KL(\h \sfp_k \parallel \h\pi_\lambda) = \e_k$. This
completes the proof.
\end{proof}

This implies that for equality to hold, the single model $\h\pi_\lambda$ must simultaneously be an optimal specialist for every individual source. This requires that the optimal specialist models themselves be identical ($\h \pi_1 = \dots = \h \pi_p$), a scenario plausible only if the source distributions are not meaningfully distinct. Since this represents a highly restrictive, degenerate condition, we can conclude that for any practical application involving distinct sources, the equality will not hold.

\subsubsection{Quantifying the Ambiguity Cost}

We now examine how the performance gap varies as a function of
how well-separated the source datasets are. Recall that the
Jensen-Shannon Divergence coincides with the mutual information:
\[
\JSD^\lambda(\h \sfp_1, \ldots, \h \sfp_p) = I(X; K) = H(\lambda) -
  H(K \mid X)
\]
where $H(K \mid X)$ is the conditional entropy of the source-index
random variable $K$ with $P(K = k) = \lambda_k$ given $X$, and where
$H(\lambda) = -\sum_{k = 1}^p \lambda_k \log \lambda_k$ is the entropy.
$H(K \mid X)$ measures the degree of \emph{overlap} or
\emph{ambiguity} between the mixture components. It quantifies the
expected uncertainty about which component $K$ generated a given
sample $X$. When the supports of the component distributions are
disjoint, the component index $K$ is fully determined by the
observation $X$, so $H(K \mid X) = 0$. As the supports begin to
overlap, certain points $x$ can be explained by multiple components,
creating ambiguity and a nonzero conditional entropy $H(K \mid X = x)$. In
this sense, $H(K \mid X)$ captures how much the mixture \emph{forgets}
about its origin; it represents the information lost about the latent
source $K$ after observing $X$. The more indistinguishable the
components become over their regions of overlap, the larger
$H(K \mid X)$ grows, up to a maximum when all $\h \sfp_k$ coincide and
$K$ is completely unidentifiable from $X$.

The story revealed by this analysis is not about a gap between two
static bounds, but rather the dynamic behavior of a \emph{single
  performance boundary} that separates the two modeling
approaches. This boundary, given by
$B(\lambda) = \sum_{k = 1}^p \lambda_k \e_k - \JSD^\lambda$, is not
fixed; its position is a direct function of the data's ambiguity,
$H(K \mid X)$. It varies from
$\sum_{k = 1}^p \lambda_k \e_k - H(\lambda)$ to
$\sum_{k = 1}^p \lambda_k \e_k$.
For perfectly separated sources where the ambiguity is zero, the
boundary is at its lowest
$\sum_{k = 1}^p \lambda_k \e_k - H(\lambda)$, guaranteeing a
significant performance gap. As the sources begin to overlap,
ambiguity grows, causing the JSD to shrink and the entire boundary to
\emph{rise} up to $\sum_{k = 1}^p \lambda_k \e_k$. This upward shift
quantifies the increasing ``cost of ambiguity'' for both models,
culminating in the boundary reaching its highest point when the
sources are identical and the advantage of gating vanishes.

To make this relationship more concrete and locate a specific dataset
along this trajectory, we now provide an explicit lower bound on the
overlap ambiguity term $H(K \mid X)$ inside the JSD.

\begin{restatable}[Bounds on Overlap Ambiguity]{proposition}{BoundsAmbiguity}
\label{prop:bounds_ambiguity}
The overlap ambiguity term $H(K \mid X)$ admits the following lower bound:
\[
  H(K \mid X) \geq 1 - \sum_{k = 1}^p \lambda_k^2 \sum_x \frac{[\h \sfp_k(x)]^2}{\h \sfp_\lambda(x)}.
\]
\end{restatable}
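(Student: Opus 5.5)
We write $H(K\mid X)$ as an average of per-point entropies of the posterior over the source index. Define the posterior $q(k\mid x) = \lambda_k \h\sfp_k(x)/\h\sfp_\lambda(x)$ for $x$ with $\h\sfp_\lambda(x)>0$. Then
\[
H(K\mid X) = \sum_x \h\sfp_\lambda(x)\, H\big(q(\cdot\mid x)\big).
\]

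The key step is to lower bound the Shannon entropy of each posterior by its collision (Tsallis-2, or Gini) entropy. For any distribution $r$ on $[1,p]$ we have
\[
H(r) = -\sum_k r_k\log r_k \ge \sum_k r_k(1-r_k) = 1 - \sum_k r_k^2 ,
\]
since $-\log t \ge 1-t$ for $t\in(0,1]$. Applied with $r = q(\cdot\mid x)$ and averaged with weights $\h\sfp_\lambda(x)$, this gives
\[
H(K\mid X) \ge \sum_x \h\sfp_\lambda(x)\Big(1 - \sum_k \frac{\lambda_k^2 \h\sfp_k(x)^2}{\h\sfp_\lambda(x)^2}\Big).
\]
Using $\sum_x \h\sfp_\lambda(x)=1$, the right-hand side equals
\[
1 - \sum_k \lambda_k^2 \sum_x \frac{\h\sfp_k(x)^2}{\h\sfp_\lambda(x)},
\]
which is exactly the claimed bound.

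The only points needing care are bookkeeping. First, restrict all sums to $x$ with $\h\sfp_\lambda(x)>0$; the terms with $\h\sfp_k(x)>0$ but $\lambda_k=0$ contribute zero to both sides. Second, the inequality $-\log t\ge 1-t$ must hold in whatever logarithm base is used for $H$. It holds for the natural logarithm, consistent with the paper's use of $e^{\e_k}$. For base 2 it also holds on $(0,1]$, because $-\log_2 t \ge -\ln t$ there.

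I do not expect a real obstacle. The main conceptual step is recognizing that the bound is a pointwise Gini-versus-Shannon comparison applied to the Bayes posterior $q(\cdot\mid x)$.
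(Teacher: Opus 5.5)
Your proof is correct and is essentially the paper's argument. The paper applies $\log(1+t)\ge t/(1+t)$ with $1+t_k(x)=\h\sfp_\lambda(x)/(\lambda_k\h\sfp_k(x))$, which is exactly your inequality $-\log q(k\mid x)\ge 1-q(k\mid x)$ for the posterior; it is only organized as a sum over sources $k$ rather than as a pointwise Gini-versus-Shannon bound, and your remarks on zero-mass points and the logarithm base are correct and slightly more careful than the paper's.
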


\begin{proof}
The conditional entropy $H(K \mid X)$ can be expressed as
follows:
\begin{align*}
H(K \mid X)
& = -\sum_x P_X(x)\sum_{k = 1}^p P(K = k \mid X = x) \log P(K = k \mid X = x) \\
& = -\sum_x \h \sfp_\lambda(x)\sum_{k = 1}^p
\frac{\lambda_k\h \sfp_k(x)}{\h \sfp_\lambda(x)}
\log\frac{\lambda_k\h \sfp_k(x)}{\h \sfp_\lambda(x)} \\
& = \sum_{k = 1}^p \sum_x \lambda_k\h \sfp_k(x)
\log\frac{\h \sfp_\lambda(x)}{\lambda_k\h \sfp_k(x)}.
\end{align*}
Thus, we have
$H(K \mid X) = \sum_{k} \lambda_k \sum_x \h \sfp_k(x) \log \left(1 +
  t_k(x)\right)$, where
$t_k(x) = \frac{\sum_{j \neq k} \lambda_j \h \sfp_j(x)}{\lambda_k \h
  \sfp_k(x)}$. Using the inequality
$\log(1 + t) \geq \frac{t}{1 + t}$ for $t \geq 0$:
\begin{align*}
  H(K \mid X)
  & \geq \sum_{k = 1}^p \lambda_k \sum_x \h \sfp_k(x) \paren*{\frac{\sum_{j \neq k} \lambda_j \h \sfp_j(x)}{\lambda_k \h \sfp_k(x) + \sum_{j \neq k} \lambda_j \h \sfp_j(x)} } \\
& = \sum_k \lambda_k \sum_x \h \sfp_k(x) \paren*{\frac{\h \sfp_\lambda(x) - \lambda_k \h \sfp_k(x)}{\h \sfp_\lambda(x)}} \\
& = \sum_x \frac{1}{\h \sfp_\lambda(x)} \sum_{k = 1}^p \paren*{\lambda_k \h \sfp_k(x)\h \sfp_\lambda(x) - \lambda_k^2[\h \sfp_k(x)]^2} \\
  & = \sum_x \frac{\h \sfp_\lambda(x)(\sum_k \lambda_k \h \sfp_k(x))
    - \sum_k \lambda_k^2[\h \sfp_k(x)]^2}{\h \sfp_\lambda(x)} \\
  & = \sum_x \frac{[\h \sfp_\lambda(x)]^2
- \sum_k \lambda_k^2[\h \sfp_k(x)]^2}{\h \sfp_\lambda(x)}.
\end{align*}
The Chi-squared-like form follows from splitting the fraction:
$\sum_x \h \sfp_\lambda(x) - \sum_x \frac{\sum_k \lambda_k^2[\h
  \sfp_k(x)]^2}{\h \sfp_\lambda(x)} = 1 - \sum_k \lambda_k^2 \sum_x
\frac{[\h \sfp_k(x)]^2}{\h \sfp_\lambda(x)}$.
\end{proof}

By substituting this explicit lower bound for the ambiguity
$H(K \mid X)$ back into our main theorem, we arrive at a concrete
guarantee for the retrained model $\KL(\h\sfp_\lambda \parallel \h\pi_\lambda) \geq$:
\[
   \sum_{k = 1}^p \lambda_k \e_k - H(\lambda)
  + \paren*{\sum_x \frac{\sum_{i \neq j} \lambda_i \lambda_j \h \sfp_i(x) \h \sfp_j(x)}{\h \sfp_\lambda(x)}}.
\]
The last term, an interaction score, is a non-negative quantity that
is zero if and only if the sources are disjoint. This final inequality
makes the counter-intuitive conclusion undeniable: the guaranteed
error of a single retrained model does not decrease with overlap, but
rather increases by a computable amount directly related to the
``cross-talk" between conflicting sources. This interaction score is
the tangible price the single model pays for ambiguity, a price the
gated model avoids, thus making the gated model's advantage even more
pronounced in the face of messy, overlapping data.

\subsubsection{The Modular Advantage: Decoupling}

In contrast, the Modular Gating network effectively inverts this
relationship. Theorem \ref{th:robust-existence} establishes that the
risk of the robust gate is bounded by:
\[
    \text{Risk}_{\text{mod}} \le \underbrace{\log \paren*{\sum e^{\e_k}}}_{\text{Capacity Cost}} - \underbrace{\JSD^{\lambda}(\h\sfp_1, \dots, \h\sfp_p)}_{\text{Separability Gain}} - \text{Overlap}.
\]
Here, the divergence term appears with a \textit{negative} sign. For
the modular system, task diversity acts as a \textbf{Geometric Gain}.

\textbf{The Decoupling Hypothesis.} This analysis identifies a structural phase transition.
\begin{itemize}
\item \textbf{High Diversity ($\JSD \gg 0$):} When tasks are distinct
  (disjoint supports), the Monolithic model hits the interference
  floor ($\text{Risk} \ge \JSD$). However, for the Modular model, the
  separability gain becomes maximal ($\JSD^{\lambda} \to H(\lambda)$),
  effectively canceling the capacity cost. The bound converges to
  $V^* \le \max_k \epsilon_k$.
\end{itemize}
While the explicit diversity ``discount'' disappears, the result is profound: the modular architecture has successfully \textbf{decoupled} the tasks. The risk is no longer lower-bounded by the geometry ($\JSD$), but is instead upper-bounded strictly by the intrinsic error of the experts.

\subsection{Safety in Convex Regimes: A Geometric Perspective}
\label{app:linear_safety}

One might ask: does modularity sacrifice performance when tasks are simple and compatible? We prove that in convex settings, the answer is no. We assume $\Pi$ is a linear model family (e.g., exponential families).

Our analysis leverages the \emph{Pythagorean equality theorems} of
\cite{Csiszar1975} and \cite{CsiszarMatus2003}, which hold for
\emph{linear models} defined as follows. We assume throughout this section that $\Pi$ is a \emph{linear model}
in the sense of \cite{Csiszar1975}. That is, there exists a
finite-dimensional vector space of statistics
$\phi = (\phi_1, \ldots, \phi_m)$ and an open convex set
$\cE \subseteq \mathbb{R}^m$ such that
$\Pi = \curl*{ \pi \in \cP(\sX) \colon \E_\pi[\phi] \in \cE}$. The
map $\eta(\pi) = \E_\pi[\phi]$ is called the \emph{expectation
  parameter}. The set $\cE$ is affine, and the mapping
$\pi \mapsto \eta(\pi)$ is one-to-one in the interior of $\Pi$. The
family $\Pi$ thus includes both affine families and regular
exponential families in their expectation parametrization.

For any $\sfp$, we denote by $\pi^*(\sfp)$ its \emph{I-projection} on
$\Pi$, that is
$\pi^*(\sfp) = \argmin_{\pi \in \Pi} \KL(\sfp \parallel \pi)$. The
following fundamental equality is due to
\cite{Csiszar1975,CsiszarMatus2003}.

\begin{lemma}[Pythagorean equality for linear models]
\label{lemma:pythagoras}
For any $\sfp$ and any $\pi \in \Pi$,
\[
  \KL(\sfp \parallel \pi)
  = \KL(\sfp \parallel \pi^*(\sfp)) + \KL(\pi^*(\sfp) \parallel \pi).
\]
\end{lemma}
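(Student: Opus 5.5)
The plan is to derive the Pythagorean equality from the first-order optimality condition of the I-projection $\pi^*(\sfp) = \argmin_{\pi \in \Pi} \KL(\sfp \parallel \pi)$. I will read $\Pi$ as a log-linear (exponential) family in its expectation parametrization. The key fact to establish is that $\log \pi^*(\sfp) - \log \pi$ lies in the span of the statistics $\phi_1, \ldots, \phi_m$ and the constants. In addition, the I-projection matches moments: $\E_{\pi^*(\sfp)}[\phi] = \E_\sfp[\phi]$. Given these two facts, the identity is a short algebraic computation.

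\textbf{Step 1 (moment matching).} Write a generic member of $\Pi$ as $\pi_\theta(x) = \exp(\theta \cdot \phi(x) - A(\theta))$, with $\theta \mapsto \eta(\pi_\theta)$ one-to-one onto $\cE$. Minimizing $\KL(\sfp \parallel \pi_\theta) = -H(\sfp) - \theta \cdot \E_\sfp[\phi] + A(\theta)$ over $\theta$ is a convex problem, since $A$ is convex. Its stationarity condition is $\nabla A(\theta^*) = \E_{\pi_{\theta^*}}[\phi] = \E_\sfp[\phi]$. Existence of the minimizer uses the standing assumption that the I-projection exists, i.e.\ that $\E_\sfp[\phi] \in \cE$. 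I would state this explicitly as the hypothesis under which $\pi^*(\sfp)$ is defined.

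\textbf{Step 2 (expansion).} For any $\pi = \pi_\theta \in \Pi$, write $\log\frac{\sfp}{\pi} = \log\frac{\sfp}{\pi^*} + \log\frac{\pi^*}{\pi}$ and take expectations under $\sfp$. The first term is $\KL(\sfp \parallel \pi^*)$. The second term equals $(\theta^* - \theta)\cdot \E_\sfp[\phi] - A(\theta^*) + A(\theta)$, because $\log\frac{\pi^*}{\pi}$ is affine in $\phi$. By Step 1, $\E_\sfp[\phi] = \E_{\pi^*}[\phi]$, so this same expression equals $\E_{\pi^*}[\log\frac{\pi^*}{\pi}] = \KL(\pi^* \parallel \pi)$. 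This gives the identity.

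The main obstacle is reconciling this with the paper's stated definition of a linear model, $\{\pi : \E_\pi[\phi] \in \cE\}$. Under that literal definition, $\log\pi^* - \log\pi$ need not be affine in $\phi$. For a genuine linear family in Csisz\'ar's sense, the Pythagorean relation instead holds for the reverse projection, with the roles of the arguments swapped. I would therefore make the exponential-family reading explicit, which the paper itself invokes through ``exponential families in their expectation parametrization''. I would then cite \cite{Csiszar1975,CsiszarMatus2003} for the general statement and present the computation above as the verification. Finiteness of $\sX_0$ removes all integrability issues. Strict positivity of exponential-family densities guarantees that every logarithm in the expansion is finite.
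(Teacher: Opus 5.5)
Your argument is correct, and it takes a different route from the paper, which gives no proof of this lemma. The paper asserts the identity for its ``linear model'' class and defers to Csisz\'ar (1975) and Csisz\'ar--Mat\'u\v{s} (2003). You instead give a self-contained verification for exponential families. Stationarity yields the moment matching $\E_{\pi^*}[\phi]=\E_{\sfp}[\phi]$. Since $\log(\pi^*/\pi)$ is affine in $\phi$, its expectations under $\sfp$ and under $\pi^*$ then coincide, which is the identity.

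Your caveat about the paper's definition is substantive, not cosmetic. In Csisz\'ar's terminology the lemma's $\pi^*(\sfp)$ is a reverse I-projection, whereas the 1975 identity for linear families concerns $\argmin_{\pi\in\Pi}\KL(\pi\parallel\sfp)$. For a genuine linear family the stated identity can fail. Take $\{0,1,2,3\}$ with $\phi(x)=x$ and $\Pi=\{\pi:\E_{\pi}[\phi]=\tfrac32\}$, and let $\sfp=(\tfrac12,\tfrac12,0,0)$. The minimizer of $\KL(\sfp\parallel\cdot)$ over $\Pi$ is $\pi^*=(\tfrac14,\tfrac38,0,\tfrac38)$, which one can confirm via the KKT conditions. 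For the uniform $\pi\in\Pi$ one gets $\KL(\sfp\parallel\pi)=\log 2$. But $\KL(\sfp\parallel\pi^*)+\KL(\pi^*\parallel\pi)=\tfrac12\log 2+\tfrac12\log\tfrac43+\tfrac34\log\tfrac32>\log 2$. With $\cE$ open, as literally written, the lemma is merely trivial: the minimum is attained in $\Pi$ only when $\sfp\in\Pi$, and then $\pi^*(\sfp)=\sfp$. So your exponential-family reading is the one under which the lemma has content.

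Its price appears one step later. The proof of Theorem~\ref{thm:linear_safety} also needs $\sum_k\lambda_k\pi_k\in\Pi$, which exponential families generally violate. For example, an equal mixture of two distinct binomial laws with $n\ge 2$ is not binomial. That theorem therefore requires $\Pi$ to be both log-affine and closed under mixtures. An example is a family with a fixed conditional law inside each cell of a partition and free cell weights.
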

This equality characterizes the linear models: it expresses the fact
that the projection $\pi^*(\sfp)$ is orthogonal to $\Pi$ in the
geometry induced by $\KL$.

\CommutingPythagorean*

\begin{proof}
  By the Pythagorean equality (Lemma~\ref{lemma:pythagoras}) for linear models applied to each component $\h \sfp_k$ and to an arbitrary model $\pi \in \Pi$; denoting $\pi_k = \pi^*(\h \sfp_k)$ we obtain for each $k$,
\[
\KL(\h \sfp_k \parallel \pi)
= \KL(\h \sfp_k \parallel \pi_k) + \KL(\pi_k \parallel \pi).
\]
Multiplying by $\lambda_k$ and summing up yields
\begin{equation}
\label{eq:sum-identity}
\sum_{k=1}^p \lambda_k \KL(\h \sfp_k \parallel \pi)
= \sum_{k=1}^p \lambda_k \KL(\h \sfp_k \parallel \pi_k)
+ \sum_{k=1}^p \lambda_k \KL(\pi_k \parallel \pi).
\end{equation}
The functions $\pi \mapsto \KL(\h \sfp_\lambda \parallel \pi)$ and
$\pi \mapsto \sum_k \lambda_k \KL(\h \sfp_k \parallel \pi)$ differ by
a constant independent of $\pi$. Specifically,
\[
\sum_{k=1}^p \lambda_k \KL(\h \sfp_k \parallel \pi) - \KL(\h \sfp_\lambda \parallel \pi)
= \JSD^\lambda(\h\sfp_1, \ldots, \h\sfp_p).
\]
This constant difference implies they share the same unique minimizer
over $\pi \in \Pi$. The minimizer of
$\pi \mapsto \sum_k \lambda_k \KL(\h \sfp_k \parallel \pi)$ over
$\pi \in \Pi$ is the same as the minimizer of
$\pi \mapsto \sum_k \lambda_k \KL(\pi_k \parallel \pi)$, which is
equivalent to minimizing $-\E_{\pi_g}[\log \pi]$, where
$\pi_g = \sum_k \lambda_k \pi_k$. Since $\Pi$ is a linear model, the
convex combination $\pi_g$ lies in $\Pi$. Therefore, the unique
minimizer of $\KL(\pi_g \parallel \pi)$ is $\pi = \pi_g$. This
establishes that $\pi_g$ is the minimizer of
$\KL(\h \sfp_\lambda \parallel \pi)$, proving the first claim:
\[
\pi_g = \pi^*(\h \sfp_\lambda).
\]
Moreover, setting $\pi = \pi_g$ in \eqref{eq:sum-identity} yields the following useful error decomposition:
\[
\sum_{k=1}^p \lambda_k \KL(\h \sfp_k \parallel \pi_g)
= \sum_{k=1}^p \lambda_k \KL(\h \sfp_k \parallel \pi_k)
+ \sum_{k=1}^p \lambda_k \KL(\pi_k \parallel \pi_g).
\]
The second term on the right-hand side is, by definition, the
Jensen-Shannon divergence of the projections,
$\JSD^\lambda(\pi_1, \ldots, \pi_p)$. The left-hand side can be
rewritten using the constant-shift identity we established earlier:
\[
\sum_{k=1}^p \lambda_k \KL(\h \sfp_k \parallel \pi_g) = \KL(\h \sfp_\lambda \parallel \pi_g) + \JSD^\lambda(\h\sfp_1, \ldots, \h\sfp_p).
\]
Combining these two equalities gives the decomposition:
\[
\KL(\h \sfp_\lambda \parallel \pi_g) = \sum_{k=1}^p \lambda_k \KL(\h \sfp_k \parallel \pi_k) + \JSD^\lambda(\pi_1, \ldots, \pi_p) - \JSD^\lambda(\h\sfp_1, \ldots, \h\sfp_p).
\]
\end{proof}

The proof also yields the following decomposition of the gated model's error.

\begin{corollary}[Error Decomposition for Linear Models]
\label{cor:error_decomposition}
Under the assumptions of Theorem~\ref{thm:linear_safety}, the error of the gated model decomposes as:
\[
  \KL(\h \sfp_\lambda \parallel \pi^*(\h \sfp_\lambda)) = \sum_{k=1}^p \lambda_k \KL(\h \sfp_k \parallel \pi_k) + \JSD^\lambda(\pi_1, \ldots, \pi_p) - \JSD^\lambda(\h\sfp_1, \ldots, \h\sfp_p).
\]
\end{corollary}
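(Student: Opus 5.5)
The corollary follows from the proof of Theorem~\ref{thm:linear_safety}, so I will reuse its ingredients. Write $\pi_k = \pi^*(\h \sfp_k)$ and $\bar\pi = \sum_k \lambda_k \pi_k$. Theorem~\ref{thm:linear_safety} gives $\bar\pi = \pi^*(\h \sfp_\lambda)$, so the left-hand side is $\KL(\h \sfp_\lambda \parallel \bar\pi)$. The task is to compute this quantity in two ways and equate them.

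\textbf{Step 1.} Apply the Pythagorean equality (Lemma~\ref{lemma:pythagoras}) to each component $\h \sfp_k$ with the test model $\pi = \bar\pi$. This is legitimate because $\bar\pi \in \Pi$: the linear model is closed under these convex combinations, as the theorem's proof uses. It gives
\[
\KL(\h \sfp_k \parallel \bar\pi) = \KL(\h \sfp_k \parallel \pi_k) + \KL(\pi_k \parallel \bar\pi).
\]
Averaging with weights $\lambda_k$ then yields
\[
\sum_k \lambda_k \KL(\h \sfp_k \parallel \bar\pi) = \sum_k \lambda_k \KL(\h \sfp_k \parallel \pi_k) + \sum_k \lambda_k \KL(\pi_k \parallel \bar\pi).
\]
Since $\bar\pi$ is exactly the $\lambda$-mixture of the $\pi_k$, the last sum is $\JSD^\lambda(\pi_1,\ldots,\pi_p)$ by definition.

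\textbf{Step 2.} Apply the Jensen--Shannon decomposition identity established in the proof of Theorem~\ref{thm:jsd_gap}. It holds for an arbitrary model, so in particular for $\pi = \bar\pi$:
\[
\sum_k \lambda_k \KL(\h \sfp_k \parallel \bar\pi) = \KL(\h \sfp_\lambda \parallel \bar\pi) + \JSD^\lambda(\h \sfp_1,\ldots,\h \sfp_p).
\]

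\textbf{Step 3.} Equate the two expressions for the average component risk and solve for $\KL(\h \sfp_\lambda \parallel \bar\pi)$. This gives exactly the stated decomposition. Finally, substitute $\bar\pi = \pi^*(\h \sfp_\lambda)$ to put the result in the stated form.

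The only delicate point is justifying that $\bar\pi \in \Pi$, which is needed both to invoke Lemma~\ref{lemma:pythagoras} with $\pi = \bar\pi$ and to identify $\bar\pi$ with the I-projection. Because the corollary is stated under the assumptions of Theorem~\ref{thm:linear_safety}, I will inherit this fact directly from that theorem rather than reprove it. Finiteness of all terms follows from the finite support $\sX_0$ and the fact that the projections dominate the supports of their targets. With those two points in hand, the remaining steps are pure algebra.
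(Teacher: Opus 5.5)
Your proposal is correct and is essentially the paper's own argument. The paper sets $\pi = \pi_g = \sum_k \lambda_k \pi_k$ in the $\lambda$-weighted Pythagorean identity \eqref{eq:sum-identity}, reads off $\sum_k \lambda_k \KL(\pi_k \parallel \pi_g)$ as $\JSD^\lambda(\pi_1, \ldots, \pi_p)$, rewrites the left-hand side via the constant-shift (Jensen--Shannon) identity, and combines the two, relying, as you do, on $\pi_g \in \Pi$ and $\pi_g = \pi^*(\h \sfp_\lambda)$ from Theorem~\ref{thm:linear_safety} rather than re-deriving them.
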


This theorem ensures that modularity is a ``safe'' architectural prior: it loses nothing in convex regimes while providing strictly superior robustness guarantees in the presence of conflicting, non-convex distributions. The best model trained on the mixture $\h \sfp_\lambda$ coincides with the gated model obtained by mixing the best component models $\pi_k$. In particular, retraining on the mixture does not improve upon gating.

\subsection{The Generalization Gap: Sample Efficiency}
\label{app:generalization_gap}

Beyond optimization, modularity offers a distinct statistical advantage. The guarantees for the robust gate transfer efficiently from empirical data to the true population. To establish these generalization bounds, we first introduce the necessary assumption and formally define the vector-valued Rademacher complexity.

\begin{assumption}[Bounded Expert Likelihoods]
\label{ass:boundedness}
We assume that the pre-trained experts are bounded away from zero on
the support of the data distribution. That is, there exists a constant
$M > 0$ such that for all $x \in \sX_0$ and all $k \in [1, p]$,
the negative log-likelihood is bounded:
\[
  \abs*{\log \h \pi_k(x)} \le M.
\]
This implies that for any valid probability mixture $\pi_g(x)$, the
probability is lower-bounded by $e^{-M}$.
\end{assumption}

In the context of Large Language Models (LLMs) with finite vocabulary
sizes, this assumption is naturally satisfied. The standard Softmax
function yields strictly positive probabilities for all tokens. Unless
an expert explicitly masks a token to $-\infty$ (assigning it zero
probability), the log-likelihood remains finite.

\paragraph{Vector-Valued Rademacher Complexity.}
Let $\cF$ be a class of functions mapping $\sX_0$ to $\Rset^p$ and
denote by $f_j$ the $j$-th component of $f \in \cF$. The empirical
Rademacher complexity associated to such a vector-valued class
of functions is defined by
\[
  \h \Rad_S(\cF)
  = \frac{1}{m} \E_{\bsigma} \bracket*{ \sup_{f \in \cF} \sum_{i=1}^m \sum_{j=1}^p
    \sigma_{i,j} f_j(x_i) },
\]
with $\sigma_{i,j}$'s independent Rademacher variables, that is
independent uniformly distributed random variables taking values in
$\curl*{-1, +1}$. Its expectation,
$\Rad_m(\cF) = \E_S\bracket*{\h \Rad_S(\cF)}$, is the Rademacher
complexity of $\cF$. Note that for $p = 1$, this coincides with the
standard notion of Rademacher complexity for real-valued
functions. The vectorial extension is also called \emph{factor graph
  Rademacher complexity} in \citep{CortesKuznetsovMohriYang2014}, in
the context of structured prediction.

We will denote by $\sfp_k$ the true distribution according to which
the dataset $D_k$ is drawn (or a sample $S_k$), and will denote by
$\Rad^k_{m_k}(\sGone)$ the Rademacher complexity of $\sGone$ for the
distribution $\sfp_k$ and a sample size $m_k$. Note that Theorem~\ref{th:generalization-gate} considers the simplified case where all experts have equal sample sizes $m_k = m$ and uniform complexity across distributions.

\GeneralizationGate*
\begin{proof}
  Let $\cL_{\sGone} = \curl*{x \mapsto -\log \pi_g(x) \mid g \in \sGone}$
  be the loss class associated with the gating hypothesis space $\sGone$.
  For a fixed $k \in [p]$, by the standard Rademacher
  complexity generalization bounds \citep[see e.g.,][]{MohriRostamizadehTalwalkar2018} for functions bounded by $M$, for any $\delta > 0$, with probability at least $1 - \delta$, the following inequality holds for all $g \in \sGone$:
  \[
  \E_{x \sim \sfp_k} \bracket*{ -\log \pi_g(x) }
  \le \E_{x \sim \h \sfp_k} \bracket*{ -\log \pi_g(x) }
  + 2 \Rad^k_{m_k}(\cL_{\sGone})
    + M \sqrt{\frac{\log \frac{1}{\delta} }{2m_k}}.
  \]
  Thus, by the union bound, the following inequality holds
  for all $k \in [p]$ with probability at least $1 - \delta$:
  \[
  \E_{x \sim \sfp_k} \bracket*{ -\log \pi_g(x) }
  \le \E_{x \sim \h \sfp_k} \bracket*{ -\log \pi_g(x) }
  + 2 \Rad^k_{m_k}(\cL_{\sGone})
    + M \sqrt{\frac{\log \frac{p}{\delta} }{2m_k}}.
  \]
  Multiplying each inequality by $\lambda_k$ and summing up yields
  that with probability at least $1 - \delta$, the following holds for
  all $g \in \sGone$ and $\lambda \in \Delta$:
\[
  \E_{x \sim \sfp_\lambda} \bracket*{ -\log \pi_g(x) }
  \le \E_{x \sim \h \sfp_\lambda} \bracket*{ -\log \pi_g(x) }
  + \sum_{k = 1}^p \lambda_k \bracket*{ 2\Rad^k_{m_k}(\cL_{\sGone})
    + M \sqrt{\frac{\log \frac{p}{\delta} }{2m_k}}}.
\]
We now bound $\Rad^k_{m_k}(\cL_{\sGone})$ in terms of
$\Rad^k_{m_k}(\sGone)$, using the vector contraction established in
\citep{CortesKuznetsovMohriYang2014} (Lemma A.1) and \citep{Maurer2016Vector}. This inequality holds for $\ell_2$-Lipschitz functions.

We view the gate function class $\sGone$ as a vector-valued hypothesis
class mapping inputs $x \in \sX_0$ to the simplex $\Delta \subset \Rset^p$. The loss function for a function
$g \in \sGone$ and a sample $x_i$ is defined as
$- \log(\pi_g(x_i)) = \Psi_i(\bu) = -\log(\bu \cdot \h \bpi(x_i))$,
where $\h \bpi(x_i) = (\h \pi_1(x_i), \dots, \h \pi_p(x_i))$
and $\bu = (g(x_i, 1), \ldots, g(x_i, p))$.
Under Assumption~\ref{ass:boundedness}, for any fixed sample $x_i$,
$\Psi_i$ is Lipschitz continuous with respect to the $\ell_2$ norm.
The gradient of $\Psi_i$ with respect to the vector $\bu$ is:
\[
  \nabla \Psi_i(\bu) = \frac{-1}{\bu \cdot \h \bpi(x_i)} \h \bpi(x_i).
\]
To determine the Lipschitz constant, we examine the $\ell_2$ norm of
the gradient:
\[
  \norm{\nabla \Psi_i(\bu)}_2
  = \frac{\norm{\h \bpi(x_i)}_2}{\abs{\bu \cdot \h \bpi(x_i)}}.
\]
First, consider the numerator. By the definition of the coincidence norm,
we have $\norm{\h \bpi(x_i)}_2 \le C_\Pi$.
Second, consider the denominator. By Assumption~\ref{ass:boundedness}, the mixture probability is
lower-bounded by $e^{-M}$.  Thus:
\[
  \norm{\nabla \Psi_i(\bu)}_2 \le \frac{C_\Pi}{e^{-M}} = C_\Pi \, e^M.
\]
The function is therefore $C_\Pi \, e^M$-Lipschitz with respect to the $\ell_2$
norm. Thus, by the vector contraction lemma, we have
\[
  \Rad^k_{m_k}(\cL_{\sGone}) \leq \sqrt{2} C_\Pi \, e^M \Rad^k_{m_k}(\sGone).
\]
Plugging in the right-hand side in the inequality previously proven
completes the proof.
\end{proof}

\subsubsection{Comparison with Monolithic Generalization}
Let $\h \pi_{\text{scratch}}$ be a monolithic model retrained on the aggregate data. Standard learning theory bounds its generalization error by the complexity of the full model class $\Pi$:
\[
  \text{GenGap}(\h \pi_{\text{scratch}}) \approx O\paren*{ \Rad_m(\Pi) }.
\]
In contrast, our modular solution scales with $\Rad_m(\sGone)$. Since the gate is typically a lightweight network (e.g., a shallow Transformer with $10^6$ parameters) compared to the massive experts (LLMs with $10^9+$ parameters), we have $\Rad_m(\sGone) \ll \Rad_m(\Pi)$. The term $C_\Pi$ measures the effective overlap of the experts. In the worst case of redundant experts, $C_\Pi \approx \sqrt{p}$. However, in the ideal modular setting where experts are specialized (disjoint supports), $C_\Pi \approx 1$. Thus, even with the overlap factor, the modular approach requires significantly fewer samples to learn a robust policy than retraining a monolithic model.



\section{Optimization and Algorithms}
\label{app:optimization-robust}

Let us analyze the $\min_g \max_\lambda$ game with this new payoff.
The $\lambda$-player seeks to maximize $\wt L(\lambda, g)$ over
$\lambda \in \Delta([1, p])$. Since $\wt L(\lambda, g)$ is linear in
$\lambda$, its maximum is also achieved at a vertex. This gives:
\[
\max_{\lambda \in \Delta([1, p])} \wt L(\lambda, g)
= \max_{k \in [1, p]} \KL \paren*{\h \sfp_k \parallel \pi_g}.
\]
This is the \emph{exact same} objective function for the $g$-player as in
our equivalent problem. Therefore, the solution to this new game is
the same as the solution to our original problem:
\[
\min_{g \in \sGone} \max_{\lambda \in \Delta} \wt L(\lambda, g)
=
\min_{g \in \sGone} \max_{k \in [1, p]} \KL \paren*{\h \sfp_k \parallel \pi_g}
=
\min_{g \in \sGone} \max_{\lambda \in \Delta} L(\lambda, g).
\]
This new game $\wt L(\lambda, g)$ is convex-concave:
\begin{itemize}
\item Convex in $g$: $\wt L$ is a positive, weighted sum of functions
  $\KL(\h \sfp_k \parallel \pi_g)$, each of which is convex in $g$.
\item Concave in $\lambda$: $\wt L$ is linear in $\lambda$, which is a
  special case of concavity.
\end{itemize}
We can therefore find the robust solution $g^*$ by applying standard
no-regret algorithms to this tractable convex-concave game $\wt L(\lambda, g)$.

\subsection{Game Dynamics and Reformulation}
\label{app:linearization}

\subsubsection{Algorithm and Player Dynamics}

The algorithm simulates the game $\wt L(\lambda, g)$ where two players, a
$\lambda$-player and a $g$-player, iteratively update their strategies.

The $\lambda$-player (maximizer) chooses a mixture distribution
$\lambda_t \in \Delta([1, p])$. This is a standard ``learning from
experts" problem. At each round, given the opponent's gate $g_t$, the
``gain" associated with selecting expert $k$ is its contribution to
the payoff:
\[
\ell_t(k) = \KL\!\paren{\h \sfp_k \,\|\, \pi_{g_t}}.
\]
The $\lambda$-player's goal is to find the mixture $\lambda$ that
maximizes $\sum_k \lambda_k \ell_t(k)$. We use the Exponentiated
Gradient algorithm, which updates the weights to favor experts with
higher gain:
\[
  \lambda_{t+1}(k)
  = \frac{\lambda_t(k) \exp \paren{\eta_\lambda \ell_t(k)}}
  {\sum_{j=1}^p \lambda_t(j) \exp \paren{\eta_\lambda \ell_t(j)}}.
\]
The $g$-player (minimizer) chooses a gate $g_t \in \cG$. Given the
$\lambda$-player's new mixture $\lambda_{t+1}$, its goal is to
minimize the loss $\wt L(\lambda_{t+1}, g_t)$. The $g$-player updates its
gate using Online Gradient Descent. The gradient of the loss
$\wt L(\lambda_{t+1}, g_t)$ with respect to the gate weights $g_t(x, k)$
is:
\begin{align*}
  v_t(x, k)
  &= \nabla_{g_t(x, k)} \wt L(\lambda_{t+1}, g_t) \\
  &= \nabla_{g_t(x, k)} \sum_{j=1}^p \lambda_{t+1}(j) \KL(\h \sfp_j \parallel \pi_{g_t}) \\
  &= \sum_{j=1}^p \lambda_{t+1}(j) \nabla_{g_t(x, k)} \KL(\h \sfp_j \parallel \pi_{g_t}) \\
  &= \sum_{j=1}^p \lambda_{t+1}(j) \paren*{ -\frac{\h \sfp_j(x) \h \pi_k(x)}{\pi_{g_t}(x)} } \\
  &= - \frac{\paren*{\sum_{j=1}^p \lambda_{t+1}(j) \h \sfp_j(x)} \h \pi_k(x)}{\pi_{g_t}(x)} \\
  &= -\frac{\h \sfp_{\lambda_{t+1}}(x) \h \pi_k(x)}{\pi_{g_t}(x)}.
\end{align*}
The gate is then updated by taking a step in the negative gradient
direction and projecting back onto the feasible set, for each $x \in \cX_0$:
\[
g_{t+1}(x,\cdot)
\gets \Pi_{\sGone} \paren*{g_t(x,\cdot) - \eta_g \, v_t(x,\cdot)}.
\]
The final solution is the time-average of the iterates, as this is
what is guaranteed to converge to the equilibrium:
$\ov g_T = \frac{1}{T} \sum_{t=1}^{T} g_t$.

\begin{algorithm}[t]
\caption{Robust Gate via No-Regret Dynamics (EG + OGD)}
\label{alg:constrained-kl}
\begin{algorithmic}[1]
  \STATE {\bfseries Input:} Models $\h \pi_1, \ldots, \h \pi_p$;
  datasets $D_1, \ldots, D_p$; learning rates $\eta_\lambda, \eta_g$;
  iterations $T$.
  \STATE {\bfseries Initialize:} $\lambda_0(k) = 1/p$;
  $g_0(x, k) = 1/p$ for all $x, k$.
  \FOR{$t=0$ {\bfseries to} $T-1$} 
  \STATE \textbf{(Compute Gains)} For each expert $k \in [1,p]$, compute
its gain for the $\lambda$-player:
$\ell_t(k) = \KL(\h \sfp_k \| \pi_{g_t})$.
  \STATE \textbf{($\lambda$-update)} Update mixture weights using
Exponentiated Gradient on gains $\ell_t$:
$\lambda_{t+1}(k) \propto \lambda_t(k) \exp(\eta_\lambda \ell_t(k))$
and re-normalize.
  \STATE \textbf{($g$-update)} Construct mixture
$\h \sfp_{\lambda_{t+1}}$ from new $\lambda_{t+1}$.
  \STATE Compute gradient
$v_t(x, k) = -(\h \sfp_{\lambda_{t+1}}(x)/\pi_{g_t}(x)) \h \pi_k(x)$
for all $(x, k)$.
  \STATE Compute intermediate update $g'_{t+1} = g_t - \eta_g v_t$.
  \STATE \label{line:projection-kl} Project onto constrained space:
  $g_{t+1} = \Pi_{\sGone}(g'_{t+1})$.
  \ENDFOR
\STATE {\bfseries Output:} The time-averaged gate
$\ov g_T = \frac{1}{T}\sum_{t=1}^T g_{t}$.
\end{algorithmic}
\end{algorithm}

\subsubsection{Projection Challenge}

The primary difficulty lies in Line~9 of
\cref{alg:constrained-kl}. The projection $\Pi_{\sGone}$
requires finding the closest point $g$ to $g'_{t+1}$ (typically in
Euclidean norm) such that $g$ satisfies both:
\begin{enumerate}
\item Local normalization: $g(x, \cdot) \in \Delta([1,p])$ for all
$x \in \sX_0$.
\item Global normalization:
$\sum_{x \in \sX_0} \sum_{k=1}^p g(x, k) \h \pi_k(x) = 1$.
\end{enumerate}
This is a projection onto the intersection of the product space of
simplices $\sG$ and an affine subspace defined by the linear equality
$Z_g = 1$.

\subsection{Convergence analysis}
\label{app:convergence_analysis}

Assuming an efficient oracle for the projection $\Pi_{\sGone}$, the
standard convergence theorems for no-regret dynamics in convex-concave
games apply to our reformulated game $\wt L(\lambda, g)$.

\ignore{
\begin{restatable}[Algorithmic Convergence]{theorem}{ConvergenceKl}
\label{th:convergence-kl-extended}
Let
$\wt V = \min_{g \in \sGone} \max_{\lambda \in \Delta} \wt L(\lambda,
g)$ be the value of the convex-concave game with payoff
$\wt L(\lambda, g) = \sum_k \lambda_k \KL(\h \sfp_k \parallel \pi_g)$.
Suppose gains $\ell_t(k)$ are bounded, $\ell_t(k) \in [0, M_\lambda]$,
and gradient norms $\norm{v_t}_2$ are bounded, $\norm{v_t}_2 \le
M_g$. If the projection $\Pi_{\sGone}$ can be computed efficiently,
then with appropriate learning rates
$\eta_\lambda = \sqrt{2 \log p / (T M_\lambda^2)}$ and
$\eta_g = D_{\sGone} / (M_g \sqrt{T})$ (where $D_{\sGone}$ is the
diameter of $\sGone$), the time-averaged strategy $\ov g_T$ converges
to the optimal robust solution:
$ \max_{\lambda \in \Delta([p])} \wt L(\lambda,\ov g_T) - \wt V \leq
\frac{D_{\sGone} M_g}{\sqrt{T}} + \frac{M_\lambda \sqrt{2 \log
    p}}{\sqrt{T}} = O \paren[\bigg]{\sqrt{\frac{\log p}{T}}}.  $ Since
$\max_{\lambda} \wt L(\lambda, g) = \max_{\lambda} L(\lambda, g)$ for
any $g$, this convergence to $\wt V$ implies convergence to the
solution of the original robust problem.
\end{restatable}
}

\ConvergenceKl*
\begin{proof}
The proof relies on the regret bounds for the players' algorithms
and the connection between average regret and the duality gap for
time-averaged strategies in convex-concave games.
Let $R_T^\lambda$ be the regret of the $\lambda$-player (using EG) and
$R_T^g$ be the regret of the $g$-player (using OGD over
$\sGone$). Standard bounds yield:
$R_T^\lambda \le M_\lambda \sqrt{2T \log p}$ and
$R_T^g \le D_{\sGone} M_g \sqrt{T}$.
Let $\ov\lambda_T = \frac{1}{T}\sum_{t=1}^T \lambda_t$ and
$\ov g_T = \frac{1}{T}\sum_{t=1}^T g_t$.
By the properties of convex-concave games, the duality gap of the
time-averaged strategies is bounded by the sum of average regrets:
\[
\max_{\lambda \in \Delta} \wt L(\lambda, \ov g_T) -
\min_{g \in \sGone} \wt L(\ov \lambda_T, g)
\le \frac{R_T^\lambda + R_T^g}{T}.
\]
The value of the game is $\wt V = \min_g \max_\lambda \wt L(\lambda, g)$.
By weak duality, we know that for any $\ov \lambda_T$,
$\min_{g \in \sGone} \wt L(\ov \lambda_T, g) \le \wt V$.
Therefore, we can bound the suboptimality of $\ov g_T$:
\begin{align*}
\max_{\lambda \in \Delta} \wt L(\lambda, \ov g_T) - \wt V
&\le \max_{\lambda \in \Delta} \wt L(\lambda, \ov g_T) -
\min_{g \in \sGone} \wt L(\ov \lambda_T, g) \\
&\le \frac{R_T^\lambda + R_T^g}{T} \\
&\le \frac{M_\lambda \sqrt{2T \log p} + D_{\sGone} M_g \sqrt{T}}{T} \\
&= \frac{D_{\sGone} M_g}{\sqrt{T}} + \frac{M_\lambda \sqrt{2 \log p}}{\sqrt{T}}.
\end{align*}
This proves convergence of the time-averaged strategy $\ov g_T$ to
the value of the game $\wt V$. As established in the reformulation,
$\wt V$ is the value of the original problem, and thus $\ov g_T$ is a
near-optimal robust gate.
\end{proof}

The analysis extends to the case where a subset
$\Lambda \subseteq \Delta([1, p])$ is used
(Section~\ref{sec:prior-knowledge}).

\begin{restatable}[Algorithmic Convergence on $\Lambda$]{theorem}{ConvergenceLambda}
\label{th:convergence-lambda}

Let $\wt V_\Lambda = \min_{g \in \sGone} \max_{\lambda \in \Lambda}
\wt L(\lambda, g)$ be the value of the restricted convex-concave game,
where $\wt L(\lambda, g) = \sum_k \lambda_k \KL(\h \sfp_k \parallel
\pi_g)$. Let $\ov g_T$ be the time-averaged gate obtained by running
the no-regret algorithm (Algorithm~\ref{alg:constrained-kl}, modified
with $\lambda$-updates projected onto $\Lambda$). Assume the gains and
gradients are bounded. Then, the algorithm admits the following
convergence guarantee:
\[
 \E\bracket*{\max_{\lambda \in \Lambda} \wt L(\lambda, \ov
  g_T)} - \wt V_\Lambda \leq O\paren*{\sqrt{\frac{\log p}{T}}}.
\]

\end{restatable}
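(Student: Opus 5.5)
The plan is to rerun the proof of Theorem~\ref{th:convergence-kl-short} with the $\lambda$-player's strategy set replaced by $\Lambda$. The only new ingredient is a regret bound for the projected Exponentiated Gradient update $\lambda_{t+1} = \proj_\Lambda(\lambda'_{t+1})$, where the projection is the KL projection defined in Section~\ref{sec:prior-knowledge}. I will view this update as online mirror descent on the convex compact set $\Lambda \subseteq \Delta([1,p])$ with the negative-entropy regularizer.

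For the $\lambda$-player, the linear gains $\ell_t(k) \in [0, M_\lambda]$ give, for any comparator $u \in \Lambda$, a regret of at most
\[
\frac{\KL(u \parallel \lambda_1)}{\eta_\lambda} + \frac{\eta_\lambda}{2}\sum_t \norm{\ell_t}_\infty^2 .
\]
This follows from the generalized Pythagorean inequality for Bregman (KL) projections onto a convex set, namely $\KL(u\parallel \proj_\Lambda(q)) \le \KL(u \parallel q)$ for $u \in \Lambda$. As a result, the projection step can only shrink the potential and the usual EG telescoping argument goes through unchanged. Initialization needs some care: starting from the uniform vector projected onto $\Lambda$ does not by itself bound $\KL(u\parallel\lambda_1)$ by $\log p$. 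I would instead run EG from uniform and use the inequality to get $\KL(u\parallel \lambda_1) \le \KL(u \parallel \text{unif}) \le \log p$. With $\eta_\lambda \propto \sqrt{\log p / T}$ this yields $R_T^\lambda \le M_\lambda\sqrt{2T\log p}$.

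The $g$-player is unchanged: projected OGD over $\sGone$, which is compact and convex by Lemma~\ref{lemma:Gone-properties}, gives $R_T^g \le D_{\sGone} M_g \sqrt{T}$. Setting $\ov\lambda_T \in \Lambda$ by convexity, the standard convex-concave argument applies, since $\wt L$ is linear in $\lambda$ and convex in $g$. It bounds the duality gap
\[
\max_{\lambda\in\Lambda}\wt L(\lambda,\ov g_T) - \min_{g}\wt L(\ov\lambda_T,g)
\]
by $(R_T^\lambda+R_T^g)/T$. Weak duality over $\Lambda$ gives $\min_g \wt L(\ov\lambda_T, g) \le \wt V_\Lambda$, which yields the claimed $O(\sqrt{\log p/T})$ bound. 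The expectation in the statement covers the case where gradients or gains are estimated stochastically. With unbiased estimates and bounded second moments, the same regret bounds hold in expectation by the standard argument: compare against a fixed comparator, then swap the max over $\Lambda$ with the expectation via the usual "ghost iterate" trick. In the deterministic case the expectation is trivial.

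The main obstacle is the regret bound for projected EG on a general convex $\Lambda$, specifically justifying the Bregman-projection Pythagorean inequality and the control of the initial divergence term. I will cite the standard mirror-descent analysis for the inequality and handle initialization as described above.
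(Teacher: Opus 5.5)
Your proposal is correct and follows the same route as the paper's proof: projected EG treated as entropic online mirror descent on $\Lambda$ with $O(\sqrt{T\log p})$ regret, the unchanged OGD regret for the $g$-player, and the duality gap of the averaged iterates bounded by the average regret plus weak duality over $\Lambda$, with you supplying details the paper leaves implicit (the Bregman--Pythagorean inequality for the KL projection, the initial-divergence term, and the stochastic reading of the expectation). The only adjustment is that the KL projection of the uniform vector onto $\Lambda$ already satisfies $\KL(u \parallel \lambda_1) \le \log p$ for all $u \in \Lambda$ by that same inequality, and it is the right initialization, since every played iterate must lie in $\Lambda$ to guarantee $\ov\lambda_T \in \Lambda$ and hence the weak-duality step.
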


\begin{proof} The proof is a direct extension of
 Theorem~\ref{th:convergence-kl-short}. The game remains
 convex-concave. The $g$-player's algorithm and regret bound are
 unchanged. The $\lambda$-player now runs a projected online mirror
 descent (OMD) algorithm, specifically, Exponentiated Gradient with a
 projection. The regret of this algorithm is still bounded relative
 to the best fixed strategy in $\Lambda \subseteq \Delta([1,
 p])$. Since the OMD algorithm uses the negative entropy regularizer
 (which leads to EG), the regret bound remains
 $R_T^\lambda \le O(\sqrt{T \log p})$. The standard analysis bounding
 the duality gap by the average regret,
 $\frac{1}{T}(R_T^\lambda + R_T^g)$, applies directly, yielding the
 $O(\sqrt{\log p / T})$ convergence rate.
\end{proof}

\subsection{Leveraging the Least-Favorable Mixture}
\label{app:least-favorable}

Our minimax solution provides two crucial outputs: the robust gate
$g^*$, and the least-favorable mixture $\lambda^*$. It is important to
note that $\lambda^*$ corresponds to the saddle point of the
\emph{linearized} game
$\wt L(\lambda, g) = \sum_k \lambda_k \KL(\h \sfp_k \parallel \pi_g)$,
rather than the original convex-convex payoff $L$.

Consequently, $\lambda^*$ identifies the specific weighting of source
domains that is maximally challenging for the modular ensemble in the
linearized regime. This insight is highly valuable in practical
scenarios where engineers must train a single, static model $\h \pi$
on the aggregated data $\h \sfp_\lambda$ (e.g., for reduced inference
latency). Instead of resorting to heuristic choices for the mixture
weights $\lambda$ (such as uniform $\lambda_k = 1/p$ or weights based
on dataset size), the optimal $\lambda^*$ resulting from the no-regret
algorithm provides a statistically principled alternative.

By training a new model on the data mixture
$\h \sfp_{\lambda^*} = \sum_k \lambda^*_k \h \sfp_k$, the resulting
model $\h \pi_{\lambda^*}$ is optimized for the distribution where the
underlying expert ensemble is most vulnerable (in terms of the upper
bound $\wt L$). This strategy effectively turns the worst-case
scenario for the gated model into the training objective for the
static model.

\subsection{Convergence of Primal-Dual Dynamics:
  Proof of Theorem~\ref{th:primal-dual-convergence}}
\label{app:convergence_primal_dual}

The pseudocode of the Primal-Dual algorithm is given
in Algorithm~\ref{alg:primal-dual}.

\begin{algorithm}[h]
   \caption{Stochastic Primal-Dual Training Loop}
   \label{alg:primal-dual}
\begin{algorithmic}[1]
   \FOR{iteration $t = 1$ {\bfseries to} $T$}
   \STATE \textbf{Data Sampling:} Sample batch $B = \cup_k B_k$ from experts.
   \STATE \textbf{Forward Pass:} Compute logits $g_\theta(x)$ and expert log-probs for $x \in B$.
   \STATE \textbf{Constraint Est.:} Estimate $\h Z \approx \frac{1}{|B|}\sum_{x \in B} \frac{\pi_g(x)}{q(x)}$ via Importance Sampling, where $q(x) = \frac{1}{p}\sum \h \pi_k(x)$ is the uniform mixture proposal.
   \STATE \textbf{$\lambda$-Step (Adversary):} Update mixture weights:
   \STATE \quad $\lambda_k \leftarrow \lambda_k \cdot \exp(\eta_\lambda \cdot \ell_k)$ \COMMENT{$\ell_k$: loss on domain $k$}
   \STATE \textbf{$\mu$-Step (Constraint):} $\mu \leftarrow \mu + \eta_\mu (\h Z - 1)$.
   \STATE \textbf{$g$-Step (Gate):} Update $\theta$ to minimize $\cL$ via AdamW.
   \ENDFOR
\end{algorithmic}
\end{algorithm}

\label{app:primal-dual-proof}

\PrimalDualConvergence*

\begin{proof}
  The proof relies on viewing the optimization of the Lagrangian
  $\mathcal{L}(\theta, \lambda, \mu)$ as a zero-sum game between a
  primal player (controlling $g_\theta$) and a dual player
  (controlling $\lambda, \mu$). We analyze the convergence using the
  framework of online convex optimization (OCO) and regret bounds.

\paragraph{1. The Lagrangian and Duality Gap.}
Recall the Lagrangian of the reformulated game:
\[
\mathcal{L}(g, \lambda, \mu) = \wt L(\lambda, g) + \mu(Z_g - 1) = \sum_{k=1}^p \lambda_k \KL(\h \sfp_k \parallel \pi_g) + \mu \paren*{\sum_{x \in \sX_0} \pi_g(x) - 1}.
\]
This function is convex in the primal variable $g$ (as established in
Lemma 3.1 and Section 5.1) and linear (concave) in the dual variables
$\lambda, \mu$.  Let $w = (\lambda, \mu)$ denote the combined dual
variables. The algorithm generates a sequence of iterates
$(g_t, w_t)_{t=1}^T$.  We define the \emph{duality gap} for the
time-averaged iterates $(\ov g_T, \ov w_T)$ as:
\[
\text{Gap}(\ov g_T, \ov w_T) = \max_{w \in \mathcal{W}} \mathcal{L}(\ov g_T, w) - \min_{g \in \sGone} \mathcal{L}(g, \ov w_T),
\]
where $\mathcal{W}$ is a compact subset of the dual space containing
the optimal dual solution $w^*$.

\paragraph{2. Regret Decomposition.}
A standard result in game dynamics \citep[e.g.,][]{freund1999adaptive,
  cesa2006prediction} states that the duality gap is bounded by the
sum of the average regrets of the players.  Let $R_T^g$ be the regret of the
$g$-player minimizing $\mathcal{L}(\cdot, w_t)$ and $R_T^w$ be the
combined regret of the dual players maximizing $\mathcal{L}(g_t, \cdot)$:
\[
\text{Gap}(\ov g_T, \ov w_T) \leq \frac{R_T^g + R_T^w}{T}.
\]

\paragraph{3. Bounding the Regrets.}
We analyze the regret for each player based on their specific update
rules in Algorithm~\ref{alg:primal-dual}:

\begin{itemize}
\item \textbf{The $\lambda$-player (Simplex):} Updates $\lambda$ using
  Exponentiated Gradient (EG). For linear losses with gradients
  bounded by $M_\lambda$, the regret of EG over the simplex is bounded
  by:
  \[
    R_T^\lambda \le M_\lambda \sqrt{2 T \log p}.
  \]
    
\item \textbf{The $\mu$-player (Scalar Constraint):} Updates $\mu$
  using Gradient Ascent (Dual Ascent). Assuming the constraint
  violation (gradient w.r.t $\mu$) is bounded by
  $M_\mu = \max_g |Z_g - 1|$ and the optimal $\mu^*$ lies in a bounded
  range $[-D_\mu, D_\mu]$, standard Gradient Ascent bounds give:
  \[
    R_T^\mu \le D_\mu M_\mu \sqrt{T}.
  \]
    
\item \textbf{The $g$-player (Gate Parameters):} Updates $g$ (via
  $\theta$) using AdamW (a variant of Online Mirror Descent). Under
  the convexity assumption of $\wt L$ w.r.t $g$ and bounded gradients
  $M_g$, the regret is bounded by:
  \[
    R_T^g \le D_{\sGone} M_g \sqrt{T}.
  \]
\end{itemize}

\paragraph{4. Convergence Rate.}
Summing these terms, the total average regret scales as:
\[
\frac{R_T^{total}}{T} \le \frac{C \sqrt{T}}{T} = O\paren*{\frac{1}{\sqrt{T}}}.
\]
Thus, the duality gap decays at a rate of $O(1/\sqrt{T})$.

\paragraph{5. Recovering the Objectives.}
The convergence of the duality gap implies convergence of both the
objective value and the constraint satisfaction:
\begin{itemize}
\item \textbf{Robust Loss:}
  $\max_\lambda \wt L(\lambda, \ov g_T) - \wt V \le \text{Gap}(\ov
  g_T, \ov w_T) \le O(1/\sqrt{T})$.
\item \textbf{Constraint Violation:} The Lagrangian term
  $\mu(Z_g - 1)$ implies that if the constraint is violated
  ($|Z_{\ov g_T} - 1| > \epsilon$), the dual player $\mu$ would
  exploit this to maximize the gap. Therefore, the constraint
  violation is also bounded by the gap:
    \[
      |Z_{\ov g_T} - 1|
      \le \frac{\text{Gap}(\ov g_T, \ov w_T)}{|\mu^*|}
      \le O\paren*{\frac{1}{\sqrt{T}}}.
    \]
\end{itemize}
This completes the proof that the Primal-Dual algorithm converges to
the optimal robust solution while asymptotically satisfying the
normalization constraint.
\end{proof}

\newpage
\section{Scalable Implementation and Inference}
\label{app:scalable_implementation}

To scale the robust modular framework to high-dimensional generative
models such as Transformers, we must address two practical challenges:
characterizing the functional form of the gate $g$, and enforcing the
global normalization constraint $Z_g = 1$ during stochastic
optimization. This section details the system architecture and the
Primal-Dual algorithm used to solve the minimax game.

\subsection{Architecture Parameterization}

We parameterize the components of the modular system as follows:

\textbf{1. The Experts ($\h \pi_k$):} The ensemble consists of $p$
pre-trained, frozen autoregressive models (e.g., GPT-style Causal
Transformers). For a sequence $x = (x_1, \dots, x_T)$, each expert $k$
provides a conditional probability distribution
$\h \pi_k(x_t \mid x_{<t})$. The total log-probability of a sequence
is $\log \h \pi_k(x) = \sum_{t=1}^T \log \h \pi_k(x_t \mid x_{<t})$.

\textbf{2. The Gate ($g_\theta$):} Unlike the experts, the gate
function is \emph{non-causal}. It observes the entire sequence $x$ to
determine the optimal routing weights. We parameterize $g_\theta$ as a
\textbf{Transformer Encoder} (e.g., BERT-style) with parameters
$\theta$.

Here's a mathematical definition of the gate function.  We
parameterize the gate function $g_\theta$ as a non-causal,
bidirectional Transformer Encoder. Unlike the experts, which must be
causal to generate text, the gate observes the full input sequence
$x = (x_1, \dots, x_T)$ to determine the optimal mixing weights. The
computation is defined as follows:

\begin{align*}
  H^{(0)} & = \text{Embed}(x) + \text{PosEnc} \quad \in \Rset^{T \times d} \\
  H^{(L)} & = \text{TransformerEncoder}_\theta \paren*{ H^{(0)} }
            \quad \in \Rset^{T \times d} \\
  v & = \text{Pool}(H^{(L)}) = \frac{1}{T} \sum_{t=1}^T H^{(L)}_t
      \quad \in \Rset^d \quad (\text{Global Mean Pooling}) \\
  w & = W_{\text{out}} v + b_{\text{out}} \quad \in \Rset^p \\
  g_\theta(x) & = \text{Softmax}(w) \quad \in \Delta([1, p]).
\end{align*}
Here, $d$ is the hidden dimension of the gate model, $L$ is the number
of encoder layers, and $p$ is the number of experts. The global
pooling step aggregates the bidirectional context into a single vector
$v$, ensuring that the routing decision $g_\theta(x)$ is based on the
entire sequence content.

\subsection{The Stochastic Primal-Dual Algorithm}
\label{app:stochastic_primal_dual_algorithm}

The algorithm solves the saddle-point problem defined by the Lagrangian:
\[
  \min_{\theta} \max_{\lambda \in \Delta, \mu \in \mathbb{R}}
  \left[ \sum_{k=1}^p \lambda_k \cL_{\text{NLL}}(k, \theta)
    + \mu (Z_{g_\theta} - 1) \right]
\]

\paragraph{Hyperparameters:}
\begin{itemize}
\item $\eta_g$: Learning rate for Gate (e.g., $10^{-4}$, using AdamW).
\item $\eta_\lambda$: Learning rate for Adversary (e.g., $0.1$,
  using SGD/Exponentiated Gradient).
\item $\eta_\mu$: Learning rate for Constraint (e.g., $10^{-2}$,
  using SGD).
\item $\alpha$: Moving average factor for estimating global $Z$
  (e.g., 0.9).
\end{itemize}

\paragraph{Initialization:}
\begin{itemize}
    \item Initialize Gate parameters $\theta$.
    \item Initialize $\log \lambda = [0, \dots, 0]$ (uniform distribution).
    \item Initialize $\mu = 0$.
    \item Initialize Global Normalization Estimate $\ov Z = 1.0$.
\end{itemize}

\begin{algorithm}[tb]
   \caption{Stochastic Primal-Dual Training Loop}
   \label{alg:primal_dual_stochastic}
\begin{algorithmic}[1]
   \FOR{iteration $t = 1$ {\bfseries to} $T$}
   \STATE \textbf{1. Data Sampling:}
   \STATE Sample a batch $B_k$ of size $M$ from each source dataset $D_k$.
   \STATE Combine into a super-batch $B = \bigcup_k B_k$ of size $p \times M$.
   
   \STATE \textbf{2. Forward Pass (Gate \& Experts):}
   \FOR{every $x \in B$}
   \STATE Compute expert log-probs: $L_k(x) = \log \h \pi_k(x)$ for all $k$.
   \STATE Compute gate logits $g_\theta(x)$ and weights $w(x) = \text{Softmax}(g_\theta(x))$.
   \STATE Compute mixture log-prob via LogSumExp:
   \STATE \quad $\log \pi_g(x) = \text{LogSumExp}_k \left( \log w_k(x) + L_k(x) \right)$.
   \STATE Compute unnormalized mass density: $m(x) = \exp(\log \pi_g(x))$.
   \ENDFOR

   \STATE \textbf{3. Constraint Estimation (Importance Sampling):}
   \STATE The proposal distribution is the uniform mixture $q(x) = \frac{1}{p}\sum \h \pi_k(x)$.
   \STATE Note: Samples $x \in B$ follow the empirical mixture $\frac{1}{p} \sum \h p_k$.    
   \STATE Assumption: $\h \pi_k \approx \h p_k$, so $B$ serves as samples from $q(x)$.    
   \STATE Estimate IS weights: $w_{IS}(x) = \pi_g(x) / q(x)$.
   \STATE Estimate Z: $\h Z = \frac{1}{|B|} \sum_{x \in B} w_{IS}(x)$.
   \STATE Update moving average: $\ov Z \leftarrow \alpha \ov Z + (1-\alpha) \h Z$.
   
   \STATE \textbf{4. $\lambda$-Player Update (Adversary):}
   \STATE Calculate loss per domain $k$: $\ell_k = \frac{1}{|B_k|} \sum_{x \in B_k} - \log \pi_g(x)$.
   \STATE Update $\lambda$ (Exponentiated Gradient):
   \STATE \quad $\lambda_k \leftarrow \lambda_k \cdot \exp(\eta_\lambda \cdot \ell_k)$.
   \STATE Renormalize: $\lambda \leftarrow \lambda / \sum_j \lambda_j$.
   
   \STATE \textbf{5. $\mu$-Player Update (Dual Ascent):}
   \STATE Goal: Maximize $\mu(\ov Z - 1)$.
   \STATE $\mu \leftarrow \mu + \eta_\mu (\ov Z - 1)$.
   
   \STATE \textbf{6. $g$-Player Update (Primal Minimization):}
   \STATE Construct Total Loss $\mathcal{J}$:
   \STATE \quad $\mathcal{J} = \underbrace{\sum_{k=1}^p \lambda_k \ell_k}_{\text{Robust NLL}}
   + \underbrace{\mu (\h Z - 1)}_{\text{Lagrangian Penalty}}$
   \STATE Compute gradients $\nabla_\theta \mathcal{J}$.
   \STATE Update $\theta$ using Optimizer (AdamW).
   \ENDFOR
\end{algorithmic}
\end{algorithm}

We solve the constrained minimax problem by relaxing the global
normalization constraint via a Lagrange multiplier $\mu \in
\Rset$. The objective function is the Lagrangian:
\[
  \min_{\theta} \max_{\lambda \in \Delta, \mu \in \Rset} \cL(\theta, \lambda, \mu)
  = \underbrace{\sum_{k=1}^p \lambda_k \E_{x \sim \h \sfp_k} \bracket*{ -\log \pi_{g_\theta}(x)
    }}_{\text{Robust NLL}}
  + \underbrace{\mu \paren*{ \sum_{x \in \sX_0} \pi_{g_\theta}(x) - 1 }
  }_{\text{Normalization Penalty}}
\]
Algorithm~\ref{alg:primal_dual_stochastic} details the stochastic
updates. We use three distinct optimizers: \textbf{Exponentiated
  Gradient} for the simplex-constrained adversary $\lambda$,
\textbf{Dual Ascent} for the constraint $\mu$, and \textbf{AdamW} for
the gate parameters $\theta$ (see
Figure~\ref{fig:primal_dual_dynamics}).

\begin{figure}[t]
\centering
\begin{tikzpicture}[
    node distance=2.5cm,
    player/.style={circle, draw, very thick, minimum size=1.8cm, align=center, font=\bfseries},
    update/.style={->, >=stealth, thick, text width=2cm, align=center, font=\footnotesize}
]

    \node[player, fill=red!10] (lambda) {$\lambda$-Player \\ (Adversary)};
    \node[player, fill=blue!10, below left=of lambda] (gate) {$g$-Player \\ (Gate)};
    \node[player, fill=yellow!10, below right=of lambda] (mu) {$\mu$-Player \\ (Constraint)};

    
    \draw[update, bend right=20] (lambda) -- node[right] {Mixture $\widehat{p}_\lambda$} (gate);
    
    \draw[update, bend left=30] (gate) to node[left] {Loss $\ell_k$ \\ (KL Div)} (lambda);
    
    \draw[update] (gate) -- node[below] {Norm $Z_g$} (mu);
    
    \draw[update, bend right=20] (mu) to node[above right] {Penalty $\mu(Z-1)$} (gate);
    
    \node[above=0.1cm of lambda, font=\small\itshape] {Exp. Gradient};
    \node[below=0.1cm of gate, font=\small\itshape] {AdamW};
    \node[below=0.1cm of mu, font=\small\itshape] {Dual Ascent};

\end{tikzpicture}
\caption{Dynamics of the Primal-Dual Game (Algorithm~\ref{alg:primal_dual_stochastic}). The
  optimization is modeled as a 3-player game. The $\lambda$-player
  maximizes the mixture difficulty using Exponentiated Gradient. The
  $g$-player minimizes the robust loss. The $\mu$-player enforces the
  global normalization constraint ($Z_g=1$) via Dual Ascent.}
\label{fig:primal_dual_dynamics}
\end{figure}

\subsection{Practical Implementation Details}

\paragraph{Log-Space Stability.}
The mixture probability $\pi_g(x) = \sum_{k} g(x, k) \h \pi_k(x)$
involves summing probabilities that may be extremely small (e.g.,
$10^{-100}$ for long sequences). Direct computation leads to
underflow. We strictly perform all operations in log-space using the
LogSumExp:
\[
  \log \pi_g(x)
  = \log \bracket*{\sum_k \exp \paren[\big]{ \log g(x, k) + \log \h \pi_k(x) }}.
\]


\paragraph{Estimating the Partition Function $Z_g$.}
Calculating the global sum $Z_g = \sum_{x \in \sX_0} \pi_g(x)$ exactly
is intractable. We rely on a Monte Carlo estimate using the current
training batch $B$. To estimate $Z$, consistent with
Algorithm~\ref{alg:primal_dual_stochastic}, we use importance sampling
as in Section~\ref{sec:SIR}:
\[
  \h Z
  = \frac{1}{|B|} \sum_{x \in B} \frac{\pi_g(x)}{\frac{1}{p} \sum_{k = 1}^p \h \pi_k(x)}.
\]
To reduce variance in the $\mu$-update, we maintain an Exponential
Moving Average (EMA) of the normalization constant $\ov Z$. A
warm-up period where $\mu$ is fixed to 0 allows the gate to learn
discriminative features before the constraint forces the probability
mass to contract.

In our implementation, we use the training batch $B$ itself to compute
this estimate.  The batch $B$ is constructed by sampling uniformly
from the source datasets $D_k$, so
$x \sim \frac{1}{p} \sum_{k=1}^p \h p_k$.  Under the assumption
that the pre-trained experts are reasonable approximations of their
training data ($\hat{\pi}_k \approx \h p_k$), the empirical mixture
closely approximates the model mixture proposal
$q(x) \approx \frac{1}{p} \sum_{k=1}^p \h p_k$.  This allows us to
reuse the forward-pass data for the constraint estimation without
generating separate synthetic samples from the experts.

\paragraph{Variance Bound for Partition Function Estimation.}
The assumption that the empirical batch approximates the proposal ($q(x) \approx \h \sfp_{\text{mix}}(x)$) is a natural consequence of the base experts being well-trained density estimators of their respective domains ($\KL(\h \sfp_k \parallel \h \pi_k) \le \e_k$). The error in estimating $Z_g$ stems from standard Importance Sampling variance. Since our normalized gate space ensures that $g(x, k) \le 1$, the robust model's likelihood is strictly bounded by the envelope $p \cdot q(x)$ (as discussed in Section~\ref{sec:rejection-sampling}). Thus, the importance weights $w_{IS}(x) = \pi_g(x) / q(x)$ are strictly bounded by $p$. This provides a rigorous formal guarantee that the variance of our Monte Carlo estimator $\h Z$ is bounded by $\mathcal{O}(p / |B|)$, where $|B|$ is the batch size. In practice, tracking $\h Z$ via an Exponential Moving Average (EMA) algorithmically smooths out any remaining batch-level variance, ensuring stable Primal-Dual dynamics.

Crucially, this estimator relies on the batch mean to approximate the
expectation over $q(x)$, rather than summing over the entire support
$\sX_0$.  This avoids the need to know the total support size
$|\sX_0|$ (which is intractable for sequence models), making the
constraint enforcement computationally feasible.

\subsection{Sampling Algorithms}

\subsubsection{The SIR algorithm}
\label{sec:SIR}

The primary goal in this section is to draw sequence samples from the
robust, gated model $\pi_{g^*}$. While our optimization successfully
finds a gate $g^*$ that ensures the distribution
$\pi_{g^*}(x) = \sum_k g^*(x, k) \hat{\pi}_k(x)$ is globally
normalized, the resulting model presents a unique challenge for
standard LLM inference. Standard LLMs generate text autoregressively,
predicting the next token based solely on past tokens. However, our
optimal gate $g^*(x, \cdot)$ is non-causal: it determines the mixture
weights based on the complete sequence $x$, meaning the probability of
the first token theoretically depends on the last. Since we cannot
generate tokens one by one if the routing logic depends on the
finished sentence, we must resort to approximation methods like
Sampling-Importance-Resampling (SIR) that generate complete candidates
first and score them later.

The SIR algorithm derives its name from its three distinct stages,
each addressing a specific part of this non-causal hurdle:

\begin{enumerate}

\item Sampling: Since we cannot sample directly from the target
  $\pi_{g^*}$, we first generate a set of $N$ candidate sequences from
  a \emph{proposal distribution} $q(x)$ that is easy to sample
  from. We define this proposal as the uniform mixture of our experts,
  $q(x) = \frac{1}{p} \sum \hat{\pi}_k(x)$. This allows us to use
  standard autoregressive generation: we simply pick an expert at
  random and have it generate a full sequence.

\item Importance: We acknowledge that these candidates were drawn from
  the \emph{wrong distribution} ($q$ instead of $\pi_{g^*}$). To
  correct for this, we assign an importance weight $w(x)$ to each
  candidate, calculated as the likelihood ratio
  $w(x) = \pi_{g^*}(x) / q(x) = \frac{\sum_{k = 1}^p g^*(x, k) \, \h
    \pi_k(x)} {\frac{1}{p} \sum_{k=1}^p \h \pi_k(x)}$. This step is
  computationally expensive but feasible because we are evaluating
  completed sequences; we can pass the full candidate $x$ to the
  non-causal gate $g^*$ to compute its true probability under the
  robust model.

\item Resampling: Finally, to obtain samples that approximate the
  robust target distribution, we resample from our pool of
  candidates. A candidate is selected with probability proportional to
  its importance weight, ensuring that sequences with high probability
  under the robust model $\pi_{g^*}$ are more likely to be chosen as
  the final output.

\end{enumerate}

Here's the pseudocode of the SIR algorithm.

\begin{algorithm}[tb]
\caption{Sampling via Sampling-Importance-Resampling (SIR)}
\label{alg:sir-sampling-kl}
\begin{algorithmic}[1]
  \STATE {\bfseries Input:} Robust gate $g^* \in \sGone$,
  expert models $\{\h \pi_k\}_{k=1}^p$, number of candidates $N$.
  \STATE {\bfseries Initialize:} Empty lists $C \leftarrow []$ (candidates)
  and $W \leftarrow []$ (weights).
  \STATE \COMMENT{Step 1: Generate N candidates from the proposal $q(x)$.}
  \FOR{$i = 1$ {\bfseries to} $N$}
    \STATE Sample expert $k \sim \text{Uniform}(\{1, \ldots, p\})$.
    \STATE Sample sequence $x^{(i)} \sim \h \pi_k(x)$ (autoregressively).
    \STATE Append $x^{(i)}$ to $C$.
  \ENDFOR
  \STATE \COMMENT{Step 2: Compute importance weights.}
  \FOR{$i = 1$ {\bfseries to} $N$}
    \STATE $x \leftarrow C[i]$.
    \STATE \COMMENT{This step requires evaluating $g^*$ and
      all $p$ experts on $x$.}
    \STATE Compute $\pi_{g^*}(x) = \sum_{k = 1}^p g^*(x, k) \, \h \pi_k(x)$.
    \STATE Compute $q(x) = \frac{1}{p} \sum_{k=1}^p \h \pi_k(x)$.
    \STATE $w_i \leftarrow \pi_{g^*}(x) / q(x)$ if $q(x) > 0$
    else ($0$ if $\pi_{g^*}(x)=0$,
    $\infty$ otherwise). \COMMENT{Handle $q(x)=0$ case.}
    \STATE Append $w_i$ to $W$.
  \ENDFOR
  \STATE \COMMENT{Step 3: Resample one candidate based on weights.}
  \STATE Filter out candidates with non-finite weights. Let indices be $I_{finite}$.
  \STATE Calculate total weight $W_{sum} = \sum_{j \in I_{finite}} W[j]$.
  \IF[If all weights are zero or infinite]{$W_{sum} = 0$ {\bfseries or not}
    is finite($W_{sum}$)}
  \STATE Sample $i^*$ uniformly from $\{1, \ldots, N\}$.
  \COMMENT{Fallback: uniform choice or error}
  \ELSE
  \STATE Define normalized probabilities $P_i = W[i] / W_{sum}$
  for $i \in I_{finite}$ (0 otherwise).
    \STATE Sample an index $i^* \sim \text{Categorical}(\{P_i\}_{i=1}^N)$.
  \ENDIF
  \STATE $x^* \leftarrow C[i^*]$.
  \STATE {\bfseries Return:} The final sample $x^*$.
\end{algorithmic}
\end{algorithm}

The inclusion of the fallback mechanism (Line 19) is a safeguard
against numerical instability (underflow), rather than a theoretical
necessity.  Structurally, the absolute continuity assumption required
for SIR consistency is strictly satisfied: since both the target
$\pi_{g^*}$ and the proposal $q$ are mixtures of the \emph{same} set
of experts $\{\h \pi_k\}$, the support of the target is contained
within the support of the proposal
($\supp(\pi_{g^*}) \subseteq \bigcup_k \supp(\h \pi_k) = \supp(q)$).
Therefore, the theoretical case where $q(x)=0$ and $\pi_{g^*}(x)>0$
(leading to infinite weights) is impossible.  Biased uniform sampling
is thus only triggered in rare cases of floating-point underflow.  The
computational cost is concentrated in the weight calculation (Line
11). Evaluating the target density $\pi_{g^*}(x)$ requires a full
forward pass of the gate and all $p$ experts for each of the $N$
candidate sequences. Thus, the inference cost scales as $O(N p)$,
motivating the need for the efficient distillation methods proposed in
Section~\ref{sec:efficient-distillation}.

\subsubsection{An exact sampling alternative: rejection sampling}
\label{sec:rejection-sampling}

While SIR provides asymptotic guarantees, \emph{Rejection Sampling}
offers a method to obtain \emph{exact} samples from the robust
distribution $\pi_{g^*}$, provided we can strictly bound the ratio
between the target and the proposal.

Recall that Rejection Sampling is a fundamental Monte-Carlo technique
used to generate observations from a complex target distribution
$\pi(x)$ using a simpler, tractable proposal distribution $q(x)$. The
main idea is: if we can find a constant $M$ such that the scaled
proposal $M q(x)$ always \emph{envelopes} the target (i.e.,
$\pi(x) \leq M  q(x)$ for all $x$), we can sample from $q(x)$ and
stochastically accept points that fall under the curve of
$\pi(x)$. Samples where $M \cdot q(x)$ is much larger than $\pi(x)$
are rejected more frequently, effectively \emph{carving out} the
correct distribution from the proposal.

In our framework, we can again use the uniform mixture of experts as
the proposal, $q(x) = \frac{1}{p} \sum_{k=1}^p \h \pi_k(x)$. A crucial
property of our normalized gate space $\sGone$ allows us to derive the
strictly required bound $M$. Since the gating weights $g^*(x, k)$ are
probabilities bounded by $1$, the robust model's likelihood is
strictly bounded by the sum of the individual experts:
\[
  \pi_{g^*}(x)
  = \sum_{k=1}^p g^*(x, k) \h \pi_k(x)
  \le \sum_{k=1}^p \h \pi_k(x) = p \, q(x).
\]
This provides the strict envelope constant $M = p$.

\begin{algorithm}[tb]
\caption{Exact Sampling via Rejection Sampling}
\label{alg:rejection-sampling}
\begin{algorithmic}[1]
  \STATE {\bfseries Input:} Robust gate $g^* \in \sGone$,
  expert models $\{\h \pi_k\}_{k=1}^p$.
  \STATE {\bfseries Output:} An exact sample $x^* \sim \pi_{g^*}$.
  \LOOP
    \STATE Sample expert $k \sim \text{Uniform}(\{1, \ldots, p\})$.
    \STATE Sample candidate $x \sim \h \pi_k(x)$. \COMMENT{Proposal $x \sim q(x)$}
    \STATE Compute acceptance probability:
    \[
    A(x) = \frac{\pi_{g^*}(x)}{M \, q(x)} = \frac{\pi_{g^*}(x)}{p \, q(x)}
    \]
    \STATE Sample $u \sim \text{Uniform}([0, 1])$.
    \IF{$u \le A(x)$}
        \STATE {\bfseries return} $x$
    \ENDIF
  \ENDLOOP
\end{algorithmic}
\end{algorithm}

The efficiency of Rejection Sampling is strictly determined by the
constant $M = p$, which represents the ratio of the area under the
enveloping proposal $M q(x)$ to the area under the target
$\pi_{g^*}(x)$. Geometrically, the algorithm samples points uniformly
under the envelope; the probability that such a point also falls under
the target curve is exactly $1/M$. Consequently, the number of trials
required to find a successful sample follows a geometric distribution
with an expected value of $M = p$.

This reveals a clear trade-off: for a moderate number of experts
(e.g., $p \leq 10$), the \emph{computational waste} of rejecting
candidates is a reasonable price for obtaining unbiased, exact
samples. However, because the bound $M = p$ grows linearly with the
ensemble size, the acceptance rate $1/p$ drops rapidly. For large $p$
(e.g., $p=100$), one would discard approximately $99\%$ of generated
candidates, making the method prohibitive. In these high-dimensional
regimes, SIR becomes the preferred alternative.

\subsection{Baseline: Efficient Sampling via Monolithic Distillation}
\label{sec:efficient-distillation}

While the Rejection Sampling and SIR algorithms provide exact or
asymptotically exact samples from the robust model $\pi_{g^*}(x)$,
their inference cost scales linearly with the number of experts
(requiring $\mathcal{O}(p)$ or $\mathcal{O}(Np)$ operations per sample), which may be prohibitive for
large-scale deployment. The bottleneck is the non-causal nature of the
optimal gate $g^*(x, k)$, which depends on the complete sequence $x$,
preventing efficient caching or standard token-by-token generation.

To achieve efficient autoregressive sampling in constant time with respect to $p$, we can distill the robust knowledge
from the non-causal target model $\pi_{g^*}$ into a new \emph{causal
  student model} $\pi_{\text{causal}}$. This student model is
parameterized as a standard causal Transformer with parameters
$\theta$, ensuring the factorization
$\pi_{\text{causal}}(x) = \prod_{t=1}^T \pi_{\text{causal}}(x_t \mid
x_{<t})$.

We train the student model by minimizing the Kullback-Leibler
divergence from the robust target $\pi_{g^*}$ to the student
$\pi_{\text{causal}}$ over the space of sequences:
\begin{align*}
& \min_{\theta} \KL \paren*{ \pi_{g^*} \parallel \pi_{\text{causal}} }\\
  &= \min_{\theta} \E_{x \sim \pi_{g^*}}
  \bracket*{ -\sum_{t=1}^T \log \pi_{\text{causal}}(x_t \mid x_{<t}) } - H(\pi_{g^*}).
\end{align*}

In practice, this is equivalent to maximizing the log-likelihood of
the student model on a dataset of synthetic sequences generated from
$\pi_{g^*}$. The training procedure is as follows:
\begin{enumerate}

\item \textbf{Generate Data:} Use the exact Rejection Sampling method
  (\cref{alg:rejection-sampling}) or SIR (\cref{alg:sir-sampling-kl})
  to generate a large dataset of robust sequences
  $\cD_{\text{robust}} = \{x^{(i)}\}_{i=1}^M$ drawn from $\pi_{g^*}$.

\item \textbf{Train Student:} Train the causal Transformer
  $\pi_{\text{causal}}$ on $\cD_{\text{robust}}$ using standard
  cross-entropy loss (next-token prediction).
\end{enumerate}
This distillation step transfers the robustness guarantees of the
non-causal gate into the weights of the causal student. At inference
time, the expensive ensemble $\pi_{g^*}$ is discarded, and samples are
drawn efficiently from $\pi_{\text{causal}}$ using standard
autoregressive decoding.

\subsection{Structural Distillation Theory}
\label{app:structural_distillation}

\subsubsection{Decomposition Proof}
\label{app:distillation-decomposition}

\DistillationDecomposition*

\begin{proof}
We expand the definition of the KL divergence:
\[
  \cJ(\phi) = \E_{x \sim \pi_{g^*}} [\log \pi_{g^*}(x)]
  - \E_{x \sim \pi_{g^*}} [\log \pi_\gamma(x)].
\]
The first term is the negative entropy of the teacher distribution
$\pi_{g^*}$ and is fixed. Thus, minimizing the KL divergence is
equivalent to maximizing the second term.  Unlike the teacher, the
student model $\pi_\gamma$ is defined to be causal and
autoregressive. Therefore, its log-probability factors into a sum of
conditional log-probabilities:
\[
  \log \pi_\gamma(x) = \sum_{t=1}^T \log \pi_\gamma(x_t \mid x_{<t}).
\]
Substituting this back into the expectation yields:
\[
  \max_{\phi} \E_{x \sim \pi_{g^*}} \bracket*{ \sum_{t=1}^T \log
    \paren*{ \sum_{k=1}^p \gamma_\phi(x_{<t}, k) \, \h \pi_k(x_t \mid x_{<t}) } }.
\]
Taking the gradient gives the result. 
\end{proof}

\subsubsection{Theoretical Analysis of Distillation}
\label{app:realizability-structured-distillation}

To establish a rigorous theoretical footing for structural
distillation (\cref{sec:distillation}), here, we analyze the
divergence between the distribution induced by the non-causal teacher,
$\pi_{g^*}$, and the causal student, $\pi_{\gamma}$.  We explicitly
decompose this error into the sum of step-wise divergences between the
student router and the optimal Bayesian posterior of the teacher,
proving that there is no irreducible structural mismatch.

\textbf{Definitions and Model Classes.} Let $\sX$ be the token vocabulary and $\sX^T$ be the space of trajectories.

The Teacher (Mixture of Products). 
The robust gate $g^* \in \sGone$ defines a mixture over expert
trajectories. The likelihood of a sequence $x$ is:
\[
  \pi_{g^*}(x) = \sum_{k=1}^p g^*(x, k)
  \underbrace{\paren*{ \prod_{t=1}^T \h \pi_k(x_t \mid x_{<t}) }}_{\text{Expert } k \text{ trajectory}}.
\]
This represents a \emph{Mixture of Products}. The latent expert choice
$k$ is sampled once per sequence, maintaining mode consistency (e.g.,
sticking to one domain for the whole sentence).

The Student (Product of Mixtures). 
The causal router $\gamma_{\phi}$ defines a distribution where mixing
happens at every step $t$:
\[
  \pi_{\gamma}(x) = \prod_{t=1}^T \underbrace{\paren*{ \sum_{k=1}^p \gamma_{\phi}(x_{<t}, k)
      \, \h \pi_k(x_t \mid x_{<t}) }}_{\text{Step-wise mixture}}.
\]
This represents a \emph{Product of Mixtures}. The effective expert
weight $\gamma$ can change at every token.

\textbf{The Bayes-Optimal Causal Router.} We do not merely assume a \emph{good} router exists. Instead, we
derive the optimal causal policy $\gamma^*$ that minimizes the
approximation error to the teacher.

\begin{proposition}[The Posterior Mean Router]
\label{prop:posterior-mean}
For any history $h = x_{<t}$, the optimal causal routing weights
$\gamma^*_k(h)$ are given by the \emph{posterior probability} of
expert $k$ given the history, under the teacher distribution
$\pi_{g^*}$:
\[
  \gamma^*_k(x_{<t})
  = P_{\pi_{g^*}}(K=k \mid x_{<t})
  = \frac{\E_{x' \sim \pi_{g^*}} \bracket*{ \mathbb{I}[x'_{<t} = x_{<t}] \cdot g^*(x', k) }}{ \pi_{g^*}(x_{<t}) }.
\]
\end{proposition}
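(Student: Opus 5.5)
The plan is to treat the student objective $\cJ(\phi)=\KL(\pi_{g^*}\parallel\pi_\gamma)$ as optimized over \emph{all} causal routers, i.e.\ over arbitrary maps $h\mapsto\gamma(h)\in\Delta([1,p])$ on histories, and to show that the minimizer decouples across histories. First I will apply the chain rule for the KL divergence over the factorization $\sX^T$:
\[
\KL(\pi_{g^*}\parallel\pi_\gamma)=\sum_{t=1}^T \E_{x_{<t}\sim\pi_{g^*}}\Bigl[\KL\bigl(\pi_{g^*}(\cdot\mid x_{<t})\,\big\|\,\textstyle\sum_k\gamma_k(x_{<t})\,\h\pi_k(\cdot\mid x_{<t})\bigr)\Bigr].
\]
Because $\gamma(x_{<t})$ enters only the $t$-th term at that history, and the outer weights $\pi_{g^*}(x_{<t})$ do not depend on $\gamma$, the global problem splits into independent problems, one per history $h=x_{<t}$. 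Each is a concave maximization of $\sum_{x_t}\pi_{g^*}(x_t\mid h)\log\sum_k\gamma_k\,\h\pi_k(x_t\mid h)$ over the simplex, exactly the type of problem solved in Lemma~\ref{lemma:optimal-fixed-form}, so its optimality conditions are the KKT conditions $\sum_{x_t}\pi_{g^*}(x_t\mid h)\,\h\pi_k(x_t\mid h)/m_\gamma(x_t\mid h)\le 1$, with equality whenever $\gamma_k>0$, where $m_\gamma$ denotes the student's step-wise mixture.

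Second, I will introduce the latent-variable reading of the teacher described above (the ``mixture of products''): a joint law on $(x,K)$ whose $x$-marginal is $\pi_{g^*}$ and in which, conditionally on $K=k$, the trajectory evolves under $\h\pi_k$. Under this reading the teacher conditional satisfies the Bayes identity $\pi_{g^*}(x_t\mid h)=\sum_k P(K=k\mid h)\,\h\pi_k(x_t\mid h)$. Then $\gamma^*_k(h)=P(K=k\mid h)$ makes the step-wise KL term exactly zero, which is the global minimum of a nonnegative objective. This also shows there is no irreducible structural mismatch. The posterior $P(K=k\mid h)$ is obtained by summing the joint over all completions $x'$ with $x'_{<t}=h$ and dividing by the prefix marginal $\pi_{g^*}(h)$; rewriting that sum as an expectation under $\pi_{g^*}$ gives the displayed formula. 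Uniqueness, where needed, follows from strict concavity of $\log$ when the vectors $(\h\pi_k(\cdot\mid h))_k$ are linearly independent.

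The main obstacle is the realizability identity in the second step. Since $g^*(x,k)$ may depend on future tokens $x_{\ge t}$, the conditional law of $x_t$ given $(h,K=k)$ under the joint $g^*(x,k)\,\h\pi_k(x)$ is in general a reweighting of $\h\pi_k(\cdot\mid h)$ by the completion mass of $g^*(\cdot,k)$, not $\h\pi_k(\cdot\mid h)$ itself. The weighting inside the expectation in the statement should also be reconciled: it must be $g^*(x',k)\,\h\pi_k(x')$ (equivalently $\pi_{g^*}(x')$ times the posterior responsibility of $k$) rather than $\pi_{g^*}(x')\,g^*(x',k)$. I would first try to prove the identity directly from the mode-consistency interpretation of the teacher. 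If it fails in general, I will fall back on a weaker statement: $\gamma^*$ is the optimal router whenever the teacher's continuation given $K$ is the expert's (for example, when the gate's dependence on the future is absorbed into $K$). Otherwise I would verify the KKT conditions at $\gamma^*$ and quantify the residual step-wise KL as the approximation error.
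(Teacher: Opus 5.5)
Your proposal does not establish the statement, and no argument can. The step you flagged, the identity $\pi_{g^*}(x_t\mid h)=\sum_k P(K=k\mid h)\,\h\pi_k(x_t\mid h)$, is false for a gate that depends on future tokens, and with it the proposition. The paper's proof consists of asserting exactly this identity, so you are not missing an idea from it. As you observed, under the joint $g^*(x,k)\,\h\pi_k(x)$ the $k$-th component conditional is the tilted law
\[
P(x_t\mid h,K=k)=\h\pi_k(x_t\mid h)\,\frac{\E_{\h\pi_k(\cdot\mid h,x_t)}[g^*(\cdot,k)]}{\E_{\h\pi_k(\cdot\mid h)}[g^*(\cdot,k)]},
\]
and not $\h\pi_k(x_t\mid h)$.

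Here is a concrete counterexample. Take $T=1$, so the history is empty and a causal router is a single vector $\gamma\in\Delta$. Use vocabulary $\{a,b,c\}$, $\h\pi_1=(0.5,0.3,0.2)$, $\h\pi_2=(0.2,0.3,0.5)$, $g^*(a,\cdot)=g^*(c,\cdot)=(1/2,1/2)$ and $g^*(b,\cdot)=(0,1)$. Then $Z_{g^*}=1$ and $\pi_{g^*}=(0.35,0.3,0.35)=\frac{1}{2}\h\pi_1+\frac{1}{2}\h\pi_2$. So the unique optimal router is $(1/2,1/2)$, with zero divergence. Yet $P(K=2)=0.1+0.3+0.25=0.65$, and the displayed formula $\E_{x\sim\pi_{g^*}}[g^*(x,2)]$ also gives $0.65$. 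The router $(0.35,0.65)$ produces $(0.305,0.3,0.395)\neq\pi_{g^*}$.

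The underlying reason is non-identifiability. $\cJ$, like the minimax objective that defines the robust gate, depends on $g^*$ only through $\pi_{g^*}$, while the posterior depends on $g^*$ itself. Here $g^*(b,\cdot)$ can be changed freely without changing $\pi_{g^*}$, because the experts agree at $b$. This also rules out your fallback of verifying the KKT conditions at the posterior: in the example the posterior violates them.

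What survives is your first step. The chain-rule decoupling and your per-history KKT conditions are correct and give the right characterization. At each $h$ the optimal router maximizes $\sum_{x_t}\pi_{g^*}(x_t\mid h)\log\sum_k\gamma_k\h\pi_k(x_t\mid h)$ over $\Delta$. Hence it is a fixed point of the EM map $\gamma_k\mapsto\sum_{x_t}\pi_{g^*}(x_t\mid h)\,\gamma_k\h\pi_k(x_t\mid h)/\sum_j\gamma_j\h\pi_j(x_t\mid h)$, whose responsibilities are computed under the student's own one-step mixture rather than the teacher's gate.

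The resulting step-wise divergence is zero if and only if $\pi_{g^*}(\cdot\mid h)$ lies in the convex hull of the $\h\pi_k(\cdot\mid h)$, and this can fail. Again with $T=1$, take four tokens, $\h\pi_1=(0.4,0.1,0.1,0.4)$, $\h\pi_2=(0.1,0.4,0.4,0.1)$ and $g^*(\cdot,2)=(1,1,0,0)$. This gives $\pi_{g^*}=(0.1,0.4,0.1,0.4)$, whereas every static mixture puts equal mass on the two middle tokens. So the paper's ``no structural mismatch'' conclusion fails too.

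Your proposed weaker statement is true but nearly vacuous. Requiring the continuation given $K=k$ to be $\h\pi_k$ at every step, including $t=1$, forces $P(x\mid K=k)=\h\pi_k(x)$. That means $g^*(\cdot,k)$ is constant on $\supp(\h\pi_k)$, so the teacher is a static mixture.

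Finally, your remark on the weighting in the displayed formula is right. The mixture-of-products posterior is built from $g^*(x',k)\h\pi_k(x')$, not from $\pi_{g^*}(x')g^*(x',k)$; the two happen to coincide in the first example only by accident.
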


\begin{proof}
  The student model is a product of mixtures:
  $\pi_\gamma(x_t|h) = \sum_k \gamma_k(h) \h \pi_k(x_t|h)$.  The
  teacher model, despite being non-causal in parameterization, implies
  a valid marginal conditional distribution:
  \[
    \pi_{g^*}(x_t \mid h) = \sum_{k=1}^p P_{\pi_{g^*}}(k \mid h) \, \h \pi_k(x_t \mid h).
  \]
  By setting $\gamma^*_k(h) = P_{\pi_{g^*}}(k \mid h)$, the student's
  conditional distribution becomes identical to the teacher's
  conditional distribution at every step.  Thus, this choice of
  $\gamma^*$ is optimal (achieving zero local divergence).
\end{proof}

\textbf{Exact Decomposition of the Distillation Error.} We now provide an exact decomposition of the total distillation error
$\KL(\pi_{g^*} \parallel \pi_{\gamma})$ using the chain rule of
relative entropy.  This replaces heuristic approximations with a
rigorous bound.

\begin{theorem}[Exact Chain Rule Decomposition]
\label{th:causal-decomposition}
Let $\pi_{\gamma}$ be the student model parameterized by $\phi$. The
total divergence decomposes exactly into a sum of step-wise
divergences:
\[
  \KL(\pi_{g^*} \parallel \pi_{\gamma})
  = \sum_{t=1}^T \E_{x_{<t} \sim \pi_{g^*}} \bracket*{
    \KL \paren*{ \pi_{g^*}(\cdot \mid x_{<t}) \parallel \pi_{\gamma}(\cdot \mid x_{<t}) }
  }.
\]
Furthermore, this error is upper-bounded by the divergence between the
routing policies:
\[
  \KL(\pi_{g^*} \parallel \pi_{\gamma})
  \leq \sum_{t=1}^T \E_{x_{<t} \sim \pi_{g^*}} \bracket*{
    \KL \paren*{ \gamma^*(\cdot \mid x_{<t}) \parallel \gamma_\phi(\cdot \mid x_{<t}) }
  }.
\]
\end{theorem}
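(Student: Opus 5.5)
The first claim, the exact decomposition, is the chain rule of relative entropy applied to the autoregressive factorizations of both models. Since $\pi_\gamma$ is causal by construction, $\pi_\gamma(x)=\prod_t \pi_\gamma(x_t\mid x_{<t})$. The teacher $\pi_{g^*}$ is a valid distribution on $\sX^T$, so it too factors into its marginal conditionals:
\[
\pi_{g^*}(x_t\mid x_{<t})=\frac{\pi_{g^*}(x_{\le t})}{\pi_{g^*}(x_{<t})},
\]
where $\pi_{g^*}(x_{\le t})$ is obtained by summing over completions. This factorization is exact even though $g^*$ is non-causal. We then write
\[
\log\frac{\pi_{g^*}(x)}{\pi_\gamma(x)}=\sum_{t=1}^T\log\frac{\pi_{g^*}(x_t\mid x_{<t})}{\pi_\gamma(x_t\mid x_{<t})},
\]
take the expectation under $\pi_{g^*}$, and use the tower property, conditioning on $x_{<t}$ and then averaging over $x_t\sim\pi_{g^*}(\cdot\mid x_{<t})$. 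Each summand becomes $\E_{x_{<t}}\bigl[\KL(\pi_{g^*}(\cdot\mid x_{<t})\parallel\pi_\gamma(\cdot\mid x_{<t}))\bigr]$, which is the stated identity. The only care needed is with histories of zero teacher mass; these contribute nothing to the expectation.

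For the upper bound, I would use Proposition~\ref{prop:posterior-mean}: the teacher's conditional is itself a step-wise mixture,
\[
\pi_{g^*}(x_t\mid h)=\sum_k\gamma^*_k(h)\,\h\pi_k(x_t\mid h),
\]
with $h=x_{<t}$. Both conditionals are therefore mixtures of the same component distributions $\h\pi_k(\cdot\mid h)$, with weights $\gamma^*(h)$ and $\gamma_\phi(h)$ respectively. Joint convexity of KL, which the paper already uses for Proposition~\ref{prop:fixed-mixture}, then gives
\[
\KL\Bigl(\sum_k\gamma^*_k\h\pi_k\parallel\sum_k\gamma_{\phi,k}\h\pi_k\Bigr)\le\sum_k\gamma^*_k\log\frac{\gamma^*_k}{\gamma_{\phi,k}}+\sum_k\gamma^*_k\KL(\h\pi_k\parallel\h\pi_k)=\KL(\gamma^*(h)\parallel\gamma_\phi(h)).
\]
An equivalent route is the data-processing inequality for the channel $k\mapsto\h\pi_k(\cdot\mid h)$: the joint distributions over $(k,x_t)$ have KL equal to $\KL(\gamma^*\parallel\gamma_\phi)$, and marginalizing out $k$ can only decrease it. Summing over $t$ with the step-wise identity yields the bound.

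I expect the main obstacle to be the justification of Proposition~\ref{prop:posterior-mean}, namely that the teacher's conditional given the history really is the posterior-weighted mixture of the experts' conditionals $\h\pi_k(x_t\mid x_{<t})$. Because $g^*(x,k)$ depends on the future tokens, the latent $k$ and the continuation are not conditionally independent in the way a clean mixture of products would require. Marginalizing gives
\[
\pi_{g^*}(x_{\le t})=\sum_k\sum_{x_{>t}}g^*(x,k)\,\h\pi_k(x),
\]
and this only factorizes as $P(k\mid x_{<t})\,\h\pi_k(x_t\mid x_{<t})$ with the posterior $\gamma^*$ defined in the proposition if $g^*$ is treated consistently. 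I would first try to verify this directly. If it fails, I would redefine $\gamma^*_t(k\mid x_{\le t})$ as the posterior weights including $x_t$ and note that the data-processing argument still goes through for the joint law of $(k,x_t)$. Failing that, I would take the proposition as given, as the statement permits, so that the bound follows from the two steps above.
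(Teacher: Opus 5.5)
Your chain-rule argument for the identity is the same as the paper's, which simply cites the chain rule. Your remarks on the teacher's marginal conditionals and on zero-mass histories are exactly the care that step needs. For the inequality, your last fallback (take Proposition~\ref{prop:posterior-mean} as given, then apply joint convexity) is precisely the paper's proof, and the convexity step itself is fine. But the step you flagged is a genuine gap in both, and it cannot be closed: the proposition is false in general, and so is the second display.

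Take $T=1$, $\sX=\{a,b,c\}$, $\h\pi_1=(0.7,0.2,0.1)$, $\h\pi_2=(0.1,0.3,0.6)$, and $g^*(a,\cdot)=(2/3,1/3)$, $g^*(b,\cdot)=(0,1)$, $g^*(c,\cdot)=(0.8,0.2)$. Then $Z_{g^*}=1$ and $\pi_{g^*}=(0.5,0.3,0.2)$; this is even the robust gate when $\h\sfp_1=\h\sfp_2=(0.5,0.3,0.2)$. With an empty history every student is a constant mixture $\sum_k w_k\h\pi_k$, whose middle coordinate is $0.3-0.1w_1$, so $\pi_{g^*}$ is not of this form. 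Hence $\KL(\pi_{g^*}\parallel\pi_\gamma)$ is bounded away from zero for every router, while choosing $\gamma_\phi=\gamma^*$ makes the right-hand side vanish.

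The proposition's explicit formula is wrong even for a constant gate. It uses $g^*(x',k)$ where the posterior is $g^*(x',k)\h\pi_k(x')/\pi_{g^*}(x')$. Take $T=2$, two experts that always emit $a$ (respectively $b$), and $g^*\equiv(1/2,1/2)$. The formula returns $\gamma^*\equiv(1/2,1/2)$, yet $\KL(\pi_{g^*}\parallel\pi_{\gamma^*})=\log 2$.

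Your first fallback does not rescue the bound either. Under the consistent latent law $P(k,x)=g^*(x,k)\h\pi_k(x)$, set $G_k(x_{\le s})=\E_{\h\pi_k}[g^*(x,k)\mid x_{\le s}]$. Then $\tilde\gamma_k(h):=P(k\mid h)\propto\h\pi_k(h)G_k(h)$. However, the teacher's channel is $P(x_t\mid k,h)=\h\pi_k(x_t\mid h)\,G_k(h,x_t)/G_k(h)$, not the student's channel $\h\pi_k(\cdot\mid h)$. The two differ as soon as $G_k(h,x_t)$ depends on $x_t$, which is what happens generically when the gate reads $x_t$ and later tokens. Data processing on the joint law of $(k,x_t)$ therefore only yields
\[
\KL\paren*{\pi_{g^*}(\cdot\mid h)\parallel\pi_\gamma(\cdot\mid h)}\le\KL\paren*{\tilde\gamma(h)\parallel\gamma_\phi(h)}+\sum_k\tilde\gamma_k(h)\,\KL\paren*{P(\cdot\mid k,h)\parallel\h\pi_k(\cdot\mid h)}.
\]
The last term is a genuine structural-mismatch term, exactly the error the paper's interpretation claims is zero. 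Reweighting with $x_{\le t}$-dependent posteriors is not something data processing delivers, and it would change the right-hand side of the statement anyway.

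So the first identity stands. The second needs either this extra channel term or an additional hypothesis, for instance a constant gate together with the correctly updated posterior $\tilde\gamma_k(h)\propto w_k\h\pi_k(h)$.
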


\begin{proof}
  The first equality is the standard Chain Rule for Kullback-Leibler
  divergence applied to autoregressive sequence models.
  
  For the inequality, recall that the conditional distributions are mixtures:
  $P(\cdot|h) = \sum_k \gamma^*_k(h) \h \pi_k(\cdot|h)$ and
  $Q(\cdot|h) = \sum_k \gamma_{\phi, k}(h) \h \pi_k(\cdot|h)$.
  By the joint convexity of the KL divergence,
  $\KL(\sum_k \lambda_k P_k \parallel \sum_k \mu_k P_k) \le \KL(\lambda \parallel \mu)$.
  Applying this to our mixtures:
  \[
    \KL \paren*{ \sum_k \gamma^*_k \h \pi_k \parallel \sum_k \gamma_{\phi, k} \h \pi_k }
    \leq \KL(\gamma^* \parallel \gamma_\phi).
  \]
  Summing this bound over all time steps $t$ completes the proof.
\end{proof}

Interpretation.
This theorem clarifies that there is no irreducible ``structural
mismatch" error ($\mathcal{E}_{struct} = 0$) because the teacher's
distribution is perfectly realizable by a causal product of mixtures
using the posterior weights $\gamma^*$.  The total error is driven
entirely by the \emph{Router Approximation Error}: the inability of
the parameterized router $\gamma_\phi$ (e.g., a small Transformer) to
perfectly match the complex posterior distribution $\gamma^*$ induced
by the non-causal gate.

\textbf{Consistency of the Algorithm.} Finally, we confirm that the standard distillation objective minimized
by Algorithm~\ref{alg:structural-distillation} is equivalent to
minimizing the router approximation error.

\begin{corollary}
  Minimizing the sequence-level objective
  $\cJ(\phi) = \KL(\pi_{g^*} \parallel \pi_{\gamma})$ is equivalent to
  minimizing the expected step-wise divergence between the true
  posterior router $\gamma^*$ and the student router $\gamma_\phi$.
\end{corollary}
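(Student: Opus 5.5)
The plan is to read the corollary as a statement about shared minimizers and a sandwich of objectives, not as a literal identity between the two functionals. The starting point is the exact chain-rule identity of Theorem~\ref{th:causal-decomposition}:
\[
\cJ(\phi)=\sum_{t=1}^T \E_{x_{<t}\sim\pi_{g^*}}\bracket*{\KL\paren*{\pi_{g^*}(\cdot\mid x_{<t})\parallel \pi_\gamma(\cdot\mid x_{<t})}}.
\]
By Proposition~\ref{prop:posterior-mean}, the teacher's conditional at every history $h=x_{<t}$ is the step-wise mixture $\sum_k \gamma^*_k(h)\,\h\pi_k(\cdot\mid h)$. The student's conditional is $\sum_k \gamma_{\phi,k}(h)\,\h\pi_k(\cdot\mid h)$. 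So each summand is a KL divergence between two mixtures of the \emph{same} frozen experts, and the only thing that differs between them is the routing vector.

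For one direction, I will invoke the second part of Theorem~\ref{th:causal-decomposition}. Joint convexity gives, term by term,
\[
\cJ(\phi)\le R(\phi):=\sum_t \E_{x_{<t}\sim\pi_{g^*}}\bracket*{\KL\paren*{\gamma^*(\cdot\mid x_{<t})\parallel\gamma_\phi(\cdot\mid x_{<t})}}.
\]
Both $\cJ$ and $R$ are nonnegative. Both vanish at the posterior router $\gamma_\phi=\gamma^*$, which is realizable by Proposition~\ref{prop:posterior-mean} whenever the router class contains it. Hence driving $R$ to zero drives $\cJ$ to zero, and the two objectives have the same minimum value.

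For the converse, I will show that any minimizer of $\cJ$ also minimizes $R$. Suppose $\cJ(\phi)=0$. Then for $\pi_{g^*}$-almost every history, $\sum_k(\gamma^*_k(h)-\gamma_{\phi,k}(h))\,\h\pi_k(\cdot\mid h)=0$. If the expert next-token distributions $\{\h\pi_k(\cdot\mid h)\}_k$ are linearly independent, this forces $\gamma_\phi(h)=\gamma^*(h)$, and therefore $R(\phi)=0$. Combined with the previous paragraph, the zero sets, and hence the argmin sets, of $\cJ$ and $R$ coincide.

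It remains to connect this to the training procedure. By Proposition~\ref{prop:distillation-decomposition}, the gradient of $\cJ$ is exactly the negative log-likelihood gradient that Algorithm~\ref{alg:structural-distillation} follows on teacher samples. So the algorithm is minimizing $\cJ$, and therefore minimizing the router approximation error in the sense above.

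The main obstacle is the converse step. When experts coincide on some history, the map from routing vectors to mixtures is not injective, and $\cJ$ can vanish while $R>0$. In that case I will state the equivalence modulo this non-identifiability. Concretely, $\gamma_\phi$ and $\gamma^*$ are equivalent on $h$ if they induce the same next-token mixture, and the corollary asserts equality of minimizers up to this equivalence. Outside the realizable case I will claim only the upper bound $\cJ\le R$, so that minimizing $R$ controls $\cJ$. If a quantitative reverse bound is wanted, I would try a Pinsker-plus-smallest-singular-value argument on the matrix of expert conditionals.
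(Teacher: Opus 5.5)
Your route differs from the paper's and is more careful. The paper's proof restates Theorem~\ref{th:causal-decomposition} as the exact identity $\cJ(\phi)=\sum_{t}\E_{x_{<t}\sim\pi_{g^*}}\bracket*{\KL\paren*{\gamma^*(\cdot\mid x_{<t})\parallel\gamma_\phi(\cdot\mid x_{<t})}}$. It then concludes that, since $\gamma^*$ is a fixed target, descending on $\cJ$ descends on the router discrepancy. The theorem does not say this. It gives equality only with the next-token divergences. With the router divergences it gives only $\cJ\le R$, and that inequality is strict in general even granting its premises: at a history where the experts coincide, the next-token term vanishes for every $\gamma_\phi(h)$. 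You avoid this misreading. The sandwich $0\le\cJ\le R$, together with realizability and linear independence of $\curl*{\h\pi_k(\cdot\mid h)}_k$, gives equal zero sets and hence equal argmin sets. Otherwise you claim only one-sided control. This is weaker than the objective-level (hence gradient-level) equivalence the paper claims. But it is what the premises support, and it isolates identifiability as exactly what the converse needs.

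The one genuine gap, which the paper shares, is what you import from Proposition~\ref{prop:posterior-mean}: that the teacher's next-token conditional equals $\sum_k\gamma^*_k(h)\,\h\pi_k(\cdot\mid h)$. You use it both for $\cJ\le R$ and for realizability, but it fails for a sequence-level gate. We have $\pi_{g^*}(x_{\le t})=\sum_k\h\pi_k(x_{\le t})\,\E_{x_{>t}\sim\h\pi_k(\cdot\mid x_{\le t})}\bracket*{g^*(x,k)}$. So the weight on expert $k$ depends on $x_t$ through the expected future gate value, and the teacher conditional need not lie in the convex hull of the expert conditionals.

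For a concrete case, take $T=2$ and tokens $\curl*{a,b}$. Let $\h\pi_1$ be uniform on $\curl*{aa,ba}$ and $\h\pi_2$ uniform on $\curl*{ab,bb}$. Let $\h\sfp_1(aa)=\h\sfp_2(ab)=1-\rho$ and $\h\sfp_1(ba)=\h\sfp_2(bb)=\rho$, with $\rho<\tfrac12$. The unique minimizer of $\max_k\KL(\h\sfp_k\parallel\pi_g)$ over $\sGone$ is $g^*(aa,1)=g^*(ab,2)=1-\rho$, $g^*(ba,1)=g^*(bb,2)=\rho$. Under it the teacher's first token is $a$ with probability $1-\rho$. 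Every causal mixture of these experts, however, has a uniform first token. Hence $\cJ(\phi)\ge\KL\paren*{(1-\rho,\rho)\parallel(\tfrac12,\tfrac12)}>0$ for every router, including $\gamma_\phi=\gamma^*$ where $R=0$. So both realizability and $\cJ\le R$ fail.

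To make your argument sound, state as an explicit hypothesis that each teacher conditional is a mixture of the expert conditionals, with history-only weights $\gamma^*(h)$ that the router class can attain. Under that hypothesis every step you wrote goes through.
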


\begin{proof}
  From Theorem~\ref{th:causal-decomposition}, the total divergence is
  exactly the sum of expected local divergences:
  \[
    \KL(\pi_{g^*} \parallel \pi_{\gamma})
    = \sum_{t=1}^T \E_{x_{<t} \sim \pi_{g^*}} \bracket*{
      \KL \paren*{ \gamma^*(\cdot \mid x_{<t}) \parallel \gamma_\phi(\cdot \mid x_{<t}) }
    }.
  \]
  The terms $\gamma^*(\cdot|x_{<t})$ are fixed targets derived from
  the teacher. Therefore, gradient descent on the global objective
  $\cJ(\phi)$ directly minimizes the discrepancy between the student's
  routing decisions and the optimal Bayesian update at every time
  step.
\end{proof}

\subsection{Efficient Structural Distillation via Cached Logits (Algorithm~\ref{alg:structural-distillation})}
\label{app:structural-transfer}

\begin{algorithm}[H]
\caption{Efficient Structural Distillation via Cached Logits}
\label{alg:structural-distillation}
\begin{algorithmic}[1]
  \STATE {\bfseries Input:} Robust gate $g^*$, frozen experts $\{\h \pi_k\}$,
  dataset size $M$, router $\gamma_\phi$.
  
  \STATE \textbf{Phase 1: Data Generation \& Caching}
  \STATE Generate $M$ sequences $\{x^{(i)}\}$ from $\pi_{g^*}$
  using Rejection Sampling (Alg.~\ref{alg:rejection-sampling}) or SIR.
  \STATE Initialize dataset $\cD \leftarrow \emptyset$.
  
  \FOR{each sequence $x^{(i)}$ and time step $t$}
    \STATE Run all $p$ experts to get next-token probabilities:
    \STATE $p_{t, k}^{(i)} = \h \pi_k(x_t^{(i)} \mid x_{<t}^{(i)})$
    for $k \in \{1, \dots, p\}$.
    \STATE Store tuple $(x_{<t}^{(i)}, x_t^{(i)}, \bp_t^{(i)})$ in $\cD$.
    \COMMENT{$\bp_t^{(i)}$ is a vector of size $p$}
  \ENDFOR

  \STATE \textbf{Phase 2: Router Training}
  \REPEAT
    \STATE Sample batch of tuples $(h, y, \bp)$ from $\cD$. \COMMENT{$h$: history, $y$: target token}
    \STATE Compute router weights: $\mathbf{w} = \gamma_\phi(h) \in \Delta([1, p])$.
    \STATE Compute mixture probability: $P_{\text{mix}} = \mathbf{w} \cdot \bp = \sum_{k=1}^p w_k p_k$.
    \STATE Compute Loss: $\cL = -\log(P_{\text{mix}})$.
    \STATE Update $\phi \leftarrow \phi - \eta \nabla_\phi \cL$.
  \UNTIL{Convergence}
  
  \STATE {\bfseries Output:} Causal Router $\gamma_\phi$.
\end{algorithmic}
\end{algorithm}

\subsection{Discussion}

We have presented a hierarchy of sampling strategies for the robust
gated model, establishing a trade-off between theoretical exactness,
inference latency, and modularity.

\textbf{Exactness vs. Efficiency.}  The sampling-based methods (SIR
and Rejection Sampling) provide the strongest theoretical
guarantees. As $N \to \infty$, SIR recovers the exact robust
distribution $\pi_{g^*}$, and Rejection Sampling provides exact
samples for any $N$. These methods ensure that the worst-case
performance bound ($\KL \le \max \e_k$) established in
\cref{th:robust-existence} holds precisely. However, the computational
cost of evaluating all $p$ experts for every candidate sample is often
prohibitive for real-time applications.

\textbf{The Role of Distillation.}  The distillation approaches
(\cref{sec:efficient-distillation,app:structural-transfer}) bridge the
gap between theory and practice. By compressing the non-causal
knowledge of $g^*$ into a causal student model, we recover standard
autoregressive inference speeds. This comes at the cost of introducing
a distillation error, $\KL(\pi_{g^*} \parallel \pi_{\text{student}})$,
which represents the loss in robustness due to approximation.

\textbf{Modularity and Structural Distillation.}  Standard causal
distillation results in a monolithic student model, discarding the
modular nature of the original experts. In contrast, the Structural
Distillation method preserves the pre-trained experts, learning only a
lightweight routing policy. This maintains the system's
adaptability (if an expert is improved, the overall system improves
without full retraining) while significantly reducing the inference
overhead compared to the raw non-causal gate. This structural approach
represents the most promising direction for deploying robust, modular
generative models at scale.

\section{Experimental Details and Additional Results}
\label{app:experiments_extended}

In this appendix, we provide the detailed experimental setups and additional results regarding domain overlap that complement the main analysis in Section~\ref{sec:experiments}.

\subsection{Additional Experimental Results: Robustness to Domain Overlap}
\label{app:extra_experiments}

We further carried out experiments where the distributions of our two experts A and B were less contradictory. We did so by mixing up Domain A to have a fraction of Domain B. We experimented with fractions of zero (the experiment described in the main text with `clean' distributions), a fraction of 0.5 and a fraction of 0.75, at which point Domain A only contains 25\% of its original data.

The two additional experiments are illustrated in Figure~\ref{fig:capacity_gap-0.5} and Figure~\ref{fig:capacity_gap-0.75}. The Oracle models maintain their advantage for very skewed test distributions, but the Robust Gate model demonstrates the best performance for most test distributions, despite having fewer total parameters than the larger-sized models.

\begin{figure}[t]
    \centering
    \includegraphics[width=0.49\linewidth]{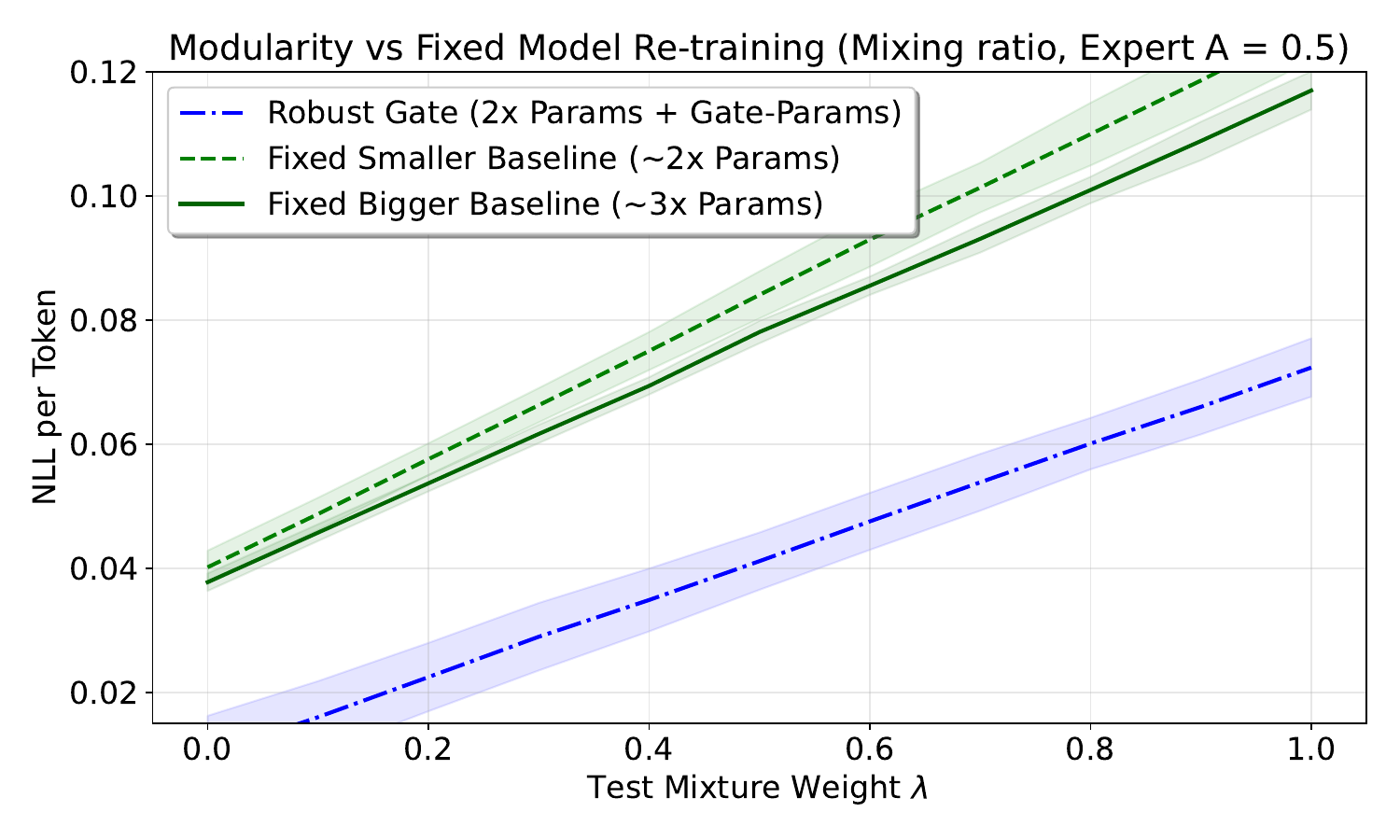}
    \hfill
    \includegraphics[width=0.49\linewidth]{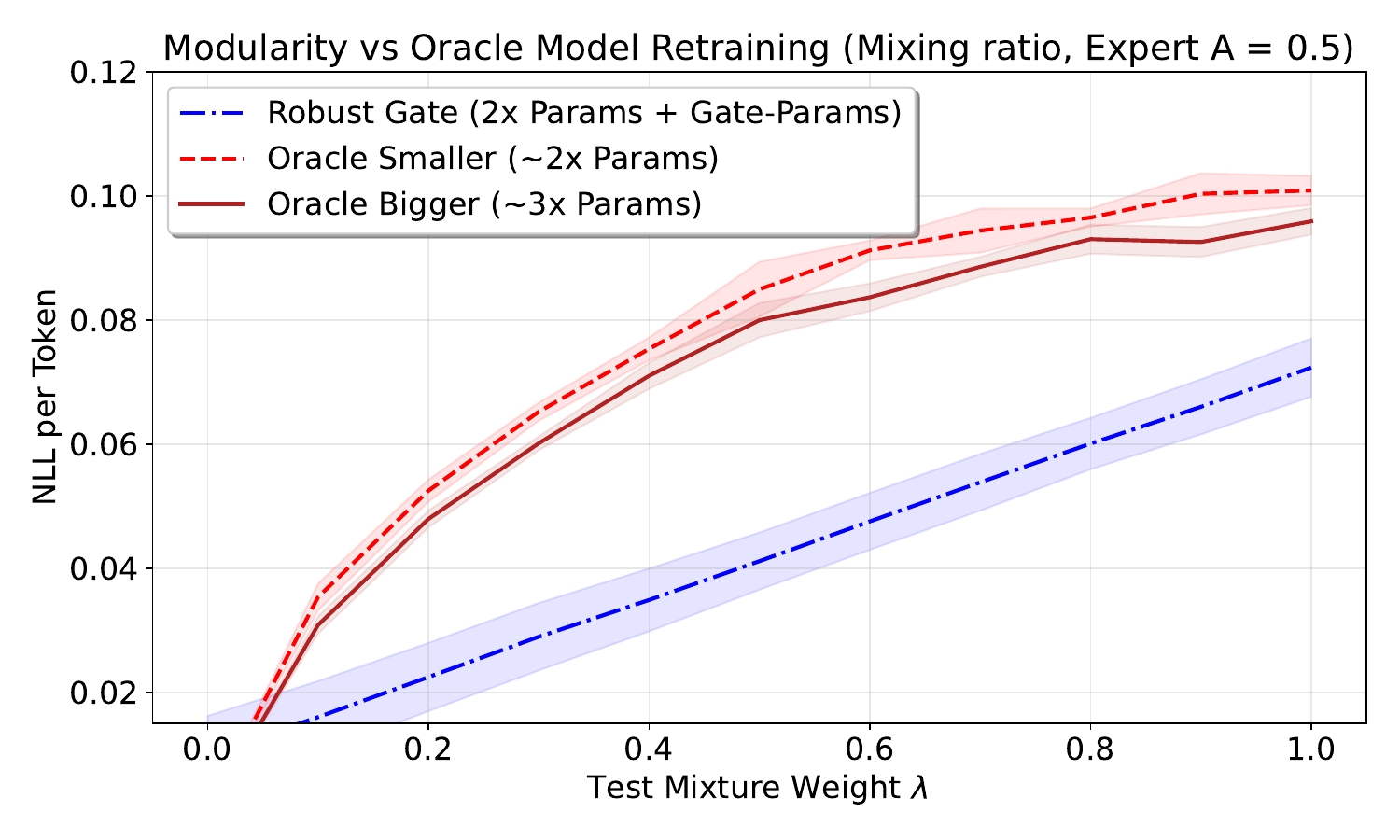}
    \caption{\textbf{Modularity overcomes gradient conflict at a 50-50\%
    mix of Domain A and a pure Domain B.} Left figure illustrates a
    comparison to the Fixed Smaller and Larger models (in Green), while
    the right figure illustrates the comparison to the Oracle models (in
    Red). The Robust Gate (blue) and the Fixed models in the left figure
    naturally obtain the best performance for small values of $\lambda$
    where the test distribution is predominantly made up by Domain B. As
    $\lambda$ increases, the test distribution contains more data from
    Domain A and gets harder for all models. Yet the Robust Gate
    maintains its clear advantage. The Oracle models in the right figure
    still has an advantage for the really skewed distribution and
    $\lambda \sim 0$, but loses to the Robust Gate for larger values of
    $\lambda$.}
    \label{fig:capacity_gap-0.5}
\end{figure}

\begin{figure}[t]
    \centering
    \includegraphics[width=0.49\linewidth]{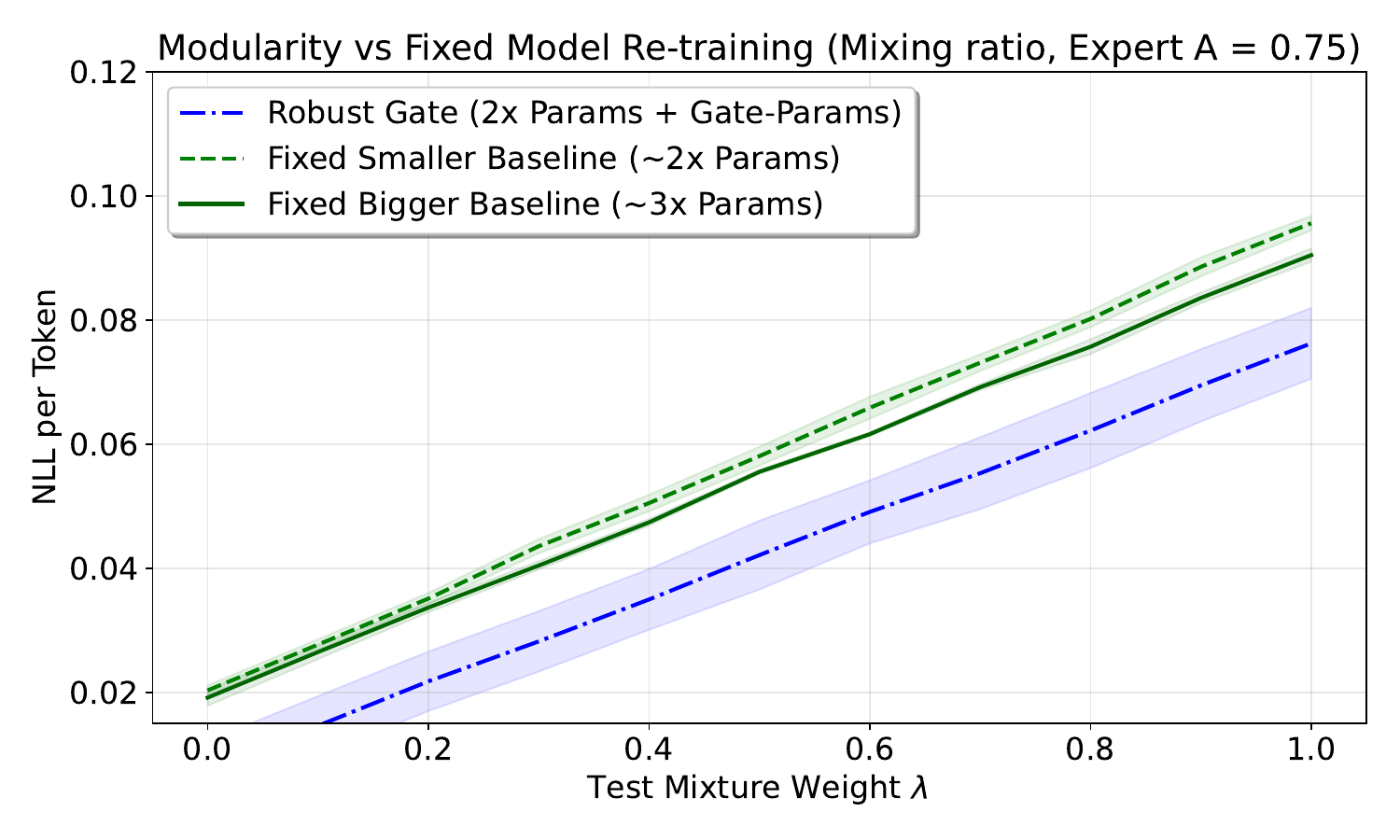}
    \hfill
    \includegraphics[width=0.49\linewidth]{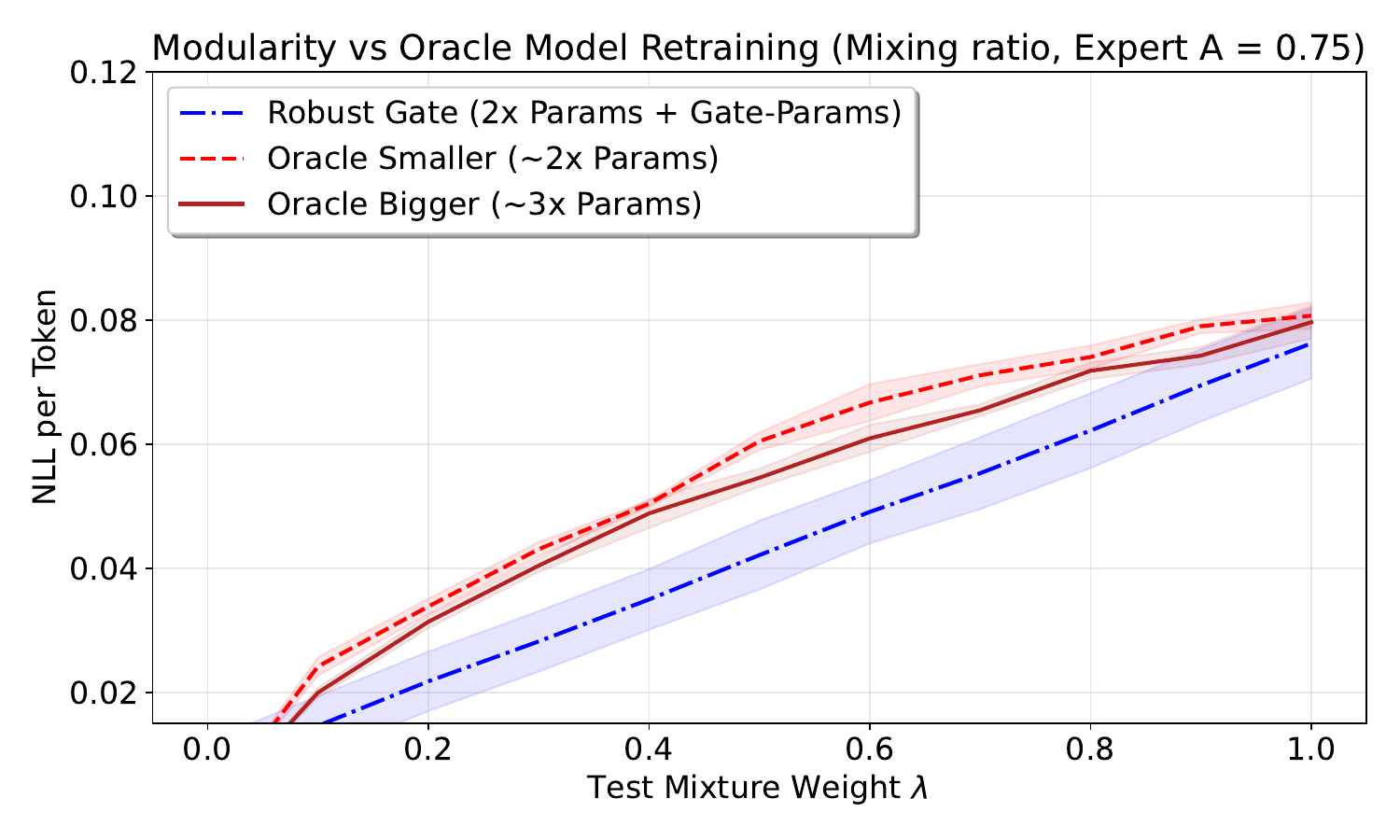}
    \caption{\textbf{Modularity overcomes gradient conflict at a
      25-75\% mix of Domain A and Domain B.} Left figure illustrates
      a comparison to the Fixed Smaller and Larger models (in Green),
      while the right figure illustrates the comparison to the Oracle
      models (in Red). The Robust Gate (blue) and the Fixed models in
      the left figure naturally obtain the best performance for small
      values of $\lambda$ where the test distribution is predominantly
      made up by Domain B. As $\lambda$ increases, the test
      distribution contains more data from Domain A and gets harder
      for all models. Yet the Robust Gate maintains its clear
      advantage. Results are illustrated with lines for the mean
      values over 5 runs and standard deviations indicated with shaded
      regions. The Oracle models in the right figure still has an
      advantage for the really skewed distribution and
      $\lambda \sim 0$, but loses to the Robust Gate for larger values
      of $\lambda$.}
    \label{fig:capacity_gap-0.75}
\end{figure}

\subsection{Detailed Experimental Setup: Synthetic Data}
\label{app:implementation_details}

All models are standard Transformer Encoders (masked for autoregression) with $L=1$ layer, $H=2$ attention heads, and feedforward dimension $d_{\text{ff}}=32$. The different-sized models are obtained by varying the embedding dimension, which is $d=6$ for the Robust Gate model, $d=8$ for the basic Expert, $d=16$ for the Smaller-Retrained model and $d=20$ for the Larger-Retrained model. Since various parts of the transformers scale either linearly or quadradically (like the feedforward network), the number of parameters do not exactly scale linearly in the embedding dimension. Two expert models plus the small gate match approximately the smaller-retrained model, while the larger retrained model has approximately $1.5 \times$ the number of parameters as the combined Robust Gate.

For these experiments, the vocabular size is 100, the sequence length is $T=10$ and batch size is $B=64$. Optimization uses AdamW with $\beta_1=0.9, \beta_2=0.999$ and zero dropout. The experts and baselines are trained for 800 steps with learning rate $\eta=10^{-2}$. The Robust Gate is trained for 800 steps with $\eta_{\text{gate}}=5 \times 10^{-3}$, and the dual variables are updated with $\eta_{\lambda}=0.2, \eta_{\mu}=0.1$. The partition function $Z$ is estimated using a running exponential moving average ($\alpha=0.9$) for variance reduction. The training set size for each expert was 800 batches of 64 examples or 51,000 examples. Both the gate and the smaller and larger models were trained with the union of 102,000 examples.

We evaluate performance across the full spectrum
of distribution shifts by varying the mixture weight
$\lambda \in [0, 1]$ in steps of $0.1$ in the test distribution
$p_\lambda(x) = \lambda p_A(x) + (1-\lambda) p_B(x)$. That is, $\lambda=0$ means all the test data comes from Domain B.
We illustrate the NLL per token across the
mixture range. For all figures, we provide mean values over 5 runs and indicate the standard deviation with shaded regions.

For structural distillation, the Causal Router was implemented with the same transformer architecture as already described. Learning rate and number of training steps are as for the baseline models. 

\subsection{Detailed Experimental Setup: Real-World Data}
\label{app:implementation_details_rw}

We trained 3 experts each on 80K sequences of length 128 on these dataset using the {\tt gpt2} tokenizer. The experts were chosen as two-layer transformers with 2 heads and an embedding dimension of 256, providing them with about 6.5M parameters. The gate model was implemented as a 2-headed 2-layered transformer with an internal dimension of 256. It has just about 290K parameters, making the combined Robust Gate of size 20M parameters. In comparison, we trained a 19.8M parameter model, a transformer with 4 heads, 3 layers and an internal dimension of 184. The Gate model and the Retrained model were trained on the union of the dataset. All models were tested on a hold out sample of 20K sequences.

The learning rate for the AdamW optimizer was set to $1e-4$ for the experts, the gate, and the retrained model. For the gate, the additional learning parameters were set to $\eta_{\lambda}$ = 0.05, $\eta_{\mu}$ = 0.02 and $\alpha = 0.9$.

\subsection{Real-World Robustness to Distribution Shift}
\label{app:robustness_details}

To further assess the stability of the method presented in Section~\ref{sec:experiments-real}, we tested the models on different compositions of the test data. In Table~\ref{tb:lambda} we provide the results from testing the Retrained and the Gate model on these distributions. The distributions are characterized by $\lambda$-test, with $(1/3, 1/3, 1/3)$ corresponding to the uniform distribution. The order of the distributions is  (1) \texttt{\small{wikimedia/wikipedia}}:
High-quality factual prose; (2) \texttt{\small{bigcode/the-stack-smol}}:
Source code across 30+ languages; (3) \texttt{\small{fineweb-edu}}: Filtered
high-quality educational web content.

The performance of the models naturally varies with the distribution, but the Gate model is more robust against these changes and systematically exhibits a lower NLL loss.

\begin{table}[ht]
\vskip 0.1in
    \centering
    \caption{Robustness test results for different test distributions. The results are mean values $\pm$ 1 standard deviation, obtained over 5 runs with different initialization of the model training.}
    \begin{tabular}{ccc}
        \toprule
       \textbf{$\lambda$-test} & \textbf{Retrained, NLL $\pm$ std.dev.} & \textbf{Gate, NLL $\pm$ std.dev.}\\
        \midrule
        1/3, 1/3, 1/3 & 5.133 $\pm$ 0.010 & 4.994 $\pm$ 0.013 \\
        1/3, 1/2, 1/6 & 5.190 $\pm$ 0.080 & 5.068 $\pm$ 0.014 \\
        1/6, 1/3, 1/2 & 5.226 $\pm$ 0.011 & 5.099 $\pm$ 0.014 \\
        1/2, 1/3, 1/6 & 5.042 $\pm$ 0.011 & 4.890 $\pm$ 0.005 \\
        1/2, 1/6, 1/3 & 5.298 $\pm$ 0.006 & 5.117 $\pm$ 0.005 \\
        1/3, 1/6, 1/2 & 5.363 $\pm$ 0.054 & 5.187 $\pm$ 0.057 \\
        1/6, 1/2, 1/3 & 5.279 $\pm$ 0.017 & 5.181 $\pm$ 0.020 \\
        \bottomrule
    \end{tabular}
    \label{tb:lambda}
\end{table}
\ignore{
The results in Figure~\ref{fig:robustness_real} illustrate the model performance relative to the Uniform distribution (set to zero at Index 1). The other points represent the deviation from this uniform test error, averaged over 5 different training initializations. As shown in the figure, the Gate Model is in general more robust to changes in the test distribution, maintaining a performance closer to its baseline compared to the monolithic Retrained Model.}
\end{document}